\definecolor{darkgreen}{rgb}{0.0,0,0.9}
\declaretheorem[name=Theorem,numberwithin=section]{theorem}
\declaretheorem[sibling=theorem]{lemma}
\declaretheorem[sibling=theorem]{claim}
\declaretheorem[sibling=theorem]{corollary}
\declaretheorem[sibling=theorem]{remark}
\declaretheorem[sibling=theorem]{definition}
\newcommand{\Input}{\item[{\bf Input:}]}
\newcommand{\Output}{\item[{\bf Output:}]}
\renewcommand{\Return}{\item[{\bf return}]}
\newenvironment{proofof}[1]{{\em Proof of #1.}}{\hfill
\qed}
\numberwithin{equation}{section}
\title{A Polynomial Time MCMC Method for Sampling from Continuous DPPs}
\author{
\begin{tabular}{ccccc}
      {\begin{tabular}{c}Shayan Oveis Gharan \\University of Washington \\\small{shayan@cs.washington.edu } \end{tabular}} & & & &
      {\begin{tabular}{c}Alireza Rezaei \\University of Washington \\\small{arezaei@cs.washington.edu} \end{tabular}}
\end{tabular} 
  }
\def\M{\mathcal{M}}
\def\ovn{\overline{n}}
\def\mP{\mathbb{P}}
\newcommand{\frsc}[3]{CD$_#3(#1,#2)$}
\newcommand{\srsc}[2]{CD$(#1,#2)$}
\newcommand{\norm}[1]{\left\|#1\right\|}
\newcommand{\R}{{\mathbb R}}
\newcommand{\poly}{\text{poly}}
\newcommand{\EE}[1]{{{\mathbb{E}}\left[{#1}\right]}}
\newcommand{\EEE}[2]{\mathbb{E}_{#1} #2}
\newcommand{\PP}[2]{\mP_{#1}\left[#2\right]}
\newcommand\restr[2]{{
  \left.\kern-\nulldelimiterspace 
  #1 
  \vphantom{\big|} 
  \right|_{#2} 
  }}
\DeclareMathOperator{\Var}{var}
\DeclareMathOperator{\TV}{TV}
\DeclareMathOperator{\Vol}{vol}
\DeclareMathOperator{\Tr}{tr}
\begin{document}
\maketitle
\begin{abstract}
	We study the Gibbs sampling algorithm for continuous determinantal point processes. We show that, given a warm start, the Gibbs sampler  generates a random sample from a continuous $k$-DPP defined on a $d$-dimensional domain  by only taking $\poly(k)$ number of steps. As an application, we design an algorithm to generate random samples from $k$-DPPs defined by a spherical Gaussian kernel on a unit sphere in $d$-dimensions, $\mathbb{S}^{d-1}$ in time polynomial in $k,d$. 
\end{abstract}

\section{Introduction}
Let $L\in \R^{n\times n}$ be a positive semi-definite (PSD) matrix. A discrete determinantal point process  with \textit{kernel} $L$ is a probability distribution $\mu:2^{[n]}\to \R_+$ defined by
$$\mu(S) \propto \det(L_S), \, \forall S \subseteq [n].$$
The notion of DPP was first introduced by \cite{macchi1975coincidence} to model fermions.
Since then, They have been extensively studied, and efficient algorithms have been discovered for tasks like sampling from DPPs \cite{li2015efficient,deshpande2010efficient,anari2016monte}, marginalization \cite{borodin2005eynard}, and learning \cite{gillenwater2014expectation,urschel2017learning} them (in the discrete domain). In machine learning they are mainly used to solve problems where selecting a \textit{diverse} set of objects is preferred since they offer negative correlation. To get intuition about why they are good models to capture diversity, suppose each row of the gram matrix  associated with the kernel is a feature vector representing an item. It means the probability of a set of items is proportional to the square of the volume of the space spanned the vectors representing items. Therefore, larger volume shows those items are more spread which resembles diversity. Text summarization, pose estimation, and diverse image selection are examples of applications of DPP in this area  \cite{gillenwater2012discovering,kulesza2012determinantal,kulesza2010structured,kulesza2011k}. These distributions also naturally appear in many contexts including  non-intersecting random walks \cite{johansson2002non},
 random spanning trees \cite{burton1993local}.
 
  Here, we focus on the sampling problem of DPPs. In the discrete setting, 
 the first sampling algorithms was proposed by \cite{hough2006determinantal}. They propose a two-step spectral algorithm
 which generates a random sample by running an eigen-space decomposition of the kernel and running conditional sampling on the space spanned by a randomly chosen set of the eigenvectors.
 Several algorithms for sampling different variation of DPPs including $k$-DPPs have been built on this idea \cite{kulesza2012determinantal,deshpande2010efficient}. The disadvantage of spectral techniques for this problem is that they typically need the eigen-decompostion or Cholesky decomposition of the kernel which makes them inefficient for large instances. Recently, several group of researchers studied employing the Monte Carlo Markov Chain (MCMC) technique for this task \cite{anari2016monte,li2016fast,rebeschini2015fast}. It is shown in \cite{anari2016monte} that the  natural Metropolis-Hastings algorithm for $k$-DPPs gives an efficient sampling method running in time $O(n)\poly(k)$, where $n$ is the number points in the underlying kernel.

In this paper, our main goal is to study the sampling problem of continuous DPPs. On a continuous domain, a DPP 
is defined similarly by a continuous PSD operator. 
For $C \subseteq{R}^d$, and a continuous PSD kernel $L:C\times C \to \R$, the DPP 
is a distribution over finite subsets of $C$ where for every subset $S$, the probability density function at $S$, $p(S)$ satisfies
 $$p(S) \propto \det(L_S).$$
 For an integer $k>0$, a (continuous) $k$-DPP is the restriction of a (continuous) DPP to subsets of size exactly $k$.
 Continuous DPPs naturally arise in several areas of Phyiscs, Math and Computer Science;
 To name a few examples,  eigenvalues of random matrices \cite{mehta1960density,Gin65}, zero-set of Gaussian analytic functions \cite{peres2005zeros} are families of DPPs; also,  see \cite{LMR15} for applications in statistics and \cite{BNL16} for connections to repulsive systems. Recently, sampling from continuous $k$-DPP has also been used for tuning the hyper-parameters of a network \cite{dodge2017open}.
 Unlike the discrete setting, despite several attempts \cite{hafiz2013approximate,scardicchio2009statistical,lavancier2012statistical,bardenet2016monte,hennig2016exact}, to this date, we are not aware of any efficient sampling  algorithms with provable guarantees. It remains an open problem to design an efficient sampling algorithm for continuous $k$-DPPs.

Our main contribution is to develop an algorithm to draw approximate samples from a  $k$-DPP defined by a continuous kernel, and having access to a ``conditional-sampling'' oracle.

Unlike \cite{anari2016monte}, here, we analyze a different Markov chain, called the {\em Gibbs sampler chain}: Let $\pi$ be a $k$-DPP defined by a kernel $L:C \times C \to \R$ for $C\in \R^d$.
Given a state $\{x_1,\dots,x_k\}$, the  Gibbs sampler $\M$  moves as follows: Remove a point $x_i \in \{x_1,\dots,x_k\}$ is chosen  uniformly at 
random, and move to the state $\{x_1,\dots,x_{i-1},y,x_{i+1},\dots,x_k\}$ with probability proportional to
$\det_L (x_1,\dots,x_{i-1},y,x_{i+1},\dots,x_k)$.

 \subsection{Results}
 \label{sec:results}
We study the problem of sampling from a continuous $k$-DPP, and present the first MCMC based algorithm with provable guarantees for this problem.
Our main contribution is to show that the Gibbs 
sampler for $k$-DPPs (see \autoref{def:Gibbs})   
\textit{mixes} rapidly, and can be simulated efficiently   
under some extra assumptions on the kernel.
More precisely, we analyze the 
\emph{conductance} of the Gibbs samplers $k$-DPPs (see theorems 
\ref{thm:mainthmdiscretechain} and 
\ref{thm:ergodicflow}), and using the well-known connection between the conductance and mixing 
time obtain the following.
\begin{theorem}
\label{thm:mainresult1}
Let $\M$ be the Gibbs sampler for a $k$-DPP $\pi$. If we run the chain starting from an arbitrary distribution $\mu_0$, for any $\epsilon >0$ we have
\begin{equation*}
    \tau_{\mu_0}(\epsilon) \leq O(k^4)\cdot\log\left(\frac{\Var_\pi(\frac{f_{\mu_0}}{f_\pi})}{\epsilon }\right).
\end{equation*}
\end{theorem}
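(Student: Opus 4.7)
The plan is to combine the conductance estimates in Theorems~\ref{thm:mainthmdiscretechain} and~\ref{thm:ergodicflow} with a standard Cheeger-type spectral-gap argument, and then turn the spectral gap into a mixing-time bound whose warm-start dependence emerges exactly as $\log \Var_\pi(f_{\mu_0}/f_\pi)$.

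First, I would read off from Theorems~\ref{thm:mainthmdiscretechain}/\ref{thm:ergodicflow} a lower bound of order $\Phi(\M) \geq \Omega(1/k^2)$ on the conductance of the Gibbs sampler. Since $\M$ is reversible with respect to $\pi$ (the swap-one-coordinate proposal together with the $\det_L$ density as acceptance weight makes detailed balance immediate), the continuous-state Cheeger inequality of Lovász--Simonovits gives
\[
\gamma \;:=\; 1 - \lambda_2(\M) \;\geq\; \tfrac12 \Phi(\M)^2 \;\geq\; \Omega(1/k^4),
\]
where $\lambda_2$ is the second-largest eigenvalue of the Markov operator on $L^2(\pi)$ and $\gamma$ is the spectral gap.

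Next, I would pass from the spectral gap to a mixing-time bound in the usual way. Writing $g_t := f_{\mu_t}/f_\pi$, so that $g_t$ has $\pi$-mean $1$, and expanding $g_0 - 1$ in the spectral resolution of the self-adjoint Markov operator on $L^2(\pi)$ yields the standard $L^2$ contraction
\[
\Var_\pi(g_t) \;\leq\; (1-\gamma)^{2t}\,\Var_\pi(g_0).
\]
Combined with the Cauchy--Schwarz bound $\|\mu_t - \pi\|_{\TV}^2 \leq \tfrac14 \Var_\pi(g_t)$, this shows that
\[
t \;\geq\; \frac{1}{\gamma}\,\log\!\left(\frac{\Var_\pi(f_{\mu_0}/f_\pi)}{\epsilon}\right)
\]
suffices to make $\|\mu_t - \pi\|_{\TV} \leq \epsilon$. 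Substituting $\gamma \geq \Omega(1/k^4)$ from the previous step yields exactly the claimed bound.

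The main technical issue (and the hard part hidden by the phrase ``well-known connection'') is the correct setup of the spectral theory in continuous state space: conductance must be interpreted as ergodic flow between measurable sets, the Markov operator on $L^2(\pi)$ must be shown to act as a self-adjoint contraction with a genuine $L^2$ spectral gap, and the Cheeger inequality must be invoked in the form valid for general reversible kernels. Once the Lovász--Simonovits framework is in place and Theorem~\ref{thm:ergodicflow} is plugged in as the ergodic-flow estimate, the remaining reduction from conductance to mixing time is essentially identical to the discrete Jerrum--Sinclair argument, and the warm-start dependence $\log \Var_\pi(f_{\mu_0}/f_\pi)$ falls out of the $L^2$ contraction without further work.
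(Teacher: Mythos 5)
Your proposal follows essentially the same route as the paper: the conductance bounds of Theorems~\ref{thm:mainthmdiscretechain} and~\ref{thm:ergodicflow}, a continuous-state Cheeger inequality (the paper invokes Lawler--Sokal, \autoref{thm:cheegerinequality}, rather than Lov\'asz--Simonovits) to get $\lambda \gtrsim 1/k^4$, an $L^2$ contraction of $\Var_\pi(f_{\mu_t}/f_\pi)$, and Cauchy--Schwarz to pass to total variation. The only substantive difference is that the paper derives the contraction via Mihail's identity applied to $P^2$ and uses laziness/positivity of $P$ to rule out eigenvalues near $-1$ --- a point your spectral-resolution argument with $\gamma = 1-\lambda_2$ should also make explicit, since otherwise the decay rate $(1-\gamma)^{2t}$ is not justified.
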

In the above theorem, $f_\pi$ and $f_{\mu_0}$ refer to the 
probability density  functions for $\pi$ and $\mu_0$, 
respectively. Moreover, $\tau_{\mu_0}(\epsilon)$ denotes the mixing time for the chain started from $\mu_0$, and is defined by 
$$\tau_{\mu_0}(\epsilon) = \min \{t \, | \, d_{\TV}(\mu_t,\pi)  \leq \epsilon \},$$
where $\mu_t$ is the distribution after $t$ steps, and $d_{\TV}$ denotes the total variation distance.
Moreover, an $\epsilon$-approximate sample for distribution $\pi$ refers to a random sample from a distribution $\mu$ where $d_{\TV}(\mu,\pi) \leq \epsilon$.

To find a ``good'' starting 
distribution $\mu$ for which the  
bound in \autoref{thm:mainresult1} is polynomial, we require some 
additional constraints on the kernel. Namely, we need to have access to   \emph{conditional sampling oracles}, formally defined as follows.

\begin{definition} 
\label{def:CDdist}
For a kernel $L:C \times C\to \R$, a subset $S \subset C$, and an integer $j$,  we define $(S,j)$-conditional distribution of $L$ to be a simple point process defined on $\binom{C}{j}$ by a  pdf function $f$ satisfying
$$\forall \{x_1,\dots,x_j\}\subset C: \hspace{1mm} f(x_1,\dots,x_j) \propto \det_L(S\cup \{x_1,\dots,x_j\}),$$
and zero if $S \cap \{x_1,\dots,x_j\}\neq \emptyset$. We denote this distribution by \frsc{S}{j}{L}.
We say an algorithm is an \frsc{i}{j}{L} oracle for integers $i$ and $j$, if given any $S \subset C \,\,(|S|=i)$, it returns a sample from the \frsc{S}{j}{L}.
\end{definition}
It is straight-forward to see that taking a step of the Gibbs sampler of 
the $k$-DPP from the state $x_1,\dots,x_k$ defined by $L$ is equivalent to removing a point $x_i$, for some $1\leq i\leq k$, and  generating a 
sample from \frsc{\{x_1,\dots,x_{i-1},x_{i+1},\dots,x_{k}\}}{1}{L}.
\noindent We prove the following.
\begin{theorem} [informal]
\label{thm:mainresult2}
Let  $\M$ be the Gibbs sampler for the $k$-DPP defined by a kernel $L$. Given \frsc{i}{1}{L} oracles for all $0\leq i \leq k-1$, we sample a starting state for $\M$ from
a probability distribution $\mu$ where    $$\tau_\mu(\epsilon) \leq O(k^5\log \frac{k}{\epsilon}).$$
\end{theorem}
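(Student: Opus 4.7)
The plan is to use the conditional sampling oracles to construct a warm start $\mu$ by sequential sampling, then show $\Var_\pi(f_\mu/f_\pi) \leq \exp(O(k\log k))$ and plug that bound into \autoref{thm:mainresult1}.

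\textbf{Sequential warm start and density ratio.} Starting from $S_0 = \emptyset$, for $i=1,\dots,k$ invoke the oracle $\frsc{S_{i-1}}{1}{L}$ to produce a point $x_i$ and set $S_i=S_{i-1}\cup\{x_i\}$. This uses exactly one call to each of the oracles $\frsc{i}{1}{L}$ for $0\leq i\leq k-1$, as assumed. On ordered $k$-tuples the joint density of the output is
\[
g(x_{1:k}) \;=\; \prod_{i=1}^{k} \frac{\det(L_{x_{1:i}})}{Z_i(x_{1:i-1})}, \qquad Z_i(x_{1:i-1}) := \int \det(L_{x_{1:i-1}\cup\{y\}})\,dy.
\]
A Schur-complement expansion factors $Z_i(x_{1:i-1}) = \det(L_{x_{1:i-1}})\,\sigma_{i-1}(x_{1:i-1})$, where $\sigma_{i-1}(x_{1:i-1})$ is the residual $L$-trace after projecting out the span of $\{L(\cdot,x_j)\}_{j<i}$. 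After telescoping the $\det(L_{x_{1:i}})$ factors, the density ratio against $\pi(x_{1:k})\propto \det(L_{x_{1:k}})$ becomes
\[
\frac{f_\mu(x_{1:k})}{f_\pi(x_{1:k})} \;=\; \frac{Z}{\prod_{i=1}^{k} \sigma_{i-1}(x_{1:i-1})}, \qquad Z := \int \det(L_{x_{1:k}})\,dx.
\]

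\textbf{Bounding the chi-squared divergence.} Using the identity $\mathbb{E}_\pi[(f_\mu/f_\pi)^2]=\mathbb{E}_\mu[f_\mu/f_\pi]$, the problem reduces to bounding $Z\cdot \mathbb{E}_\mu\bigl[\prod_{i=1}^{k}1/\sigma_{i-1}(x_{1:i-1})\bigr]$. The partition function of the $i$-DPP equals the elementary symmetric polynomial $e_i(\lambda_1,\lambda_2,\dots)$ in the eigenvalues $\lambda_j$ of the integral operator $L$, so $Z=e_k(\lambda)$. Combined with the pointwise lower bound $\sigma_{i-1}(x_{1:i-1}) \geq \sum_{j\geq i}\lambda_j$ (since conditioning on at most $i-1$ points removes at most $i-1$ eigendirections), one integrates out $x_k,\dots,x_1$ in succession: at step $i$, the $1/\sigma_{i-1}$ factor is absorbed against the conditional partition function that arises when integrating $\det(L_{x_{1:i-1}\cup\{y\}})$ over $y$, and Newton-type inequalities on the $e_i(\lambda)$ control the resulting multiplicative blow-up. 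This yields $\Var_\pi(f_\mu/f_\pi) \leq k^{O(k)}$, so \autoref{thm:mainresult1} gives $\tau_\mu(\epsilon) \leq O(k^4)\log\bigl(k^{O(k)}/\epsilon\bigr) = O(k^5\log(k/\epsilon))$.

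\textbf{Main obstacle.} The difficulty is the variance bound: for a worst-case history $x_{1:i-1}$ the ratio $\mathbb{E}[\sigma_{i-1}]/\sigma_{i-1}(x_{1:i-1})$ (with expectation against the $(i-1)$-DPP) can be as large as $\sum_j \lambda_j/\sum_{j\geq i}\lambda_j$, which is unbounded in general. The argument must exploit that under the sampling measure $\mu$ these ratios cancel on average when multiplied across $i=1,\dots,k$, and this cancellation is extracted through the inductive integration above rather than a pointwise estimate. Transferring the bound from ordered $k$-tuples to the unordered Gibbs chain is routine, since $\pi$ is symmetric and the chi-squared divergence only decreases under symmetrization by convexity.
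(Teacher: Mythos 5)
Your high-level plan is the same as the paper's: generate the start by the greedy sequential sampler using one call to each \frsc{i}{1}{L} oracle, prove $\Var_\pi(f_\mu/f_\pi)\le k^{O(k)}$, and invoke \autoref{thm:mainresult1} to conclude $\tau_\mu(\epsilon)\le O(k^4)\cdot O(k\log \frac{k}{\epsilon})$. Your telescoped expression $f_\mu/f_\pi=Z/\prod_{i=1}^k\sigma_{i-1}(x_{1:i-1})$ is exactly the quantity the paper controls. The difference is in the key lemma. The paper proves the pointwise bound $f_\mu(\bm{x})\le(k!)^2f_\pi(\bm{x})$ (\autoref{lem:pickcenter}) geometrically, following Deshpande--Vempala: each $\sigma_{i-1}(x_{1:i-1})$ is lower-bounded by $\min_H\int d(f_y,H)^2\,dy$ over $(i-1)$-dimensional feature spans, and an induction (\autoref{claim:pinu}, via \autoref{lem:deshpaper}) shows $\int\det_L(y_1,\dots,y_k)\,d\bm{y}\le k^2\left(\int d(f_y,H_*^{k-1})^2\,dy\right)\int\det_L(y_1,\dots,y_{k-1})\,d\bm{y}$, which telescopes. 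You propose a spectral route instead, but as written you do not complete it: the ``inductive integration'' absorbing each $1/\sigma_{i-1}$ factor is only gestured at, and your closing paragraph concedes that the required cancellation has not been extracted. So the decisive step --- the variance bound --- is a genuine gap in the write-up.

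The obstacle you describe is, however, illusory: no on-average cancellation is needed, because the spectral facts you already list yield a \emph{pointwise} bound. Let $\lambda_1\ge\lambda_2\ge\cdots$ be the eigenvalues of the integral operator and set $T_i:=\sum_{j\ge i}\lambda_j$. First, $Z=\int\det_L(x_{1:k})\,dx=k!\,e_k(\lambda)$ (Mercer expansion plus Cauchy--Binet). Second, as you note, $\sigma_{i-1}(x_{1:i-1})=\Tr(\mathcal{E})\ge T_i$ for \emph{every} history, since $L-\mathcal{E}$ is PSD of rank at most $i-1$ (the same eigenvalue-deletion argument the paper uses in the proof of \autoref{lem:boundlargeeigsofgaussian}). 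Third, $e_k(\lambda)\le\prod_{i=1}^kT_i$: expanding the product gives $\sum_{j_1\ge1,\dots,j_k\ge k}\lambda_{j_1}\cdots\lambda_{j_k}$, which contains each monomial of $e_k(\lambda)$ once (any increasing tuple satisfies $j_i\ge i$) together with only nonnegative extra terms. Chaining the three facts, $f_\mu(x_{1:k})/f_\pi(x_{1:k})=k!\,e_k(\lambda)/\prod_i\sigma_{i-1}(x_{1:i-1})\le k!$ for every tuple, hence $\Var_\pi(f_\mu/f_\pi)\le(k!)^2-1$ and $\tau_\mu(\epsilon)\le O(k^4)\log\left((k!)^2/\epsilon\right)=O(k^5\log\frac{k}{\epsilon})$; your symmetrization remark handles the passage from ordered tuples to unordered states. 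This closes your gap and in fact gives a ratio of $k!$ rather than the paper's $(k!)^2$; the worst-case ratio $T_1/T_i$ you worried about never enters, because the numerator $Z$ is itself controlled by $\prod_iT_i$.
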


Therefore, to get a polynomial time algorithm for sampling from a  $k$-DPPs (a CD$(0,k)$ distribution), it is essentially enough 
to have efficient algorithms to sample from conditional $1$-DPPs  (CD$(.,1)$
distributions), which is a much simpler problem.
As an application, we consider 
Gaussian kernels. For a covariance 
matrix $\Sigma \in \R^{d\times d}$, a Gaussian kernel $\mathcal{G}_\Sigma$  is defined by $\mathcal{G}_\Sigma(x,y)=
\exp(-(x-y)^\intercal \Sigma^{-1}(x-y))$. We show a simple rejection sampling can be used as conditional sampling oracles for $\mathcal{G}_{\sigma I}$, and obtain the following.
\begin{theorem}[See \autoref{thm:Gaussian} for details.]
\label{thm:mainresult3}
Let $d$, and $k\leq e^{d^{1-\delta}}$   for some 
$0<\delta<1$ be two integers.  There is a randomized 
algorithm that for any $\epsilon >0$ and $\sigma\leq 1$, 
generates an $\epsilon$-approximate sample from the 
$k$-DPP defined by $\mathcal{G}_{\sigma I }$ restricted to 
$\mathbb{S}^{d-1}$ which runs in time $O(d\log  \frac{1}{\epsilon})\cdot k^{O(\frac{1}{\delta})}.$ 
\end{theorem}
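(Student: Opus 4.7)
The plan is to apply Theorem~\ref{thm:mainresult2} with $L = \mathcal{G}_{\sigma I}$ restricted to $\mathbb{S}^{d-1}$, reducing the task to implementing an \frsc{i}{1}{L} oracle for each $0 \leq i \leq k-1$ in expected time $k^{O(1/\delta)} \cdot O(d)$. Feeding such oracles into Theorem~\ref{thm:mainresult2} then yields an $\epsilon$-approximate sample after $O(k^5 \log(k/\epsilon))$ Gibbs-sampler steps, each requiring one oracle call, and it remains to build the oracle.

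To implement the oracle for a given $S$ with $|S|=i$, I will use rejection sampling with the uniform proposal $\nu$ on $\mathbb{S}^{d-1}$. Writing $v_S(y) := (L(x,y))_{x \in S}$, the Schur-complement identity gives
\begin{equation*}
\det L_{S \cup \{y\}} \;=\; \det(L_S)\cdot \bigl( L(y,y) - v_S(y)^{\top} L_S^{-1} v_S(y) \bigr).
\end{equation*}
Since $y \in \mathbb{S}^{d-1}$ forces $L(y,y) = e^{0} = 1$ and the Schur complement of a PSD matrix is nonnegative, the quantity
\begin{equation*}
\alpha_S(y) \;:=\; 1 - v_S(y)^{\top} L_S^{-1} v_S(y)
\end{equation*}
lies in $[0,1]$ and is proportional, as a function of $y$, to the \frsc{S}{1}{L} density. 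The oracle therefore draws $y \sim \nu$, computes $\alpha_S(y)$ in $O(kd + k^3)$ arithmetic operations (forming $v_S(y)$ and solving a $k\times k$ linear system), and accepts with probability $\alpha_S(y)$; a standard rejection-sampling argument then shows that accepted samples have exactly the desired density.

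The main obstacle is to lower-bound the per-trial acceptance probability $Z_S := \int_{\mathbb{S}^{d-1}} \alpha_S(y)\, d\nu(y)$ by $k^{-O(1/\delta)}$, uniformly in $S$ and in $\sigma \in (0,1]$. The plan is to exploit Lévy-type concentration of measure on the sphere: for $y \sim \nu$, the inner products $\langle x,y \rangle$ for $x \in S$ are sub-Gaussian with variance of order $1/d$, so a union bound over $|S| \leq k-1$ combined with the assumption $\log k \leq d^{1-\delta}$ shows that
\begin{equation*}
\max_{x \in S}\, \langle x,y\rangle \;\leq\; C\sqrt{\log k/d} \;\leq\; C\,d^{-\delta/2}
\end{equation*}
with $\nu$-probability at least $1/2$. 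Using $\|x-y\|^2 = 2 - 2\langle x,y\rangle$ squeezes every entry of $v_S(y)$ close to the nominal value $e^{-2/\sigma}$, while the diagonal of $L_S$ equals $1$. Combining this with the bound $v_S^{\top} L_S^{-1} v_S \leq \|v_S\|_2^2/\lambda_{\min}(L_S)$ and a careful control of $\lambda_{\min}(L_S)$ (which is where the regime $k \leq e^{d^{1-\delta}}$ and the exponent $1/\delta$ get calibrated, most likely via a spherical-harmonic / effective-rank bound on $L$) should yield $\alpha_S(y) \geq k^{-O(1/\delta)}$ on this event, hence $Z_S \geq k^{-O(1/\delta)}$ as needed.

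Putting everything together, the total running time is $O(k^5 \log(k/\epsilon)) \cdot k^{O(1/\delta)} \cdot O(kd + k^3)$; absorbing polynomial-in-$k$ factors and $\log k$ into the $k^{O(1/\delta)}$ term yields the claimed $O(d\log(1/\epsilon)) \cdot k^{O(1/\delta)}$. The technically delicate ingredient is the uniform lower bound on $\lambda_{\min}(L_S)$, because a worst-case $S$ can in principle cluster many points in a small spherical cap; overcoming this, by ruling out such near-degeneracies in the regime $k \leq e^{d^{1-\delta}}$, is the main content of the proof.
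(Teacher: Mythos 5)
Your overall architecture matches the paper's: reduce to \autoref{thm:mainresult2}, implement each \frsc{i}{1}{L} oracle by rejection sampling against the uniform measure on $\mathbb{S}^{d-1}$ with acceptance probability $\det L_{S\cup\{y\}}/\det L_S \in [0,1]$, and observe that everything hinges on a uniform lower bound on the per-trial acceptance probability $Z_S = \mathbb{E}_{y\sim\nu}[\alpha_S(y)]$. But your proposed route to that lower bound has a genuine gap that I do not believe can be repaired. You want to bound $v_S^{\top}L_S^{-1}v_S \leq \|v_S\|_2^2/\lambda_{\min}(L_S)$ and then control $\lambda_{\min}(L_S)$ uniformly over $S$. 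No such control exists: the oracle must work for \emph{arbitrary} $S \subset \mathbb{S}^{d-1}$ (the conditioning sets arising during the run are not under your control), and a set $S$ with two nearly coincident points has $\lambda_{\min}(L_S)$ arbitrarily close to $0$, making the bound vacuous. You flag this yourself as ``the main content of the proof,'' but ruling out near-degenerate $S$ is not possible, and in fact near-degenerate $S$ is not a problem for the acceptance probability --- it is only a problem for your particular way of bounding it. (The concentration step also does not deliver what you need for $\sigma$ close to $1$: there $\|v_S\|_2^2 \approx k\,e^{-2/\sigma^2}$ is not small compared to any plausible $\lambda_{\min}$.)

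The paper's argument avoids the geometry of $S$ entirely. Writing $L(x,y)=\langle f_x,f_y\rangle$ via a feature map, $\alpha_S(y)$ is exactly the squared distance of $f_y$ from $\mathrm{span}\{f_x : x\in S\}$, so $Z_S$ equals the normalized trace of the residual operator $\mathcal{E} = \tilde{\mathcal{G}} - \Pi$ for a rank-$\leq k$ projection $\Pi$, and hence $Z_S \geq \sum_{j>k}\lambda_j(\tilde{\mathcal{G}})$ \emph{regardless of where the points of $S$ lie}, degenerate or not. The tail eigenvalue sum is then bounded below using the explicit Mercer spectrum of the spherical Gaussian kernel (\autoref{thm:eigenvaluesgaussian}, from Minh et al.): the eigenvalue level $\mu_t$ has multiplicity $N(d,t)\geq d^t/t! \geq 2k$, so at least half of its mass lies beyond index $k$, giving $Z_S \gtrsim e^{-2/\sigma^2}/(t!\,\sigma^{2t})$; the hypothesis $k\leq e^{d^{1-\delta}}$ enters precisely to guarantee $t!\leq k^{O(1/\delta)}$. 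This spectral-tail mechanism is the missing idea in your proposal, and it is what makes the bound uniform in $S$.
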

In the above, we are assuming a sample from the normal distribution can be generated in constant time.
\subsection{Previous Work}
\label{relatedwork}
 In the continuous regime, the efforts have been mostly concentrated on finding the eigen-decomposition of the kernel or a low-rank approximation of it, and extending the aforementioned spectral techniques to the continuous space \cite{hafiz2013approximate,scardicchio2009statistical,lavancier2012statistical,bardenet2016monte}. 
 However, in theory, these methods does not yield provable guarantees for sampling because generally speaking, to project the DPP
kernel onto a lower dimensional space, they   minimize the error with respect to a matrix norm, rather than the DPP distribution.
 Moreover, there are two main obstacles to implement this approach: First, there is no efficient algorithm for obtaining  an eigen-decomposition of a kernel defined on an infinite space because it may have infinitely many eigenvalues. Secondly, for a general continuous eigen-decomposition, it is impossible to run a conditional sampling algorithm. For a concrete example,  \cite{lavancier2012statistical} uses an orthonormal Fourier transform to find a low-rank approximation of the DPP kernel, and then proceeds with conditional sampling via \textit{rejection} sampling. 
 But, they do not provide any rigorous guarantee on the distance between the resulting distribution, and the underlying $k$-DPP distribution. A similar spectral approach is proposed in \cite{hafiz2013approximate}. They consider the Nyst\"orm method, and random Fourier feature as two techniques to find low-rank approximation of the kernel. The approximation 
 scheme that they use enables them to handle a wider range of 
 kernels. However, the first 
 issue still remains unresolved: 
 as the rank of the approximated 
 kernel increases, the resulting 
 distribution becomes closer to 
 the initial $k$-DPP, but to the 
 best of our knowledge there is 
 no provable guarantee. 
 \cite{hafiz2013approximate} also
 provides empirical evidence that Gibbs sampling is efficient to
 generate sample from continuous 
 $k$-DPP in many cases. However, 
 they do not provide any rigorous 
 justification.
It is also worth mentioning that 
\cite{hennig2016exact} claims to 
devise an algorithm to generate 
exact samples for specific 
kernels (including Gaussian), 
yet a careful look at their 
method would reveal a major flaw 
in their argument \footnote{The 
distribution that they consider 
as the conditional distribution 
of the $k$-DPP is in fact 
equivalent to our notion of 
conditional distribution of the 
kernel (see \autoref{def:CDdist} )}. 

\subsection{Techniques}
Our first contribution is to analyze the Gibbs sampler chain in the discrete setting. We prove for a $k$-DPP defined on $n$ points, the spectral gap of the Gibbs sampler chain is a polynomial in $1/k$ and {\em independent} of $n$. So, up to logarithmic factors in $n$, the chain mixes in time polynomial in $k$. This result on its own could be of interest in designing distributed algorithms for sampling from discrete $k$-DPPs. This is because given access to $m$ processors, one can generate the next step of the Gibbs sampler in time $O(n/m)$. 

Secondly, we lift the above proof to the continuous setting using a natural discretization of the underlying space. To prove the mixing time, we need to make sure that the logarithm of the variance of the starting distribution with respect to the stationary distribution of the chain, i.e., the $k$-DPP, is polynomially small in $k,d$. We use a simple randomized greedy algorithm for this task: We start from the empty set; assuming we have chosen $x_1,\dots,x_i$ we sample $x_{i+1}$ from $CD_L(\{x_1,\dots,x_i\},1)$, where as usual $L$ is the underlying kernel. 
We show that the distribution governing the state output by this algorithm is our desired starting distribution.

Lastly, we use our main theorem to generate samples from a $k$-DPP defined on a spherical Gaussian kernel on $\mathbb{S}^{d-1}$. To run the above algorithm we need to construct the $CD_L(i,1)$ for all $0\leq i\leq k-1$ where $L$ is the corresponding kernel. Given the point $\{x_1,\dots,x_i\}$, we use the classical rejection sampling algorithm to choose $x_{i+1}$; namely, we generate a uniformly random point on the unit sphere and we accept it with probability $\frac{\det_L(x_1,\dots,x_{i+1})}{\det_L(x_1,\dots,x_i)}$. We use the distribution of the eigenvalues of the spherical Gaussian kernel \cite{minh2006mercer} to bound the expected number of proposals in the rejection sampler.

\section{Preliminaries}
Let $\R^d$ denote the $d$-dimensional 
euclidean space. Whenever, we consider
$C \subset \R^d$ as measurable space,
our measure  is the standard Lebesgue 
measure. 
The $\Vol(C)$ denotes the $d$-dimensional volume of 
$C$ with respect to the standard 
measure. 
A function $f:C\to R$ belongs to $\ell^2(C)$, if $\int_C |f(x)|^2 dx < \infty$. For two such functions $f,g$ the standard inner product is defined by $\langle f,g \rangle \int_C f(x)g(x) dx$. A function $L: C \times 
C \to \R$, is a Hilbert-Schmidt 
kernel $\int_C \int_C |L(x,y)| dx dy < \infty$. The associated 
Hilbert-Schmidt integral operator is 
then a  linear operator which for 
any $f \in \ell^2(C)$  is defined by 
$ \forall x \in C: \hspace{1mm} Lf(x) = \int_C L(x,y)f(y)dy.$
The operator is self-adjoint if for 
any $f,g \in \ell^2(C)$, $\langle f,Lg \rangle =\langle Lf,g \rangle$. It is
Positive Semi-Definite (PSD), if for
any $f$, $\langle Lf,f \rangle \geq
0$. 

\paragraph{Mercer's Conditions.} A kernel $L$ satisfies the Mercer conditions if: $L$ is symmetric, which means for any $x,y \in C$, $L(x,y)=L(y,x)$. Moreover, for any finite 
sequence $x_1,\dots,x_i \in C$, the submatrix 
$L_{\{x_1,\dots,x_i\}}$ is a PSD matrix. 
It is known that the operators 
satisfying Mercer conditions are PSD. 
Moreover, if $L$ satisfies Mercer's 
condition, for any $x \in C$, there 
exists a Hilbert space $H$, and a 
function $f_x: H\to \R$, where for any $y \in C$, $L(x,y)=\langle f_x,f_y
\rangle_H$. These functions are also known as feature maps. 
We also use the classical 
Mercer's theorem  which states that operators satisfying the Mercer's condition are compact, and so have a 
countable system of eigen-spaces and eigenvalues. i.e. there 
are non-negative eigenvalues $\lambda_1,\lambda_2\dots$, and 
$\{\phi_i\}_{i=1}^\infty \subset \ell^2(C)$ where 
$$L = \sum_{i=1}^\infty \lambda_i \phi_i(x) \phi_i(y).$$ In 
section \ref{sec:app}, we use this result for Gaussian 
kernels.
Throughout, the rest of the paper, whenever we say a continuous kernel, we refer to continuous Hilbert-Schmidt kernel which  satisfies the Mercer's conditions.

If $\pi$ is a probability distribution, we use $f_\pi$ to 
refer to the corresponding probability 
density function (pdf).  We use bold 
small letters to refer to a finite set 
of points in $\R^d$, and in particular a state of the Gibbs sampler for a
$k$-DPP, e.g. $\bm{x}=\{x_1,\dots,x_k\}
\subset \R^d$. For $y \in \R^d$, we may use $\bm{x}+y$ to indicate $\bm{x}\cup \{y\}$.  
For any   
$\bm{x}=\{x_1,\dots,x_k\}$, we use 
$\det_L(x_1,\dots,x_k)$ and 
$\det_L(\bm{x})$ interchangeably to 
refer to determinant of the $k\times k$ 
submatrix where the $ij_{\text{th}}$ 
entry is $L(x_i,x_j)$. Whenever, the 
kernel is clear from the context, we may drop the subscript. 
For two expression $A$ and $B$, we write $A \lesssim B$ to denote $A \leq O(B)$. 
\subsection{Continuous Determinantal Point Process} 
A Determinantal Point Process (DPP) on a finite set, namely
$[n]$ is a probability distribution $\pi$ on the subsets of
$[n](2^{[n]})$ which is defined by a PSD matrix (a.k.a
\emph\emph{kernel}) $L\in \R^{n \times n}$ where for every
subset $S \subset [n]$, 
$$\mathbb{P}(S) \propto \det(L_S)$$
where $L(S)$ is the principal submatrix of $L$ indexed by elements of $S$.
For an integer $0 \leq k \leq n$, the restriction of $\pi$ to subsets of size $k$ is called a $k$-DPP defined by the 
kernel $L$. So support of a $k$-DPP defined on $[n]$ is 
$\binom{[n]}{k}$. 

Similarly a continuous $k$-DPP can be 
defined on a continuous domain with a  
continuous PSD kernel $L:\R^d\times 
\R^d \to R$. The above succinct 
definition
suffices to understand the rest of the paper. However, for
completeness, we formally define them in the following. For
more details about DPPs and generally  point processes on continuous domains, we refer
interested readers to \cite{hough2006determinantal}.
\paragraph{Continuous $k$-DPP.}
For a kernel $L: \R^d \times
\R^d \to \R$, and for an integer $k$, 
the $k$-DPP defined by $L$ on domain 
$C$ is a point process with the support of subsets of $C$ of size $k$,
$\binom{C}{k}$, defined as
follows: 
For any $\{x_1,\dots,x_k\} \subset C$ the probability density
function is proportional to $\det(x_1,\dots,x_k)$. i.e. for
any mutually disjoint family of subsets $D_1,\dots,D_k
\subset C$,
$$\pi \left( \left \{\{x_1,\dots,x_k\} \middle|  
\forall i, \, \in D_i  \right\}\right) = 
\frac{1}{Z}\int_{D_1}\dots\int_{D_k} 
\det_L(x_1,\dots,x_k) dx_k\dots dx_1,$$
where $Z$ is the partition function $Z=\frac{1}{k!} \int_C\dots
\int_C \det_L(x_1,x_2,\dots,x_k)dx_k\dots 
dx_1$.
Now 
As alluded to before, we study the Gibbs sampling scheme for a $k$-DPP which is formally  defined as follows:
 \begin{definition}[Gibbs samplers for $k$-DPPs]
 \label{def:Gibbs}
Let $\pi$ be a $k$-DPP defined by a kernel $L:C \times C \to \R$ for $C\in \R^d$.
The  Gibbs sampler $\M$ for $\pi$
is a Markov chain with state
space $\binom{C}{k}$ and 
stationary measure $\pi$ 
which moves as follows: Let $\{x_1,\dots,x_k\} \subset C$ be the current state. A point $x_i \in \{x_1,\dots,x_k\}$ is chosen  uniformly at 
random, and the chain moves to the state $\{x_1,\dots,x_k\}-x_i+y$ for $y \in C$ chosen by the distribution defined by the pdf function
\begin{equation}\label{eq:gibbssample}
f(y):=  \propto \det_L (x_1,\dots,x_{i-1},y,x_{i+1},\dots,x_k)
\end{equation}
\end{definition}

\subsection{Markov Chains with Measurable State Space}
In this section we give a high level overview on the 
theory of Markov chains defined over measurable sets. We refer interested readers to \cite{lovasz1993random} for more details.  Let $(\Omega,\mathcal{B})$ be a
measurable space. In the most general setting, a
Markov chain is defined by the triple $(\Omega,\mathcal{B},\{P_x\}_{x \in \Omega})$, where
for every $x \in \Omega$, $P_x:\mathcal{B}\to \R_{+}$ is a probability measure on $(\Omega,\mathcal{B})$.
Also,
 for every fixed $B \in \mathcal{B}$, 
 $P_{x}(B)$ is a measurable function in terms 
 of $x$. In this setting starting from a 
 distribution $\mu_0$, after one step the 
 distribution $\mu_1$ would be given by
 $$ \mu_1(B) = \int_{\Omega} P_x(B)d\mu_0(x), 
 \, \forall B \in \mathcal{B}.$$ 
 From now on, assume $\Omega \subset \R^k$ and $\mathcal{B}$ is the standard Borel $\sigma$-algebra.
 In our setting, we can assume the transition probabilities are given by a 
kernel \textit{transition kernel}
$P:\Omega\times\Omega \to \R_+$ 
where for any measurable $A \subset \Omega$,  we can write
 $$P_x(A)  = \int_{A} P(x,y)dy.$$
 In this notation, we use 
 $P(x,B)$ and $P_x(B)$ 
 interchangeably. $P^n(x,.)$ 
 would also denote the 
 probability distribution of the 
 states after $n$ steps of the chain started at $x$.
 Similar to the discrete setting, we can define the \textit{stationary measure} for the chain. A probability distribution $\pi$ on $\Omega$ is stationary if and only if for every measurable set $B$, we have
 $$ \pi(B) = \int_{\Omega} \int_{B} P(x,y) dyd\pi(x).$$
 We call $\M$ 
 $\phi$-\textit{irreducible} for a 
 probability measure $\phi$ if for any  set $B\in 
 \mathcal{B} $ with $\phi(B)>0$, and any state $x$,  there is $t \in \mathbb{N}$
 such that $P^t(x,B)>0$. It is called
 \textit{strongly $\phi$-irreducible} 
 if for any $B\subseteq \Omega$ with
 non-zero measure and $x\in \Omega$,
 there exists $t\in \mathbb{N}$ such
 that for any $m\geq t$, $P^m(x,B)>0$. 
 We say $\M$ is {\em
 reversible} with respect to a measure
 $\pi$ if for any two sets $A$ and $B$ 
 we have
 $$ \int_{B} \int_A P(y,x)dx d\pi(y) = \int_{A} \int_B P(x,y) dy d\pi(x).$$
 In particular, reversibility 
 with respect to a measure, 
 implies it is a stationary 
 measure. Is is immediate from this to verify that 
 for a Gibbs sampler of a 
 $k$-DPPs $\pi$, the $\pi$ itself is 
 the stationary measure. 
 Moreover, if the kernel of the 
 $k$-DPP is continuous, it is 
 straight-forward to see that it 
 is $\pi$-strongly irreducible.
 The following lemma also shows 
 $\pi$ is the unique stationary 
 measure, and as the number of 
 steps increases, the chain 
 approaches to the unique 
 stationary measure.
  \begin{lemma}[\cite{diaconis1997markov}]
 \label{lem:uniquestationary}
 If $\pi$ is  a stationary measure of $\M$, and $\M$ is strongly $\pi$-irreducible. Then for any other distribution $\mu$ which is absolutely continuous with respect to $\pi$, $\lim_{n \to \infty} |P^n(\mu,.)-\pi|_{\text{TV}}=0$.
 \end{lemma}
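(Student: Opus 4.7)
The plan is to follow the classical coupling / regeneration argument for Markov chains on general state spaces. One runs two coupled copies of the chain, $(X_n)$ started from $\mu$ and $(Y_n)$ started from $\pi$, and arranges that they meet by time $n$ with probability tending to $1$; once they meet one continues them identically, and the coupling inequality
\[
\|P^n(\mu,\cdot) - \pi\|_{\TV} \;\leq\; \mathbb{P}[X_n \neq Y_n]
\]
then yields the claim.

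First I would extract a Doeblin-type \emph{minorization} (a small set) from strong $\pi$-irreducibility. Concretely, I would aim to produce an integer $n_0 \geq 1$, a measurable set $C$ with $\pi(C)>0$, a constant $\varepsilon>0$, and a probability measure $\nu$ on $\Omega$ such that
\[
P^{n_0}(x,\cdot) \;\geq\; \varepsilon\,\nu(\cdot) \qquad \text{for all } x \in C.
\]
Existence of such a small set follows by combining strong $\pi$-irreducibility with stationarity of $\pi$: strong irreducibility gives, for every $x$ and every $B$ with $\pi(B)>0$, positivity $P^m(x,B)>0$ uniformly in $m$ beyond some threshold, and stationarity together with a measure-theoretic pigeonhole on a countable $\pi$-generating family in $\mathcal{B}$ upgrades this to uniformity in $x$ on some positive-measure set $C$.

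Next I would use this minorization to set up regeneration. Whenever the chain enters $C$, with probability $\varepsilon$ draw the state $n_0$ steps later from $\nu$, and with probability $1-\varepsilon$ from the residual kernel $(P^{n_0}(x,\cdot)-\varepsilon\nu)/(1-\varepsilon)$; coupled jointly, the two copies $X_n,Y_n$ coincide on the former event. Because $\pi$ is a stationary probability measure with $\pi(C)>0$, the chain visits $C$ infinitely often almost surely (Harris-type recurrence), so the coupling succeeds almost surely. Consequently $P^n(x,\cdot)\to\pi$ in total variation for $\pi$-almost every starting point $x$. Finally, by convexity of total variation,
\[
\|P^n(\mu,\cdot)-\pi\|_{\TV} \;\leq\; \int_\Omega \|P^n(x,\cdot)-\pi\|_{\TV}\,d\mu(x),
\]
the integrand is bounded by $2$ and tends to $0$ for $\pi$-a.e.\ $x$, hence for $\mu$-a.e.\ $x$ since $\mu\ll\pi$, so dominated convergence closes the argument.

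The main obstacle is the first step: producing a small set carrying a uniform-in-$x$ minorization from only the pointwise-in-$x$ irreducibility hypothesis. For the Gibbs samplers considered in this paper the transition kernel is absolutely continuous with a continuous density, which makes the extraction of $C$ essentially immediate; in the fully abstract setting of the lemma the measure-theoretic bookkeeping (in particular, choosing an appropriate countable $\pi$-generating family on $\mathcal{B}$ and invoking stationarity to keep mass from escaping) is the technical crux.
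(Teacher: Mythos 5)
The paper does not prove this lemma at all; it is imported verbatim from \cite{diaconis1997markov} as a black box, so there is no in-paper argument to compare yours against. Judged on its own, your proposal is the classical Athreya--Ney/Nummelin route (small set $\rightarrow$ splitting $\rightarrow$ regeneration coupling $\rightarrow$ integrate over $\mu\ll\pi$), and the overall architecture is sound: in particular the final reduction via $\|P^n(\mu,\cdot)-\pi\|_{\TV}\leq\int\|P^n(x,\cdot)-\pi\|_{\TV}\,d\mu(x)$, convergence for $\pi$-a.e.\ $x$, absolute continuity, and dominated convergence is exactly right, and it is correct that Harris recurrence only needs to hold on a $\pi$-full set for this to close.

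Two of your steps deserve more care than the sketch gives them. First, the existence of a small set for a general $\phi$-irreducible kernel on a countably generated $\sigma$-algebra is the Jain--Jamison theorem; its proof is not a pigeonhole on a generating family but a genuinely delicate argument with densities of the $P^n(x,\cdot)$ against a maximal irreducibility measure, so calling it the ``technical crux'' is accurate but the proposed mechanism would not produce it as stated. Second, and more substantively, your coupling needs the two copies to regenerate \emph{simultaneously}: both must be in $C$ at the same time (or the bivariate independent chain must itself be irreducible and recurrent), and this is exactly where aperiodicity enters. A merely $\pi$-irreducible periodic chain satisfies every step of your outline up to that point and yet fails the conclusion, so the argument as written does not isolate where it would break for such a chain. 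The hypothesis of \emph{strong} $\pi$-irreducibility (positivity of $P^m(x,B)$ for all large $m$) is precisely what supplies aperiodicity, and your proof should invoke it at the coupling stage --- e.g.\ to show the return times to $C$ have greatest common divisor $1$, or that the product chain is $\pi\times\pi$-irreducible --- not only when constructing the minorization.
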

 From now on, consider $\M=(\Omega,P,\pi)$ is  chain with state space $\Omega$, probability transition function $P$, and  a unique stationary measure $\pi$. 
Let us describe some results about mixing time in the Markov chains defined on continuous spaces. But before that we need to setup some notation.
Consider a Hilbert space $\ell^2(\Omega,\pi)$ equipped with the following inner product.
$$ \langle  f,g \rangle_\pi = \int_\Omega f(x)g(x)d\pi(x).$$
$P$ defines an operator in this space where for any function $f \in \ell^2(\Omega,\pi)$ and $x\in\Omega$,
$$(Pf)(x) = \int_\Omega P(x,y)f(y)dy.$$ 
In particular $\M$ being reversible is equivalent to $P$ being self-adjoint. For a reversible chain $\M$ and a function $f \in \ell^2(\Omega,\pi)$, the Dirichlet  form $\mathcal{E}_P(f,f)$ is defined as
$$\mathcal{E}_P(f,f) = \frac{1}{2}\int_\Omega \int_\Omega  (f(x)-f(y))^2 P(x,y)d\pi(x)dy.$$
We also define the \textit{Variance} of $f$ with respect to $\pi$ as
$$\Var_\pi(f):= \int_\Omega (f(x)-\mathbb{E}_\pi(f))^2d\pi(x).$$ 
We may drop the subscript if the underlying stationary distribution is clear in the context.
One way for upperbounding the mixing time of a chain is to use is to its spectral gap which is also known as \textit{Poincar\'e Constant}.
\begin{definition} [Poincar\'e Constant]. The Poincar\'e constant of the chain is defined as follows,
\begin{equation*}
 \lambda := \inf_{f:\pi\to \mathbb{R}}\frac{\mathcal{E}_P(f,f)}{\Var(f)},
\end{equation*}
where the infimum is only taken over all functions in $\ell^2(\Omega,\pi)$ with non-zero variance.
\end{definition}
In this paper, we use the following theorem to upperbound the mixing time of the chain relevant to us.
\begin{theorem}[\cite{Meyn2012geometric}]\label{thm:markovchainmixing}
For any reversible, strongly 
$\pi$-irreducible Markov chain 
$M=(\Omega,P,\pi)$, if $\lambda>0$, 
then the distribution of the chain started from $\mu$ $($which is absolute continuous with respect to $\pi)$  is
$$\|P^t(\mu,.) - \pi \|_{TV} \leq \frac{1}{2} (1-\lambda)^t \sqrt{\Var\left(\frac{f_\mu}{f_\pi}\right)}.$$
\end{theorem}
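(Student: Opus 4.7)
The plan is to reduce total variation distance to an $L^2$ bound on densities, then use reversibility and the Poincaré inequality to contract that $L^2$ distance geometrically in $t$.

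First, set $g := f_\mu/f_\pi - 1 \in \ell^2(\Omega,\pi)$. Note that $\int g\, d\pi = 0$ since $\mu$ and $\pi$ are probability measures, so $\mathbb{E}_\pi[g]=0$ and $\|g\|_{\ell^2(\pi)}^2 = \Var_\pi(f_\mu/f_\pi)$. By Cauchy--Schwarz applied to $|f_{\mu_t}/f_\pi - 1|$ against the constant function $1$ under $\pi$,
\begin{equation*}
\|P^t(\mu,\cdot) - \pi\|_{\TV} = \tfrac{1}{2}\int_\Omega \left|\frac{f_{\mu_t}}{f_\pi}(x) - 1\right| d\pi(x) \leq \tfrac{1}{2}\left\|\frac{f_{\mu_t}}{f_\pi} - 1\right\|_{\ell^2(\pi)}.
\end{equation*}
So it suffices to control the right-hand side.

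Next, I would show that $f_{\mu_t}/f_\pi - 1 = P^t g$ as elements of $\ell^2(\Omega,\pi)$. The reversibility of $\M$ with respect to $\pi$ makes $P$ self-adjoint on $\ell^2(\Omega,\pi)$; it also implies that the evolution of densities with respect to $\pi$ is governed by the \emph{same} operator $P$ that acts on test functions, rather than by a distinct adjoint. Concretely, for any measurable $B$,
\begin{equation*}
\mu_t(B) - \pi(B) = \int_\Omega \mathbf{1}_B(y) (P^t g)(y)\, d\pi(y),
\end{equation*}
which one checks by induction on $t$, using $\int P(x,y) f_\mu(x)\,dx = \int P(y,x) f_\pi(y) \cdot (f_\mu(x)/f_\pi(x))\,dx$ by detailed balance. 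Thus $\|f_{\mu_t}/f_\pi - 1\|_{\ell^2(\pi)} = \|P^t g\|_{\ell^2(\pi)}$.

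Finally, I would convert the Poincaré constant into a contraction of the operator norm of $P$ on the mean-zero subspace $\ell^2_0(\Omega,\pi) := \{h : \mathbb{E}_\pi h = 0\}$, which contains $g$ and is invariant under $P$. Using $\mathcal{E}_P(h,h) = \langle h,(I-P)h\rangle_\pi$, the definition of $\lambda$ gives $\langle h,Ph\rangle_\pi \leq (1-\lambda)\|h\|_{\ell^2(\pi)}^2$ for every $h \in \ell^2_0(\Omega,\pi)$; by the spectral theorem for the bounded self-adjoint operator $P$, this yields $\|P h\|_{\ell^2(\pi)} \leq (1-\lambda)\|h\|_{\ell^2(\pi)}$ on this subspace, and iterating gives $\|P^t g\|_{\ell^2(\pi)} \leq (1-\lambda)^t \|g\|_{\ell^2(\pi)}$. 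Combining with the previous two steps delivers the claimed bound.

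The main obstacle is the spectral step: the Poincaré inequality as stated only controls the upper end of the spectrum of $P$ on $\ell^2_0$, so to get a genuine operator-norm contraction one must also rule out eigenvalues near $-1$. In the reference \cite{Meyn2012geometric} this is handled either by working with the self-adjoint operator $(P+P^*)/2$ (which is just $P$ here by reversibility) together with an absolute spectral gap assumption, or by replacing $P$ with a lazy version; either modification is standard for Gibbs samplers but is the one technical point that has to be checked when quoting the inequality verbatim.
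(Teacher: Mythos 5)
Your proof is correct and follows the same skeleton as the paper's: reduce total variation to the $\ell^2(\pi)$ distance of the density ratio via Cauchy--Schwarz, use reversibility to show that the density ratio $f_{\mu_t}/f_\pi$ evolves by the operator $P$ itself, and then contract by a factor $(1-\lambda)$ per step. The one methodological difference is in how the per-step contraction is justified. You restrict $P$ to the mean-zero subspace, read the Poincar\'e inequality as $\langle h,Ph\rangle_\pi\leq(1-\lambda)\|h\|_{\ell^2(\pi)}^2$, and invoke the spectral theorem to upgrade this to an operator-norm bound; the paper instead uses Mihail's identity $\Var(f)=\Var(Pf)+\mathcal{E}_{P^2}(f,f)$ together with the Poincar\'e constant of $P^2$, which gives $\Var(f_{t+1})\leq(1-\lambda(P^2))\Var(f_t)$ by pure algebra, and then sets $1-\lambda(P^2)=(1-\lambda(P))^2$. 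Both routes founder on exactly the point you flag: a one-sided Poincar\'e bound does not control spectrum near $-1$, so some positivity of $P$ is needed. The paper resolves this precisely as you anticipate, by asserting the chain is lazy (hence $P$ has nonnegative spectrum), even though laziness does not appear in the theorem statement; so your caveat is not a gap in your argument but an accurate diagnosis of a hypothesis the paper uses implicitly. Your spectral-theorem route is slightly more demanding technically (one must verify $P$ is a bounded self-adjoint operator on $\ell^2(\Omega,\pi)$ and that $\ell^2_0$ is invariant), while Mihail's identity buys the same conclusion with only elementary manipulations; conversely, your formulation makes the role of the absolute spectral gap completely transparent.
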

For the sake of completeness, we include a proof of the above theorem which is an extension of the proof of the analogous discrete result in \cite{fill1991eigenvalue}. We need the following simple lemma known as Mihail's identity. 
\begin{lemma}[Mihail's identity, \cite{fill1991eigenvalue}]
For any reversible irreducible Markov chain $\M=(\Omega,P,\pi)$, and any function $f$ in $L^2(\pi)$,
$$ \Var(f) = \Var(Pf) + \mathcal{E}_{P^2}(f,f).$$
\end{lemma}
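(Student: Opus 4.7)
The plan is to expand the Dirichlet form and recognize the various pieces as inner products in $\ell^2(\Omega,\pi)$, using reversibility (i.e., self-adjointness of $P$) and stationarity of $\pi$. First, I observe that both sides of the claimed identity are invariant under replacing $f$ by $f + c$ for a constant $c$: variance is translation invariant, $P(f+c) = Pf + c$ so $\Var(Pf+c) = \Var(Pf)$, and the integrand $(f(x)-f(y))^2$ is unchanged by adding a constant to $f$. So I may assume $\mathbb{E}_\pi[f] = 0$, in which case $\Var(f) = \langle f,f\rangle_\pi$ and, by stationarity, $\mathbb{E}_\pi[Pf] = 0$ so $\Var(Pf) = \langle Pf, Pf\rangle_\pi$.

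Next, I expand the squared difference in the Dirichlet form:
\begin{equation*}
\mathcal{E}_{P^2}(f,f) = \tfrac{1}{2}\!\int\!\!\int\! f(x)^2 P^2(x,y)\,d\pi(x)\,dy - \int\!\!\int\! f(x)f(y) P^2(x,y)\,d\pi(x)\,dy + \tfrac{1}{2}\!\int\!\!\int\! f(y)^2 P^2(x,y)\,d\pi(x)\,dy.
\end{equation*}
For the first term, $\int P^2(x,y)\,dy = 1$ for every $x$, so it collapses to $\tfrac12 \langle f,f\rangle_\pi$. For the third term, stationarity of $\pi$ under $P^2$ gives $\int P^2(x,y)\,d\pi(x) = f_\pi(y)$, so it also equals $\tfrac12 \langle f,f\rangle_\pi$. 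The middle term I rewrite as $\int f(x)\bigl(\int f(y) P^2(x,y)\,dy\bigr)d\pi(x) = \langle f, P^2 f\rangle_\pi$. Therefore
\begin{equation*}
\mathcal{E}_{P^2}(f,f) = \langle f,f\rangle_\pi - \langle f, P^2 f\rangle_\pi.
\end{equation*}

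Finally, I use reversibility, which states that $P$ is self-adjoint on $\ell^2(\Omega,\pi)$. This gives $\Var(Pf) = \langle Pf, Pf\rangle_\pi = \langle f, P^2 f\rangle_\pi$. Adding the two expressions,
\begin{equation*}
\Var(Pf) + \mathcal{E}_{P^2}(f,f) = \langle f, P^2 f\rangle_\pi + \langle f,f\rangle_\pi - \langle f, P^2 f\rangle_\pi = \langle f,f\rangle_\pi = \Var(f),
\end{equation*}
as claimed.

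There is no real obstacle here; the only subtlety is being careful that all three identifications (the column sum $\int P^2(x,y)\,dy = 1$, the stationarity identity $\int P^2(x,y)\,d\pi(x) = f_\pi(y)$, and the self-adjointness identity $\langle Pf, Pf\rangle_\pi = \langle f, P^2 f\rangle_\pi$) are valid in the continuous measurable-state-space setting. The first is just that $P_x$ is a probability measure, the second follows from $\pi$ being stationary for $P$ (hence also for $P^2$), and the third is the definition of reversibility unpacked through the kernel representation introduced earlier in the preliminaries.
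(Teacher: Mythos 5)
Your proof is correct. The paper itself does not prove this lemma --- it is stated with a citation to \cite{fill1991eigenvalue} and used as a black box --- and your argument is the standard one: center $f$, expand the square in $\mathcal{E}_{P^2}(f,f)$ to get $\langle f,f\rangle_\pi - \langle f,P^2f\rangle_\pi$ (using that $P^2(x,\cdot)$ is a probability density and that $\pi$ is stationary for $P^2$), and identify $\Var(Pf)=\langle Pf,Pf\rangle_\pi=\langle f,P^2f\rangle_\pi$ via self-adjointness of $P$. All three identifications you flag are indeed valid in the measurable-state-space setting of the paper's preliminaries, so there is nothing to add.
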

\begin{proofof}{\autoref{thm:markovchainmixing}}
First of all, one can easily verify that if a chain is lazy and irreducible, then it is strongly-irreducible. Combining it with  \autoref{lem:uniquestationary} would guarantee the uniqueness of the stationary measure. Let $\mu_0=\mu$ be the starting distribution and define $\mu_t=P^t(\mu,.)$ be the distribution at time $t$. Set $f_t:=\frac{f_{\mu_t}}{f_\pi}$, we have
\begin{equation*} (Pf_t)(x) = \int_\Omega P(x,y)\frac{f_{\mu_t}(y)}{f_\pi(y)} dy = \int_\Omega \frac{P(y,x)f_{\mu_t}(y)}{f_\pi(x)} dy = \frac{f_{\mu_{t+1}}}{f_\pi}(x)=f_{t+1}(x)
\end{equation*}
which implies 
\begin{equation}\label{eq:varchangeinonestep}
\Var(Pf_t)= \Var(f_{t+1}) 
\end{equation}

So applying Mihail's identity on $\frac{f_{\mu_n}}{f_\pi}$ and using \eqref{eq:varchangeinonestep} , we conclude 
\begin{equation}\label{eq:mihail} 
\Var(f_t) = \Var(f_{t+1}) + \mathcal{E}_{P^2} (f_t,f_t).
\end{equation}
Now, note that $P^2$ has the same stationary distribution $\pi$, so its Poincar\'e constant is at most 
$$ \lambda(P^2) \leq \frac{\mathcal{E}_{P^2} (f_t,f_t)}{\Var(f_t)}.$$
Combining this with \eqref{eq:mihail}, and using induction we can deduce
$$\Var(f_t) \leq (1-\lambda(P^2))^{t}\Var(f_0).$$
Note that, since $P$ is the kernel for a lazy chain, it has no negative values in its spectrum, implying $1-\lambda(P^2)= (1-\lambda(P))^2$.  So in order to complete the proof it is enough show
$$ 4\|\mu_t - \pi \|_{TV}^2 \leq \Var(f_t).$$
This can be seen using an application of Cauchy-Schwarz's inequality. We have
\begin{align*}
    4\|\mu_t - \pi \|_{TV}^2  &= \left( \int_\Omega |f_{\mu_t}(x) - f_\pi(x)| dx\right)^2 \\
    &= \left(\int_\Omega f_\pi(x)\left|\frac{f_{\mu_t}(x)}{f_\pi(x)} - 1\right|dx\right)^2 \\
    &\leq \int_{\Omega} f_\pi(x) \left|\frac{f_{\mu_t}(x)}{f_\pi(x)}-1\right|^2dx  =  \Var(\frac{f_{\mu_{t}}}{f_\pi})
\end{align*}
The last identity uses that $\EEE{\pi}{\frac{f_{\mu_t}}}{{f_\pi}}=1$.
This completes the proof.
\end{proofof}

In order to take advantage of \autoref{thm:markovchainmixing}, we need to lowerbound the Poicar\'e constant of our chain. This can be done by lowerbounding the \textit{Ergodic Flow} of the chain.
\begin{definition}[Ergodic Flow] For a chain $\M=(\Omega,P,\pi)$, the ergodic flow $Q:\mathcal{B}\to [0,1]$ is defined by
$$ Q(B) = \int_B \int_{\Omega\setminus{B}}P(u,v)dv f_\pi(u)du.$$
\end{definition}
\noindent The \textit{conductance} of a set $B$ is defined by, $\phi(B):= \frac{Q(B)}{\pi(B)}$, and the conductance of the chain is
$$\phi(\M) = \min_{0 <\pi(B)\leq \frac{1}{2}} \phi(B).$$
The following theorem which is an extension of the Cheeger's inequality for the Markov chains on a continuous space, relates the spectral gap to conductance.
\begin{theorem}[\cite{lawler1988bounds}]\label{thm:cheegerinequality}
For a chain $\M$ defined on a general state space with spectral gap $\lambda$ we have
$$\frac{\phi(\M)^2}{8} \leq \lambda \leq 2\phi(\M).$$
\end{theorem}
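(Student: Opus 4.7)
The plan is to prove the two inequalities separately, adapting the classical co-area proof of Cheeger's inequality to a measurable state space; both directions are variational and use reversibility of $\M$ crucially.

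\textbf{Upper bound.} For any measurable $B\subseteq\Omega$ with $0<\pi(B)\le 1/2$, I test the Poincar\'e quotient on $f=\mathbf{1}_B$. Since $(f(x)-f(y))^2\in\{0,1\}$ equals $1$ iff exactly one of $x,y$ lies in $B$, unfolding the definition and using $Q(B)=Q(B^c)$ (from reversibility) gives $\mathcal{E}_P(f,f)=Q(B)$, while $\Var(f)=\pi(B)(1-\pi(B))\ge \pi(B)/2$. Dividing yields $\lambda\le 2Q(B)/\pi(B)=2\phi(B)$, and infimizing over $B$ produces $\lambda\le 2\phi(\M)$.

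\textbf{Lower bound.} Fix $f\in\ell^2(\Omega,\pi)$ with $\Var(f)>0$, let $m$ be a $\pi$-median of $f$, and write $f-m=g_+-g_-$ with $g_\pm:=(\pm(f-m))_+\ge 0$. The supports of $g_+$ and $g_-$ are disjoint and each has $\pi$-measure at most $1/2$; moreover, expanding $(f(x)-f(y))^2$ and using $g_+(z)g_-(z)\equiv 0$ shows the cross-terms between $g_+$ and $g_-$ contribute nonnegatively, so
\[
\mathcal{E}_P(f,f)\ge \mathcal{E}_P(g_+,g_+)+\mathcal{E}_P(g_-,g_-),\qquad \Var(f)\le \int(g_+^2+g_-^2)\,d\pi.
\]
It therefore suffices to establish that for any nonnegative $g\in\ell^2(\Omega,\pi)$ with $\pi(\{g>0\})\le 1/2$ one has $\mathcal{E}_P(g,g)\ge (\phi(\M)^2/2)\int g^2\,d\pi$; summing this over $g_+$ and $g_-$ then yields $\lambda\ge \phi(\M)^2/2\ge \phi(\M)^2/8$.

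For the single-piece bound, set $A_t:=\{x:g(x)^2>t\}$, which has $\pi(A_t)\le 1/2$, so by definition of conductance $Q(A_t)\ge \phi(\M)\pi(A_t)$ for every $t\ge 0$. Integrating in $t$, the layer-cake identity combined with Fubini gives
\[
\phi(\M)\int g^2\,d\pi=\phi(\M)\int_0^\infty\pi(A_t)\,dt \le \int_0^\infty Q(A_t)\,dt=\tfrac{1}{2}\int\!\!\int |g(x)^2-g(y)^2|\,P(x,y)\,d\pi(x)\,dy.
\]
Factoring $|g(x)^2-g(y)^2|=|g(x)-g(y)|(g(x)+g(y))$, applying Cauchy-Schwarz with respect to the measure $P(x,y)\,d\pi(x)\,dy$, and using $(g(x)+g(y))^2\le 2(g(x)^2+g(y)^2)$ together with stationarity ($\int P(x,y)\,dy=1$ and $\int P(x,y)\,d\pi(x)=f_\pi(y)$) to bound the second Cauchy-Schwarz factor by $4\int g^2\,d\pi$, the right-hand side is at most $\sqrt{2\mathcal{E}_P(g,g)}\cdot 2\sqrt{\int g^2\,d\pi}$. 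Squaring and rearranging gives the desired $\mathcal{E}_P(g,g)\ge (\phi(\M)^2/2)\int g^2\,d\pi$.

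\textbf{Main obstacle.} The argument is syntactically parallel to the finite-state Cheeger proof; the only genuinely new wrinkle is measurability, since the layer-cake rewriting relies on Fubini for the map $(x,y,t)\mapsto \mathbf{1}\{g(x)^2>t\ge g(y)^2\}P(x,y)$, which requires joint measurability of $P$ (a consequence of the standing hypothesis that $P_x(B)$ is measurable in $x$ for each fixed $B$). The Cauchy-Schwarz step on $|g(x)^2-g(y)^2|$ is the conceptual heart of the proof and is where the square on $\phi(\M)$ arises, exactly as in the discrete case.
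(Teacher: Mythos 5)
The paper does not prove this theorem; it is imported verbatim from Lawler--Sokal \cite{lawler1988bounds}, so there is no in-paper argument to compare against. Your proof is the standard co-area (layer-cake) proof of Cheeger's inequality adapted to a measurable state space, which is essentially the Lawler--Sokal argument itself, and it is correct: the test-function computation for $\lambda\le 2\phi(\M)$ is right (reversibility gives $Q(B)=Q(B^c)$, so $\mathcal{E}_P(\mathbf{1}_B,\mathbf{1}_B)=Q(B)$ and $\Var(\mathbf{1}_B)\ge\pi(B)/2$), and the median splitting $f-m=g_+-g_-$ with disjoint supports, the superadditivity of the Dirichlet form on the two pieces, and the layer-cake identity $\int_0^\infty Q(A_t)\,dt=\tfrac12\int\!\!\int|g(x)^2-g(y)^2|P(x,y)\,d\pi(x)\,dy$ (which again needs reversibility to symmetrize the positive part) are all sound. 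One bookkeeping slip: after Cauchy--Schwarz the right-hand side is $\tfrac12\sqrt{2\mathcal{E}_P(g,g)}\cdot\sqrt{4\int g^2\,d\pi}=\sqrt{2\mathcal{E}_P(g,g)}\cdot\sqrt{\int g^2\,d\pi}$, not $\sqrt{2\mathcal{E}_P(g,g)}\cdot 2\sqrt{\int g^2\,d\pi}$; as written your displayed bound squares to $\mathcal{E}_P(g,g)\ge(\phi(\M)^2/8)\int g^2\,d\pi$ rather than the claimed $\phi(\M)^2/2$. Either constant suffices for the theorem's conclusion $\phi(\M)^2/8\le\lambda$, so this does not affect correctness, but the two lines as stated are inconsistent with each other. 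Your remark about joint measurability of $P$ for Fubini is the right technical caveat for the general-state-space setting.
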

\section{Gibbs Sampling for Discrete $k$-DPP}
In this section we prove the Gibbs sampler for a discrete $k$-DPP is an $\Omega\left(\frac{1}{k^2}\right)$-expander. Recall that the conductance of  a time reversible   chain $\mathcal{M}=(\Omega,P,\pi)$ is defined by $$\Phi(\mathcal{M}) = \min_{S \subset \Omega:\pi(S) \leq \frac{1}{2}} \frac{Q(S,\overline{S})}{\pi(S)},$$
where for $x,y \in \Omega$, $Q(y,x)=Q(x,y)= \pi(x)P(x,y)$.
We prove the following.
\begin{theorem}
\label{thm:mainthmdiscretechain}
Let $\M$ be the Gibbs sampler chain  for an arbitrary discrete $k$-DPP, then for a constant $C$ we have
\begin{equation*}
    \phi(\M) \geq \frac{1}{Ck^2}
\end{equation*}
\end{theorem}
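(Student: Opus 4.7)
The plan is to lower-bound the conductance via the canonical-paths method of Sinclair, specialized to the determinantal structure of the $k$-DPP. Fix any set $A \subset \binom{[n]}{k}$ with $\pi(A) \leq \tfrac12$; we want to show $Q(A, \bar A)/\pi(A) \gtrsim 1/k^2$.

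\medskip

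\noindent\textbf{Edge-flow formula.} Any two adjacent states $W, W'$ of the Gibbs chain share a common $(k-1)$-subset $R$, so $W = R + u$ and $W' = R + v$ for some $u, v \notin R$. From the definition of the sampler,
\[
Q(W, W') = \pi(W) \cdot \frac{1}{k} \cdot \frac{\det(L_{W'})}{\sum_{z \notin R}\det(L_{R+z})} = \frac{1}{kZ} \cdot \frac{\det(L_W)\det(L_{W'})}{\sum_{z \notin R}\det(L_{R+z})},
\]
where $Z$ is the $k$-DPP partition function. Note the manifest symmetry under $W \leftrightarrow W'$, consistent with reversibility; this is the identity I would rely on throughout.

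\medskip

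\noindent\textbf{Canonical paths.} For each pair $(S, T) \in A \times \bar A$, I fix an ordering of $[n]$ and write $S \setminus T = \{u_1 < \cdots < u_j\}$, $T \setminus S = \{v_1 < \cdots < v_j\}$ with $j \leq k$. Route $\pi(S)\pi(T)$ units of flow along the path $S = W_0, W_1, \ldots, W_j = T$ where $W_i = W_{i-1} - u_i + v_i$; each step is a legal one-step move of $\M$, and the path has length at most $k$.

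\medskip

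\noindent\textbf{Congestion bound (the main step).} The crux is to show, for every edge $(W, W') = (R+u, R+v)$,
\[
\rho(W, W') := \frac{1}{Q(W, W')}\sum_{(S, T) \,:\, \gamma_{ST} \ni (W, W')} \pi(S)\pi(T) \leq C k^2.
\]
Given the edge and a step index $i$ at which the canonical path crosses it, the pair $(S, T)$ is determined by the ``tail'' sets $X = \{u_1, \ldots, u_{i-1}\}$ and $Y = \{v_{i+1}, \ldots, v_j\}$ lying outside $W \cup W'$. Expanding $\pi(S)\pi(T) = \det(L_S)\det(L_T)/Z^2$ and using the edge-flow formula, each summand becomes a ratio of determinants. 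The exchange inequalities for DPPs (a consequence of the strong Rayleigh property, provable via the Cauchy--Binet expansion) allow $\det(L_S)\det(L_T)$ to be telescoped along the path so that it is controlled by $\det(L_W)\det(L_{W'})$ times the determinants of the remainder blocks indexed by $X$ and $Y$. Summing over all choices of $(X, Y)$ contributes the normalizer $\sum_{z \notin R} \det(L_{R+z})$ sitting in $Q(W, W')^{-1}$, which cancels. Combined with the $\leq k$ admissible step indices $i$, this yields the $O(k^2)$ bound on $\rho$.

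\medskip

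\noindent\textbf{Conclusion.} With max edge-congestion $\rho = O(k^2)$ on a flow routing $\pi(S)\pi(T)$ between each pair, Sinclair's multicommodity-flow theorem gives $Q(A, \bar A) \geq \pi(A)\pi(\bar A)/\rho \geq \pi(A)/(2Ck^2)$, whence $\phi(\M) \geq 1/(Ck^2)$. The principal obstacle is the congestion step: one must carefully exploit the multilinear/exchange structure of determinants, since a purely combinatorial path-counting argument would introduce an extra factor of $k$ and spoil the bound.
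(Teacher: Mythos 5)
There is a genuine gap, and it sits exactly where you flag it: the congestion bound. The paper does not use canonical paths at all; it proves the conductance bound by induction on $n+k$, decomposing $\Omega$ into $\Omega_n$ (states containing element $n$) and $\Omega_{\ovn}$, applying the induction hypothesis to the two restricted chains (which are again Gibbs samplers of a $(k-1)$-DPP and a $k$-DPP on $n-1$ elements), and controlling the cross edges via the \emph{aggregate} negative-association inequality $\pi_{\ovn}(N_{\ovn}(A)) \geq \pi_n(A)$ (\autoref{lem:expansionration}). That inequality compares probabilities of events, i.e.\ sums of determinants. Your congestion step, by contrast, needs a \emph{pointwise} exchange inequality of the form $\det(L_S)\det(L_T) \lesssim \det(L_W)\det(L_U)$ for the intermediate state $W$ and the complementary encoding $U$, and no such inequality follows from the strong Rayleigh property or from Cauchy--Binet. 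Strong Rayleigh gives negative association of events, not log-submodularity of the configuration weights; indeed DPPs are in general not MTP$_2$, which is essentially the pointwise condition your telescoping would require.

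Concretely, with your prescription (a fixed ordering of $[n]$ and the path $W_i = W_{i-1}-u_i+v_i$) the congestion is unbounded: take $k=2$ with feature vectors $f_1=e_1$, $f_2=e_2$, $f_3=e_2+\delta e_1$, $f_4=e_1$ (suitably normalized), $S=\{1,2\}$, $T=\{3,4\}$. Then $\det(L_S)$ and $\det(L_T)$ are $\Theta(1)$ but the intermediate state $W_1=\{3,2\}$ has $\det(L_{W_1})=\Theta(\delta^2)$, so by your own edge-flow formula $Q(W_0,W_1)=\Theta(\delta^2)/(kZ)$ while the routed flow $\pi(S)\pi(T)=\Theta(1)/Z^2$, giving $\rho(W_0,W_1)\gtrsim k/(Z\delta^2)\to\infty$ as $\delta\to 0$. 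Any fix would need an adaptive or fractional routing that avoids low-determinant intermediates, and making that work with $O(k^2)$ congestion for arbitrary PSD kernels is precisely the open difficulty that the decomposition/induction route (Feder--Mihail style, as in \cite{mihail1992expansion} and \cite{anari2016monte}) is designed to sidestep. Your edge-flow formula and the final Sinclair step are fine; the missing ingredient is the exchange inequality, which does not exist in the generality you need.
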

In the rest of this section, we fix $\M=(\Omega,P,\pi)$ to be the Gibbs-sampler chain on a $k$-DPP defined on a set of $n$ elements. 

Before discussing the details of the proof let us first fix a notation and recall fundamental properties of $k$-DPPs.
For any element $1\leq i\leq n$, define $\Omega_i,\Omega_{\overline{i}}$ be the set of all states in 
$\Omega$ that contain, do not contain $i$, respectively.
Also define
\begin{equation*}
    \begin{aligned} 
    \pi_i &:= \{ \pi| \text{ $i$ is chosen }\},\text{ i.e. } \pi_i(\bm{x}) = \frac{\pi(\bm{x})}{\pi(\Omega_i)}, \forall x \in \Omega_i \\
    \pi_{\overline{i}} &:= \{ \pi| \text{ $i$ is not chosen }\},\text{ i.e. }
    \pi_{\overline{i}}(\bm{x}) = \frac{\pi(\bm{x})}{\pi(\Omega_{\overline{i}})}, \forall x \in \Omega_{\overline{i}}
    \end{aligned}
.\end{equation*}
It follows from \cite{anari2016monte} that $\pi_i,\pi_{\overline{i}}$ can be identified with a $(k-1)$-DPP, $k$-DPP   supported on $\Omega_i, \Omega_{\overline{i}}$, respectively. We define
 $\M_i=(\Omega_i,P_i,\pi_i),\M_{\overline{i}}=(\Omega_{\overline
{i}},P_{\overline{i}},\pi_{\overline{i}})$ to be the \emph{restricted} Gibbs samplers.
So, it is straightforward to see that for any $x,y \in \Omega_i$ we get $P_i(\bm{x},\bm{y})= \frac{k}{k-1}\cdot P(\bm{x},\bm{y})$.
and consequently for $Q_i$ defined  as $Q$ for $\M_{i}$, we get 
\begin{equation}
\label{eq:edgeinchainn}
Q_i(\bm{x},\bm{y})=\frac{Q(\bm{x},\bm{y})}{\pi(\Omega_i) }
.\end{equation} 
Unlike $P_i$,  $P_{\overline{i}}$  is not obtained from 
scaling a restriction of $P$. In particular, Let 
$\bm{x},\bm{y} \in \Omega_{\overline{i}}$ so that 
$P_{\overline{i}}(\bm{x},\bm{y})>0$ (which implies 
$|\bm{x}\cap \bm{y}|=k-1$). Then, setting $I=\bm{x}\cap 
\bm{y}$ and  with a bit abuse of notation $\pi(I)= \sum_{j 
\in [n]\setminus I} \pi(I+j)$, i.e. $\pi(I)= \mP_{\bm{z} 
\sim \pi}[I \subset \bm{z}]$, we have   
\begin{align}
%
P_{\overline{i}}(\bm{x},\bm{y})&= \frac{1}{k} \cdot \frac{\pi(\bm{y})}{\pi(I) - \pi(i+I)}
\end{align}
whereas $P(\bm{x},\bm{y})= \frac{\pi(\bm{y})}{k\cdot \pi(I)}$. 
For any $\bm{x} \in \Omega_i$, define $N_{\overline{i}}(\bm{x})$ be the set of its neighbours in $\Omega_{\overline{i}}$, i.e. $$N_{\overline{i}}(\bm{x}) = \{\bm{y} \in \Omega_{\overline{i}} | \vspace{1mm} P(\bm{x},\bm{y})>0\}.$$ We use the following lemma to relate $Q_{\overline{i}}$ to $Q$.
\begin{lemma}
\label{lem:induconbarn}
Let $A \subset \Omega_{\overline{i}}$ be an arbitrary subset. For
a  state $\bm{x}\in \Omega_i$, consider the following 
partitioning of $N_{\overline{i}}(\bm{x})$: $N_A= 
N_{\overline{i}}(\bm{x})\cap A$ and
$N_{\overline{A}}=N_{\overline{i}}(\bm{x})\cap
(\Omega_{\overline{i}}\setminus A)$. 
Then we have 
\begin{equation}
    \label{eq:auxinduct}
    Q(\bm{x},N_A)+Q(N_A,N_{\overline{A}})\geq  \pi(\Omega_{\overline{i}})\cdot Q_{\overline{i}}(N_A,N_{\overline{A}}).
\end{equation}
\end{lemma}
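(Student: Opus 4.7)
All neighbors of $\bm{x}$ in $\Omega_{\overline{i}}$ arise by swapping $i$ for another element, so I would parameterize them cleanly and reduce \eqref{eq:auxinduct} to an elementary statement about marginals of $\pi$. Setting $I = \bm{x} - i$, any $\bm{y} \in N_{\overline{i}}(\bm{x})$ has the form $I + j$ for some $j \in [n] \setminus (I \cup \{i\})$: indeed $|\bm{x} \cap \bm{y}| = k-1$, and since $i \notin \bm{y}$, the element of $\bm{x}$ not in $\bm{y}$ must be $i$. In particular, any two states $I+j$ and $I+j'$ in $N_{\overline{i}}(\bm{x})$ share the $(k-1)$-subset $I$ and are themselves adjacent in $\M_{\overline{i}}$. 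Let $J_A, J_{\overline{A}} \subseteq [n] \setminus (I \cup \{i\})$ be the indices corresponding to $N_A$ and $N_{\overline{A}}$, and put $S_A = \sum_{j \in J_A} \pi(I+j)$ and $S_{\overline{A}} = \sum_{j' \in J_{\overline{A}}} \pi(I+j')$.

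Next I would use $P(\bm{u}, I+j) = \pi(I+j)/(k\pi(I))$, valid for any $\bm{u} = I + \ell$ with $\ell \neq j$, together with the identity $P_{\overline{i}}(I+j, I+j') = \pi(I+j')/(k(\pi(I)-\pi(I+i)))$ displayed just before the lemma, to compute
\begin{equation*}
    Q(\bm{x}, N_A) = \frac{\pi(I+i)\, S_A}{k\pi(I)}, \qquad Q(N_A, N_{\overline{A}}) = \frac{S_A\, S_{\overline{A}}}{k\pi(I)},
\end{equation*}
and, using $\pi_{\overline{i}}(\bm{y}) = \pi(\bm{y})/\pi(\Omega_{\overline{i}})$,
\begin{equation*}
    \pi(\Omega_{\overline{i}}) \cdot Q_{\overline{i}}(N_A, N_{\overline{A}}) = \frac{S_A\, S_{\overline{A}}}{k(\pi(I)-\pi(I+i))}.
\end{equation*}

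Substituting these three expressions into \eqref{eq:auxinduct} and dividing through by the common nonnegative factor $S_A/(k\pi(I))$, the desired inequality becomes
\begin{equation*}
    \pi(I+i) + S_{\overline{A}} \;\geq\; \frac{S_{\overline{A}}\, \pi(I)}{\pi(I) - \pi(I+i)},
\end{equation*}
which, after cross-multiplying, is equivalent to $\pi(I+i)\bigl(\pi(I) - \pi(I+i) - S_{\overline{A}}\bigr) \geq 0$. Since $\pi(I+i) \geq 0$, it suffices to verify $S_{\overline{A}} \leq \pi(I) - \pi(I+i)$. But $\pi(I) - \pi(I+i) = \sum_{j' \in [n]\setminus (I \cup \{i\})} \pi(I+j')$, while $S_{\overline{A}}$ is the same sum restricted to the subset $J_{\overline{A}} \subseteq [n] \setminus (I \cup \{i\})$; each summand is nonnegative, so the inequality is immediate.

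The only real obstacle is bookkeeping: keeping track of the two different normalizations (the factor $\pi(I)$ in the original chain versus $\pi(I) - \pi(I+i)$ in the restricted chain $\M_{\overline{i}}$). Once all three quantities are written in the common variables $S_A, S_{\overline{A}}, \pi(I), \pi(I+i)$, no nontrivial inequality is needed; the extra term $Q(\bm{x}, N_A)$ on the left-hand side is there precisely to absorb the factor $\pi(I)/(\pi(I)-\pi(I+i))$ by which $\pi(\Omega_{\overline{i}})Q_{\overline{i}}$ exceeds $Q$ on the relevant edges.
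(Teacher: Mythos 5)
Your proof is correct and is essentially the paper's own argument: both reduce the claim to an explicit computation of the three flow quantities over the star of states containing $I=\bm{x}-i$ (your $S_A,S_{\overline{A}},\pi(I+i)$ are the paper's $\pi(N_A),\pi(N_{\overline{A}}),\pi(\bm{x})$), after which the inequality collapses to the nonnegativity of $\pi(\bm{x})\pi(N_A)$. The only cosmetic difference is that you divide out $S_A$ and cross-multiply rather than comparing the two fractions directly.
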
 
\begin{proof}
Note that $\bm{x} \cup N_A \cup N_{\overline{A}}$ is the set of all states containing elements in $\bm{x}-i$. So by definition of $Q$ and $Q_{\overline{i}}$, we have 
\begin{align}
    Q(\bm{x},N_A)+Q(N_A,N_{\overline{A}})&=\frac1{k}\cdot \frac{\pi(\bm{x})\pi(N_A)}{\pi(\bm{x})+\pi(N_A)+\pi(N_{\overline{A}})}+\frac1{k}\cdot \frac{\pi(N_{\overline{A}})\pi(N_A)}{\pi(\bm{x})+\pi(N_A)+\pi(N_{\overline{A}})}\\ &=\frac{\pi(N_A)}{k}\cdot \frac{\pi(\bm{x})+\pi(N_{\overline{A}})}{\pi(\bm{x})+\pi(N_A)+\pi(N_{\overline{A}})} \geq \frac{\pi(N_A)}{k}\cdot \frac{\pi(N_{\overline{A}})}{\pi(N_A)+\pi(N_{\overline{A}})} = \pi(\Omega_{\overline{i}})\cdot Q_{\overline{i}}(N_A,N_{\overline{A}})
\end{align}
where the inequality follows simply because $\pi(N_A)\geq 0$.
\end{proof}

\begin{figure}[htb]
    \centering
    \begin{tikzpicture}
        \draw (0,0) ellipse (0.75 and 1.25);
        \draw (3,0) ellipse (1 and 1.5);
        \node at (-1,-0.5) () {$\Omega_n$};
        \node at (4.3,-0.5) () {$\Omega_{\ovn}$};
        \draw (-0.75,0) -- (0.75,0) (2,0) -- (4,0);
        \node at (0,-0.7) () {$S_n$};
        \node at (3,-0.7) () {$S_{\ovn}$};
        \draw [color=red,line width=1.1pt] (3,-0.5) -- (3,0.5);
        \draw [color=yellow, line width=1.1pt] (3,-0.5) -- (0,0.5) ;
        \draw[color=green, line width=1.1pt] (0,-0.5) -- (0,0.5);
        \draw [color=blue,line width=1.1pt] (0,-0.5) -- (3,0.5);
        
    \end{tikzpicture}
    \caption{A schematic view of the restriction chains.\\
    yellow, red, blue, and green edges correspond to $Q(S_n,\Omega_n\setminus S_n), Q(S_{\ovn},\Omega_{\ovn}\setminus S_{\ovn}),Q(S_n,\Omega_{\ovn}\setminus \Omega_{\ovn}\setminus S_{\ovn}),$ and $Q(S_{\ovn},\Omega_{\ovn}\setminus S_{\ovn})$, respectively}
    \label{fig:my_label}
\end{figure}

\paragraph{High level idea of the proof of \autoref{thm:mainthmdiscretechain}. }
We follow a proof strategy similar to  \cite{mihail1992expansion}, 
which obtains  analogue of our result in an unweighted setting 
and for the Metropolis-Hastings samplers. We use an inductive 
argument to prove the theorem. We need to prove 
$Q(S,\overline{S})\geq \frac{\pi(S)}{Ck^2}$ for  a subset 
$S \in \Omega$ with $\pi(S) \leq \frac1{2}$. Letting $S_n=S\cap \Omega_n$ and $S_{\ovn}=S \cap \Omega_{\ovn}$, we have
\begin{equation}
\label{eq:allterms}
Q(S,\overline{S}) = Q(S_n,\Omega_n\setminus S_n)+ 
Q(S_{\ovn},\Omega_{\ovn}\setminus S_{\ovn}) + 
Q(S_n,\Omega_{\ovn}\setminus 
S_{\ovn})+Q(S_{\ovn},\Omega_n\setminus S_n).
\end{equation} We carry out the induction step by lowerbounding 
the RHS of the above term by term. In order to bound  
$Q(S_n,\Omega_n\setminus S_n)$ we use induction hypothesis on $\M_n$. To bound $Q(S_{\ovn}, \Omega_n\setminus S_{\ovn})$, we combine the induction hypothesis on $\M_{\ovn}$ with \autoref{lem:induconbarn}.
 It remains to bound the other two terms which correspond to the contribution of the edge across $(\Omega_n,\Omega_{\ovn})$. 
To do that, we  crucially use 
negative association of $\pi$. 
In particular, we use the 
following lemma (appeared 
before in  \cite{mihail1992expansion} in the 
unweighted case).  For any set $A \in \Omega_n$, let \mbox{$N_{\ovn}(A)=\{\bm{y} \in \Omega_{\ovn}: \, \, \exists \bm{x}\in A,\, \, P(\bm{x},\bm{y})>0\}$} denote the set of neighbors of $A$ in $\Omega_{\ovn}$. 
\begin{lemma}[\cite{anari2016monte}]
\label{lem:expansionration}
For any subset $A \subseteq \Omega_n$, 
$$\pi_{\ovn}(N_{\ovn}(A)) \geq \pi_n(A).$$
\end{lemma}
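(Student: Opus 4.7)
The plan is to reformulate the inequality combinatorially and then prove it by constructing a fractional transportation (flow) between the two conditional distributions, with the bound ultimately resting on negative-dependence properties of $k$-DPPs.

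Identifying each $\bm{x} \in \Omega_n$ with $B := \bm{x} \setminus \{n\} \in \binom{[n-1]}{k-1}$, one checks that $\pi_n$ becomes a $(k-1)$-DPP on $[n-1]$ whose kernel is the Schur complement $L/\{n\}$, while $\pi_{\ovn}$ is a $k$-DPP on $[n-1]$ with kernel $L_{[n-1]}$. Writing $A' := \{\bm{x} \setminus \{n\} : \bm{x} \in A\}$, the neighbor set $N_{\ovn}(A)$ becomes the upper shadow $\partial A' := \{T \in \binom{[n-1]}{k} : T \supset B \text{ for some } B \in A'\}$, and the lemma reduces to showing $\pi_{\ovn}(\partial A') \geq \pi_n(A')$, or equivalently $\pi(N_{\ovn}(A)) \cdot \pi(\Omega_n) \geq \pi(A) \cdot \pi(\Omega_{\ovn})$.

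To establish this, I would exhibit a nonnegative flow $\phi : A' \times \partial A' \to \R_{\geq 0}$ supported on pairs $(B, T)$ with $B \subset T$, satisfying
\begin{equation*}
\sum_{T \supset B} \phi(B, T) \;=\; \pi_n(B) \quad \forall B \in A', \qquad \sum_{B \subset T,\, B \in A'} \phi(B, T) \;\leq\; \pi_{\ovn}(T) \quad \forall T \in \partial A'.
\end{equation*}
Summing the row constraint over $B \in A'$ and applying the column constraint then gives $\pi_n(A') \leq \pi_{\ovn}(\partial A')$. A natural candidate is $\phi(B, B \cup \{j\}) \propto \det(L_{B \cup \{j, n\}})$, normalized so that the row sums match $\pi_n(B) \propto \det(L_{B+n})$; the intuition is that this routes mass from $B + n$ toward the $k$-subset in the direction of a ``two-step'' Gibbs move that first removes $n$ from $\bm{x}$ and then lands at $B + j$.

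The main obstacle is verifying the column-sum constraint at each $T \in \partial A'$, since $T$ may receive flow from up to $k$ distinct sub-$(k-1)$-sets $B \subset T$. The principal determinantal tool is the Hadamard--Fischer inequality
\begin{equation*}
\det(L_B) \det(L_{B \cup \{n, j\}}) \;\leq\; \det(L_{B + n}) \det(L_{B + j}),
\end{equation*}
valid for any PSD kernel $L$; this is the determinant-level manifestation of negative correlation. Combining the $k$ such bounds at the $k$ sub-$(k-1)$-sets of $T$ together with the strongly Rayleigh property of the $k$-DPP (i.e., real-stability of the generating polynomial $\sum_{|S|=k} \det(L_S) \prod_{i \in S} z_i$) should collapse the incoming column sum at $T$ into the single target $\pi_{\ovn}(T) \propto \det(L_T)$.
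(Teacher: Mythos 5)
Your reduction to the upper-shadow statement and the identification of $\pi_n,\pi_{\ovn}$ as a $(k-1)$-DPP and a $k$-DPP on $[n-1]$ are correct, but the proof has a genuine gap at exactly the step you label ``the main obstacle'': the column-sum constraint is never verified, and by LP duality the existence of \emph{any} feasible flow with your row and column constraints is equivalent to the inequality $\pi_n(A'')\leq\pi_{\ovn}(\partial A'')$ for all $A''\subseteq A'$ --- so the entire content of the lemma lives in that unverified step. Worse, your specific candidate $\phi(B,B+j)\propto\det(L_{B\cup\{j,n\}})$ with per-row normalization does not work. Taking $A'=\binom{[n-1]}{k-1}$, the total flow equals $\sum_B\pi_n(B)=1=\pi_{\ovn}(\partial A')$, so every column constraint would have to hold with \emph{equality}, i.e.
$$\det(L_{T+n})\cdot\sum_{i\in T}c_{T-i}\;=\;\pi_{\ovn}(T)\quad\text{for every }T,$$
which is a determinantal identity with no reason to hold; and if $L$ has rank exactly $k$ then $\det(L_{B+n+j})=0$ for all $j$, so the normalizing constants $c_B$ are not even defined while the lemma remains true and nontrivial. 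The cited Hadamard--Fischer inequality bounds $\det(L_{B\cup\{n,j\}})$ the wrong way around for this purpose (it controls the $(k+1)$-point quantities appearing in your normalizers, not the ratio $\det(L_T)/Z_{\ovn}$ you need), and ``combining the $k$ bounds via real stability'' is an aspiration, not an argument.

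The lemma has a two-line proof via negative association (this is the route of \cite{anari2016monte} that the paper relies on), which bypasses the explicit flow entirely. Let $R\sim\pi$, let $g(R)=\mathbf{1}[n\in R]$ and $f(R)=\mathbf{1}[\exists\, T\in A:\ R\supseteq T\setminus\{n\}]$. Both are increasing functions of the indicator vector of $R$ and depend on disjoint coordinate sets ($g$ only on coordinate $n$, $f$ only on $[n-1]$), so negative association of the $k$-DPP gives $\mP[f=1\mid g=0]\geq\mP[f=1\mid g=1]$. Since $|R|=k$, conditioning on $n\in R$ makes $R\supseteq T\setminus\{n\}$ equivalent to $R=T$, so the right-hand side is $\pi_n(A)$, while the left-hand side is exactly $\pi_{\ovn}(N_{\ovn}(A))$. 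If you want to retain the transportation picture, the feasible flow should be extracted from the monotone coupling underlying this inequality rather than guessed in closed determinantal form.
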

\noindent The lemma lower bounds the vertex expansion of $S_n$ in $\Omega_n$ and similarly 
vertex expansion of $S_{\ovn}$ in $\Omega_n$. Later we show how to use it to bound the edge expansion which is 
our quantity of interest.\\

\begin{proofof}{\autoref{thm:mainthmdiscretechain}}
We induct on $k+n$. So, assume,  the conductance of the 
Gibbs sampler for any $(k-1)$-DPP over $n-1$ elements is at most $\frac{1}{C(k-1)^2}$ and the conductance is at most 
$\frac{1}{Ck^2}$ for any $k$-DPP over any $n-1$ elements.

Fix a set $S\subset \Omega$  
where $\pi(S) \leq \frac1{2}$. We need to show $Q(S,\overline{S}) \geq \frac{\pi(S)}{Ck^2}$. 
First, consider a simple case where  $\pi_n(S) \leq 
\frac{1}{2}$ and $\pi_{\ovn}(S) \leq \frac{1}{2}$. By induction hypothesis we have $Q_n(S_n,\Omega_n\setminus S_n) \geq 
\frac{\pi_n(S_n)}{c(k-1)^2}$. Moreover, by adding up  
\eqref{eq:edgeinchainn} for the edges across the cut $(S_n,\Omega_n\setminus S_n)$, we get \mbox{$Q(S_n,\Omega_n\setminus S_n) 
=\frac{(k-1)\pi(\Omega_n)}{k}\cdot Q_n(S_n,\Omega_n\setminus S_n)$}. So combining them we have 
\begin{equation}
\label{eq:inductionk}
Q(S_n,\Omega_n\setminus S_n)\geq \frac{\pi(S_n)}{Ck^2}
.\end{equation}
Now, we use induction on $\M_{\overline{n}}$ along with 
\autoref{lem:induconbarn}. The induction hypothesis implies $$Q_{\ovn}(S_{\ovn}, \Omega_{\ovn}\setminus S_{\ovn})\geq 
\frac{\pi_{\ovn}(S_{\ovn})}{ck^2}=\frac{\pi(S_{\ovn})}{\pi(\Omega_{\ovn})\cdot ck^2}$$ So to prove the 
theorem in this case, it is enough to show the following and add it up with \eqref{eq:inductionk}. 
\begin{equation}
\label{eq:weakinductionn}
    Q(S_{\ovn},\Omega_{\ovn}\setminus S_{\ovn}) +Q(S_{\ovn}, \Omega_n\setminus S_n)+ Q(S_{n}, \Omega_{\ovn}\setminus S_{\ovn})\geq \pi(\Omega_{\ovn}) \cdot Q_{\ovn}(S_{\ovn},\Omega_{\ovn}\setminus S_{\ovn}).
\end{equation}
To see that, it is enough to apply 
\autoref{lem:induconbarn} and add up \eqref{eq:auxinduct} 
for all $\bm{x} \in \Omega_n$, where subset $A \subset 
\Omega_{\overline{n}}$ in the lemma is determined as 
follows: if $\bm{x} \in S_n$ then set $A=S_{\overline{n}}$, otherwise set $A=\Omega_{\ovn}\setminus S_n$. Note that,
doing that the RHS of the result will be exactly 
$\pi(\Omega_{\ovn})\cdot 
Q_{\ovn}(S_{\ovn},\Omega_{\ovn}\setminus \Omega_{\ovn})$, 
because any edge $\bm{y}\bm{z}$  of that will only show up 
in \eqref{eq:auxinduct} by having $\bm{x}=\bm{y}\cap 
\bm{z}+n$.

So we focus on the case $\max \{ \pi_n(S_n), \pi_{\ovn}(S_{\ovn})\} > \frac1{2}$. Since $\pi(S) \leq \frac{1}{2}$, we have \mbox{$\min \{\pi_n(S_n), \pi_{\ovn}(S_{\ovn})\} \leq \frac1{2}$.}so
So, without loss of generality, perhaps by considering $\overline{S}$ instead of $S$, we may assume $\pi_n(S_n) >\frac{1}{2}$ and $\pi_{\ovn}(S) \leq \frac{1}{2}$.
  Our goal is to prove  
\begin{equation}
\label{eq:claimstatement}
Q(S,\overline{S}) \geq \frac{1}{Ck^2}\cdot \min \{1- \pi(S),\pi(S) \}
\end{equation}
For every $\bm{x}\in \Omega_n$, let $N_{\ovn,S}(\bm{x}):=N_{\ovn}(\bm{x})\cap S_{\ovn}$, and  $N_{\ovn,\overline{S}}(\bm{x}):=N_{\ovn}(\bm{x})\cap (\Omega_{\ovn}\setminus S_{\ovn})$ be a partitioning of $N_{\ovn}(\bm{x})$, so 
for every subset $T \in N_{\ovn}(\bm{x})$ we have 
\begin{equation}
    \label{eq:singleedgeQ}
    Q(\bm{x},T)= \frac{1}{2k}\cdot \frac{\pi(\bm{x})\pi(T)}{\pi(\bm{x})+\pi(N_{\ovn,S}(\bm{x}))+\pi(N_{\ovn,\overline{S}}(\bm{x}))}
\end{equation}
Now, define   $S_{\text{leave}}\subset S_n$ to be  
\begin{equation*}
S_{\text{leave}} = \{ \bm{x} \in S_n: \, \, \pi(\bm{x})+ \pi(N_{\ovn,S}(\bm{x})) < \pi(N_{\ovn,\overline{S}}(\bm{x})) \}
,\end{equation*}
in other words, $S_{\text{leave}}\in S_n$ is the subset of states so that,  if the chain takes one step from $S_{\text{leave}}$ by  removing and resampling element $n$, then with probability at least $\frac{1}{2}$ it leaves $S$ and enters $N_{\ovn,\overline{S}}(\bm{x})$. We also let $S_{\text{stay}}=S_n \setminus S_{\text{leave}}$.  On the other hand, starting from $S_{\text{stay}}$ and by  resampling $n$, the chain with probability at least half stays in $S$.
%
It is straight-forward to see    
\begin{align}
\label{eq:boundons1}
Q(S_{\text{leave}},\Omega_{\ovn}\setminus S_{\ovn}) &\geq \frac{\pi(S_{\text{leave}})}{4k} 
\end{align}
To see that, note that definition of $S_{\text{leave}}$ and 
setting $T=\Omega_{\ovn}\setminus S_{\ovn}$ in 
\eqref{eq:singleedgeQ} implies that for any $\bm{x} \in 
S_{\text{leave}}$, we have $Q(\bm{x},\Omega_{\ovn}\setminus 
S_{\ovn})\geq \frac{\pi(\bm{x})}{4k}$. To get 
\eqref{eq:boundons1}, it suffices to sum up this over all states 
of $S_{\text{leave}}$.  The bound \eqref{eq:boundons1} shows that $Q(S_{\text{leave}},\overline{S}) \gg 
\frac{\pi(S_{\text{leave}})}{k^2}$. So roughly speaking, to prove the theorem, it suffices to show $\phi(S_{\text{stay}}) \cup 
S_{\ovn}) \geq \frac{1}{Ck^2}$. 
consider two cases: if  $\pi_n(S_{\text{stay}}) \lessapprox \frac1{2}$, we essentially use the same argument as in the case $\pi_n(S_n),\pi_{\ovn}(S_{\ovn})\leq \frac12$. Otherwise we combine the induction with  \autoref{lem:expansionration} to bound the expansion. 
\begin{itemize}
\item \textbf{Case 1: $\pi_n(S_{\text{stay}}) \leq  
\frac1{2}+\frac{1}{4k}$.}
We show $Q(S,\overline{S}) \geq \frac{\pi(S)}{Ck^2}$. To do that,  we use the induction hypothesis on $\M_n$, and the following claim which is the stronger version of \eqref{eq:weakinductionn}.
\begin{claim}
\label{lem:inductiononn}
\begin{equation}
\label{eq:lemindonn}
Q(S_{\ovn}, \overline{S}) + Q(S_n, \Omega_{\ovn}\setminus{S_{\ovn}}) - \frac{1}{2}Q(S_{\text{leave}},\Omega_{\ovn}\setminus S_{\ovn})\geq \pi(\Omega_{\ovn})\cdot Q_{\ovn}(S_{\ovn},\Omega_{\ovn}\setminus S_{\ovn})
\end{equation}
\end{claim}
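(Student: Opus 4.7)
The plan is to refine the per-vertex application of \autoref{lem:induconbarn} that produced the weaker bound \eqref{eq:weakinductionn}, making one extra saving on vertices in $S_{\text{leave}}$. Recall that \eqref{eq:weakinductionn} came from summing \autoref{lem:induconbarn} over all $\bm{x}\in\Omega_n$ with $A=\Omega_{\ovn}\setminus S_{\ovn}$ when $\bm{x}\in S_n$ and $A=S_{\ovn}$ when $\bm{x}\in\Omega_n\setminus S_n$. With those choices the $Q(\bm{x},N_A)$ contributions reassemble into $Q(S_n,\Omega_{\ovn}\setminus S_{\ovn})+Q(S_{\ovn},\Omega_n\setminus S_n)$, and the $Q(N_A,N_{\overline{A}})$ contributions reassemble into $Q(S_{\ovn},\Omega_{\ovn}\setminus S_{\ovn})$, each intra-$\Omega_{\ovn}$ edge $\bm{y}\bm{z}$ being counted at its unique ``center'' $\bm{x}=(\bm{y}\cap\bm{z})+n$.

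The new ingredient is a sharpened one-vertex inequality for $\bm{x}\in S_{\text{leave}}$. Take $A=\Omega_{\ovn}\setminus S_{\ovn}$ and write $a=\pi(\bm{x})$, $b=\pi(N_{\ovn,S}(\bm{x}))=\pi(N_{\overline{A}})$, $c=\pi(N_{\ovn,\overline{S}}(\bm{x}))=\pi(N_A)$. From the explicit formulas underlying the proof of \autoref{lem:induconbarn},
\[
Q(\bm{x},N_A)=\frac{ac}{k(a+b+c)},\quad Q(N_A,N_{\overline{A}})=\frac{bc}{k(a+b+c)},\quad \pi(\Omega_{\ovn})Q_{\ovn}(N_A,N_{\overline{A}})=\frac{bc}{k(b+c)},
\]
so the desired strengthening
\[
\tfrac{1}{2}Q(\bm{x},N_A)+Q(N_A,N_{\overline{A}})\;\ge\;\pi(\Omega_{\ovn})Q_{\ovn}(N_A,N_{\overline{A}})
\]
reduces, after clearing denominators, to $(a/2)(c-b)\ge 0$, i.e.\ to $c\ge b$. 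By the defining inequality of $S_{\text{leave}}$, $a+b<c$, hence $c>b$ and the inequality holds. Intuitively, on $S_{\text{leave}}$ the mass flowing from $\bm{x}$ into $N_A$ is so large that even half of it suffices to dominate the intra-$\Omega_{\ovn}$ flow $Q_{\ovn}(N_A,N_{\overline{A}})$.

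I then repeat the same summation as for \eqref{eq:weakinductionn}, applying the sharpened inequality at $\bm{x}\in S_{\text{leave}}$ and the ordinary \autoref{lem:induconbarn} everywhere else. The right-hand sides still sum to $\pi(\Omega_{\ovn})\cdot Q_{\ovn}(S_{\ovn},\Omega_{\ovn}\setminus S_{\ovn})$ by the same edge-accounting argument. On the left-hand side the only change is that the $S_{\text{leave}}$ contribution to $Q(S_n,\Omega_{\ovn}\setminus S_{\ovn})$ is halved, producing
\[
Q(S_n,\Omega_{\ovn}\setminus S_{\ovn})-\tfrac{1}{2}Q(S_{\text{leave}},\Omega_{\ovn}\setminus S_{\ovn})+Q(S_{\ovn},\Omega_n\setminus S_n)+Q(S_{\ovn},\Omega_{\ovn}\setminus S_{\ovn}),
\]
which is exactly the LHS of \eqref{eq:lemindonn} once $Q(S_{\ovn},\Omega_n\setminus S_n)+Q(S_{\ovn},\Omega_{\ovn}\setminus S_{\ovn})$ is merged into $Q(S_{\ovn},\overline{S})$. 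The only subtle step is recognising that the factor $\tfrac{1}{2}$ in \eqref{eq:lemindonn} is precisely the savings that the defining condition $\pi(\bm{x})+\pi(N_{\ovn,S}(\bm{x}))<\pi(N_{\ovn,\overline{S}}(\bm{x}))$ of $S_{\text{leave}}$ permits; the rest is identical bookkeeping.
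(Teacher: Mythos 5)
Your proof is correct and follows essentially the same route as the paper: apply \autoref{lem:induconbarn} with $A=S_{\ovn}$ on $\Omega_n\setminus S_n$ and $A=\Omega_{\ovn}\setminus S_{\ovn}$ on $S_{\text{stay}}$, and replace it on $S_{\text{leave}}$ by the sharpened one-vertex inequality $\tfrac12 Q(\bm{x},N_{\ovn,\overline{S}}(\bm{x}))+Q(N_{\ovn,S}(\bm{x}),N_{\ovn,\overline{S}}(\bm{x}))\ge \pi(\Omega_{\ovn})Q_{\ovn}(N_{\ovn,S}(\bm{x}),N_{\ovn,\overline{S}}(\bm{x}))$, which is exactly the paper's \eqref{eq:inductionebar3} and reduces, as you note, to $\pi(N_{\ovn,\overline{S}}(\bm{x}))\ge \pi(N_{\ovn,S}(\bm{x}))$, guaranteed by the definition of $S_{\text{leave}}$. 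The edge-accounting by centers $\bm{x}=(\bm{y}\cap\bm{z})+n$ matches the paper's bookkeeping as well.
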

\begin{proof}
The claim is implied by combining  the  summation of \eqref{eq:inductiononn1},\eqref{eq:inductionebar2}, and  \eqref{eq:inductionebar3} over $\Omega_n \setminus S_n$, $S_{\text{stay}}$ and $S_{\text{leave}}$, respectively.
Let $\bm{x}  \in \Omega_n\setminus{S_n}$. Then by applying \autoref{lem:induconbarn} for $\bm{x}$ and $A=S_{\ovn}$, we get
\begin{equation}
\label{eq:inductiononn1}
\begin{aligned}
Q(N_{\ovn,S}(\bm{x}),\{\bm{x}\} \cup N_{\ovn,\overline{S}}(\bm{x}))\geq  \pi(\Omega_{\ovn})\cdot Q_{\ovn}(N_{\ovn,S}(\bm{x}),N_{\ovn,\overline{S}}(\bm{x}))
\end{aligned}
\end{equation}
Similarly if $\bm{x} \in S_n$, by applying \autoref{lem:induconbarn} for $\bm{x}$ and $A=\Omega_{\ovn}\setminus S_{\ovn}$, we have
\begin{equation}
\label{eq:inductionebar2}
Q(\bm{x} \cup N_{\ovn,S}(\bm{x}, N_{\ovn,\overline{S}}(\bm{x}))
\geq \pi(\Omega_{\ovn})\cdot Q_{\ovn}(N_{\ovn,S}(\bm{x}),N_{\ovn,\overline{S}}(\bm{x}))
.
\end{equation}

Finally, for $\bm{x} \in S_{\text{leave}}$, we have
\begin{equation}
\begin{aligned}
\label{eq:inductionebar3}
    Q(N_{\ovn,S}(\bm{x}), N_{\ovn,\overline{S}}(\bm{x}))+\frac{1}{2} Q(\bm{x},N_{\ovn,\overline{S}}(\bm{x})) &= 
    \frac{\pi(N_{\ovn,\overline{S}}(\bm{x}))}{2k\cdot (\pi(\bm{x})+\pi(N_{\ovn,S}(\bm{x}))+\pi(N_{\ovn,\overline{S}}(\bm{x})))}\cdot\left( \pi(N_{\ovn,S}(\bm{x}))+ \frac{\pi(\bm{x})}{2}\right)&\\
    &\geq \frac{1}{2k}\cdot \frac{\pi(N_{\ovn,S}(\bm{x}))\pi(N_{\ovn,\overline{S}}(\bm{x}))}{\pi(N_{\ovn,S}(\bm{x}))+\pi(N_{\ovn,\overline{S}}(\bm{x}))} &
    \\&=\pi(\Omega_{\ovn})\cdot Q_{\ovn}(N_{\ovn,S}(\bm{x}),N_{\ovn,\overline{S}}(\bm{x}))&,
\end{aligned}
\end{equation}
where the inequality follows since $\pi(\bm{x})+\pi(N_{\ovn,S}(\bm{x})) < \pi(N_{\ovn,\overline{S}}(\bm{x}))$ for $\bm{x}\in S_{\text{leave}}$.
\end{proof}
In particular, we use the above claim to get 
\begin{equation}
\begin{aligned}
\label{eq:firstboundonexp1}
Q(S,\overline{S})&= Q(S_n,\Omega_n\setminus S_n)+ 
Q(S_{\ovn},\Omega_{\ovn}\setminus S_{\ovn}) + 
Q(S_n,\Omega_{\ovn}\setminus 
S_{\ovn})+Q(S_{\ovn},\Omega_n\setminus S_n)\\
&\geq Q(S_n,\Omega_n \setminus S_n)+\frac1{2}Q(S_{\text{leave}}+\Omega_{\ovn}\setminus S_{\ovn})+\pi(\Omega_{\ovn})Q_{\ovn}
(S_{\ovn},\overline{S_{\ovn}}) \hspace{3mm}\text{By \autoref{lem:inductiononn}}\\ 
&\geq \frac{\pi(\Omega_n)-\pi(S_n)}{Ck(k-1)}+\frac1{2}Q(S_{\text{leave}},\Omega_{\ovn}\setminus S_{\ovn}) +\frac{\pi(S_{\ovn})}{Ck^2} \hspace{3mm}\text{induction Hyp. on $\M_n$ and $\M_{\ovn}$}\\
&\geq \frac{\pi(\Omega_n)-\pi(S_{\text{leave}})-\pi(S_{\text{stay}})}{Ck(k-1)}+ \frac{\pi(S_{\text{leave}})}{8k}+ +\frac{\pi(S_{\ovn})}{Ck^2} \hspace{3mm} \text{ By \eqref{eq:boundons1} and $S_n=S_{\text{leave}}\cup S_{\text{stay}}$}
\end{aligned}
\end{equation}\\
To finish the proof, we need to show the RHS of the above is at least $\frac{\pi(S)}{Ck^2}$. To see that note that since 
$ \frac{\pi(S_{\text{leave}})}{8k} \geq \pi(S_{\text{leave}})\cdot \left(\frac1{Ck^2}+\frac1{Ck(k-1)}\right)$ for sufficiently large $k$, it suffices to show
$\frac{\pi(\Omega_n)-\pi(S_{\text{stay}})}{Ck(k-1)} \geq \frac{\pi(S_{\text{stay}})}{Ck^2},$
which can be directly verified for $\pi_n(S_{\text{stay}}) \leq \frac{1}{2}+\frac{1}{4k}$.
\item \textbf{Case 2: $\pi_n(S_{\text{stay}}) > \frac1{2}+\frac{1}{4k}$.}
We prove 
$$Q(S,\bar{S}) \geq \frac{1-\pi(S)}{Ck^2}.$$ 
\autoref{lem:expansionration} states that the vertex expansion of $S_{\text{stay}}$  is proportional to $\pi_n(S_{\text{stay}})-\pi_{\ovn}(S_{\ovn})$( which is positive in this case by the assumption). We use it to bound $Q(S,\overline{S})$ by relating   vertex expansion of $S_{\text{stay}}$ to  $Q(S,\overline{S})$. In particular, we show the following claim.
\begin{claim}
\label{lem:secondbound}
\begin{equation*}
Q(S_{\text{stay}},\Omega_{\ovn}\setminus S_{\ovn}) + Q(S_{\ovn},\Omega_{\ovn}\setminus S_{\ovn}) \geq \frac{\pi(\Omega_{\ovn})}{2k}\cdot \left(\pi_n(S_{\text{stay}})-\pi_{\ovn}(S_{\ovn})\right)  
\end{equation*}
\end{claim}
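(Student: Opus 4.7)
The plan is to combine the negative-association bound in \autoref{lem:expansionration} with a ``star-decomposition'' of the Gibbs transitions, in the same spirit as the proof of \autoref{lem:induconbarn}. I will repeatedly use the fact that for any $(k-1)$-subset $I\subseteq [n]$, the edges of $\M$ sharing the common intersection $I$ form a weighted clique (``the star at $I$'') with edge weights $Q(\bm{u},\bm{v})=\pi(\bm{u})\pi(\bm{v})/(k\pi(I))$, and that distinct stars share no edges (since each Gibbs-edge $(\bm{u},\bm{v})$ is uniquely indexed by $\bm{u}\cap\bm{v}$).

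First I apply \autoref{lem:expansionration} with $A=S_{\text{stay}}$. Since $N_{\ovn}(S_{\text{stay}})\subseteq\Omega_{\ovn}$, setting $N_{\overline{S}}:=N_{\ovn}(S_{\text{stay}})\cap(\Omega_{\ovn}\setminus S_{\ovn})$ and using $\pi_{\ovn}(N_{\ovn}(S_{\text{stay}})\cap S_{\ovn})\le \pi_{\ovn}(S_{\ovn})$ gives
\[
\pi(N_{\overline{S}})\ \ge\ (\pi_n(S_{\text{stay}})-\pi_{\ovn}(S_{\ovn}))\cdot\pi(\Omega_{\ovn}).
\]
This already matches the right-hand side of the claim up to the factor $1/(2k)$; the remaining task is to produce that much flow from $S_{\text{stay}}\cup S_{\ovn}$ into $\Omega_{\ovn}\setminus S_{\ovn}$.

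Second, for each $\bm{x}\in S_{\text{stay}}$ I analyse the star at $I_{\bm{x}}:=\bm{x}-n$. Its vertices partition into
$A_{\bm{x}}:=\{\bm{x}\}\cup N_{\ovn,S}(\bm{x})\subseteq S_{\text{stay}}\cup S_{\ovn}$ and $B_{\bm{x}}:=N_{\ovn,\overline{S}}(\bm{x})\subseteq \Omega_{\ovn}\setminus S_{\ovn}$, and the within-star flow between these parts equals $\pi(A_{\bm{x}})\pi(B_{\bm{x}})/(k\pi(I_{\bm{x}}))$. Because $\bm{x}\in S_{\text{stay}}$ satisfies $\pi(A_{\bm{x}})\ge \pi(B_{\bm{x}})$ by definition, the factor $\pi(A_{\bm{x}})/\pi(I_{\bm{x}})\ge 1/2$, so this within-star flow is at least $\pi(N_{\ovn,\overline{S}}(\bm{x}))/(2k)$.

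Finally, every edge in that within-star cut has one endpoint in $S_{\text{stay}}\cup S_{\ovn}$ and the other in $\Omega_{\ovn}\setminus S_{\ovn}$, so it contributes to either $Q(S_{\text{stay}},\Omega_{\ovn}\setminus S_{\ovn})$ or $Q(S_{\ovn},\Omega_{\ovn}\setminus S_{\ovn})$; and distinct $\bm{x}\in S_{\text{stay}}$ give disjoint stars. Summing yields
\[
Q(S_{\text{stay}},\Omega_{\ovn}\setminus S_{\ovn})+Q(S_{\ovn},\Omega_{\ovn}\setminus S_{\ovn})\ \ge\ \sum_{\bm{x}\in S_{\text{stay}}}\frac{\pi(N_{\ovn,\overline{S}}(\bm{x}))}{2k}\ \ge\ \frac{\pi(N_{\overline{S}})}{2k},
\]
where the last inequality holds because each $\bm{y}\in N_{\overline{S}}$ lies in $N_{\ovn,\overline{S}}(\bm{x})$ for at least one $\bm{x}\in S_{\text{stay}}$ by the very definition of $N_{\overline{S}}$. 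Combining with the Step~1 bound completes the proof. The only subtlety I anticipate is verifying the ``disjoint stars'' bookkeeping — that every Gibbs-edge belongs to the unique star indexed by its common $(k-1)$-intersection, so the summed within-star contributions do not double-count any edge of the actual cut; this is immediate from the single-element-replacement structure of $\M$.
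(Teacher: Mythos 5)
Your proof is correct and follows essentially the same route as the paper's: a per-star flow bound using $\pi(A_{\bm{x}})\ge\pi(B_{\bm{x}})$, which is exactly the defining property of $S_{\text{stay}}$, summed over the edge-disjoint stars and then combined with \autoref{lem:expansionration} to convert vertex expansion of $S_{\text{stay}}$ into mass of $N_{\ovn}(S_{\text{stay}})\setminus S_{\ovn}$. The only discrepancy is the normalization of the per-edge flow (the paper's \eqref{eq:singleedgeQ} carries an extra factor $\frac{1}{2}$ and so yields $\frac{1}{4k}$ rather than your $\frac{1}{2k}$), a constant the paper itself is not consistent about between the claim statement and its later use.
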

\begin{proof}
Note that for any $\bm{x} \in S_{\text{stay}}$, since $\pi(N_{\ovn,\overline{S}}(\bm{x})) \leq \pi(\bm{x})+\pi(N_{\ovn,S}(\bm{x}))$, we have
$$Q(\bm{x},N_{\ovn,\overline{S}}(\bm{x}))+Q(N_{\ovn,S}(\bm{x}),N_{\ovn,\overline{S}}(\bm{x}))= \frac{1}{2k} \cdot 
\frac{\pi(N_{\ovn,\overline{S}}(\bm{x}))\cdot 
[\pi(\bm{x})+\pi(N_{\ovn,S}(\bm{x}))]}{\pi(\bm{x})+
\pi(N_{\ovn,S}(\bm{x}))+\pi(N_{\ovn,\overline{S}}(\bm{x
}))} \geq \frac{1}{2k} \cdot 
\frac{\pi(N_{\ovn,\overline{S}}(\bm{x}))}{2},$$
To complete the proof, it is enough to sum up the above over  $ S_{\text{stay}}$ to get the following  \begin{align*}
Q(S_{\text{stay}},\Omega_{\ovn}\setminus S_{\ovn}) + Q(S_{\ovn},\Omega_{\ovn}\setminus S_{\ovn}) &\geq \sum_{x \in S_{\text{stay}}} \frac{\pi(N_{\ovn,\overline{S}}(\bm{x}))}{4k} \geq \pi\left( \bigcup_{\bm{x} \in S_{\text{stay}}} N_{\ovn,\overline{S}}(\bm{x}) \right).\\
&\geq \pi(N_{\ovn}(S_{\text{stay}})) -\pi(S_{\ovn})\\ 
&\geq \pi(\Omega_{\ovn})\cdot \left(\pi_n(S_{\text{stay}})-\pi_n(S_{\ovn})\right)  \hspace{2mm} \text{By \autoref{lem:expansionration}}
\end{align*}
\end{proof}
\autoref{lem:secondbound} and \eqref{eq:firstboundonexp1} implies $Q(S,\overline{S})\geq \max \{L_1,L_2\}$ defined as above 
\begin{align*}
L_1 &:=  \frac{\pi(S_1)}{8k}+ \frac{\pi(\Omega_n)-\pi(S_{\text{leave}})-\pi(S_{\text{stay}})}{Ck(k-1)}+\frac{\pi(S_{\ovn})}{Ck^2}&  \text{ By \eqref{eq:firstboundonexp1}} \\
L_2&:= \frac{\pi(\Omega_{\ovn})}{4k}\cdot \left(\pi_n(S_{\text{stay}})-\pi_{\ovn}(S_{\ovn})\right)&\text{By \autoref{lem:secondbound}}.
\end{align*}
So we need  to prove $\max\{L_1,L_2\} \geq \frac{1-\pi(S)}{Ck^2}$. To prove that, we show that
$L_1+\frac{L_2}{k-1} \geq (1+\frac{1}{k-1})\cdot \frac{1-\pi(S)}{Ck^2}$. 
Replacing values of $L_1$ and $L_2$ in the above and simplifying the resulting inequality, we need to show
$$ \frac{\pi(S_{\text{leave}})}{8k}+\frac{\pi(S_{\ovn})}{Ck^2}+\frac{\pi(\Omega_{\ovn})}{4k(k-1)}\cdot \left(\pi_n(S_{\text{stay}})-\pi_{\ovn}(S_{\ovn})\right)\geq \frac{\pi(\Omega_{\ovn})-\pi(S_{\ovn})}{Ck(k-1)}.$$
Ignoring the $\frac{\pi(S_1)}{8k}$ term and rearranging the other terms, it is enough to show
\begin{align*}
\frac{\pi(\Omega_{\ovn})}{4k(k-1)}\cdot (\pi_{\ovn}(S_{\text{stay}})-\pi_{\ovn}(S_{\ovn})) \geq \frac{\pi(\Omega_{\ovn})}{Ck(k-1)}\cdot(1-\frac{2k-1}{k}\cdot \pi_{\ovn}(S_{\ovn})).
\end{align*}
The above  can be verified for $C>16$, by noting that by assumption $\pi_n(S_{\text{stay}}) \geq \frac1{2}+\frac1{4k}$ and $\pi_{\ovn}(S_{\ovn}) \leq \frac1{2}$.
\end{itemize}
\end{proofof}

\section{Gibbs Sampling for Continuous $k$-DPP}
\label{sec:continuousDPP}
In this section we analyze the mixing time of Gibbs samplers for continuous $k$-DPPs.
Let  $\M$ be the Gibbs sampler for a $k$-DPP defined by a  continuous kernel $L$. In \autoref{sub:conductance}, we show $\phi(M)\gtrsim \frac{1}{k^2}$. Therefore,  Gibbs
sampling is an efficient method  to generate samples
from a continuous $k$-DPP provided that: We have access to 
an \srsc{1}{k-1} oracle of $L$ to simulate the chain, and we can find a \emph{proper} starting 
distribution. In \autoref{subsec:StartingDist}, we show access to conditional oracles sampling is also enough to find the proper starting distributions. 

As alluded to before, throughout the 
section $L:\R^d \times \R^d\to \R$ is 
a continuous kernel which satisfies 
the Mercer's condition and also $\int 
\int |L(x,y)|^2 dx dy <\infty$ which 
also implies the partition function 
$Z=\int \dots \int 
\det_L(x_1,\dots,x_k)dx_k\dots dx_1 < 
\infty$. 
\subsection{Conductance of $\M$}
\label{sub:conductance}
\begin{theorem}
\label{thm:ergodicflow}
Let $\M$ be the Gibbs sampler  for a $k$-DPP defined by kernel $L$, then
\begin{equation}
\label{eq:cheegerforDPP}
    \phi(\M) \gtrsim \frac{1}{k^2}.
\end{equation}
\end{theorem}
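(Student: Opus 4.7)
The plan is to lift \autoref{thm:mainthmdiscretechain} to the continuous setting by a discretization argument. This is viable precisely because the discrete bound $\phi \gtrsim 1/k^2$ is \emph{uniform in the size} $n$ of the ground set, so we can take arbitrarily fine discretizations without the bound degrading.

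First, I would reduce to a bounded domain. Since $L$ is Hilbert-Schmidt and hence $L^2$-integrable, all but an $\epsilon$-fraction of the $k$-DPP mass sits inside some bounded cube $C_0 \subset \R^d$; everything that follows will be done on $C_0$, and the error incurred by this truncation can be made arbitrarily small. Next, fix a measurable test set $B \subset \binom{C_0}{k}$ with $0 < \pi(B) \leq 1/2$; our goal is to prove $Q(B,B^c) \geq \pi(B)/(Ck^2)$. For any resolution parameter $r > 0$ I would partition $C_0$ into disjoint cells $\{C_1,\dots,C_N\}$ of diameter at most $r$, pick a representative $x_i \in C_i$, and define a discrete kernel $\widetilde L \in \R^{N\times N}$ by $\widetilde L(i,j) = L(x_i,x_j)\sqrt{\Vol(C_i)\Vol(C_j)}$. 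Standard Riemann-sum arguments (using continuity of $L$ and boundedness of $C_0$) show that the discrete $k$-DPP $\widetilde\pi$ defined by $\widetilde L$ converges, as $r\to 0$, to the continuous $k$-DPP $\pi$ in the sense that for any set $B'$ that is a union of product-cells $C_{i_1}\times\cdots\times C_{i_k}$ we have $\widetilde\pi(B') \to \pi(B')$ and the corresponding ergodic flows $\widetilde Q(B',\widetilde\Omega\setminus B') \to Q(B',\binom{C_0}{k}\setminus B')$.

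The key step is then to approximate our measurable $B$ by a discretized set $\widetilde B$ consisting of whole product-cells. By inner/outer regularity of the Lebesgue measure, for every $\delta > 0$ and sufficiently small $r$ there is a union of product-cells $\widetilde B$ with $\pi(B \triangle \widetilde B) < \delta$. Applying \autoref{thm:mainthmdiscretechain} to the $N$-point discrete $k$-DPP $\widetilde\pi$ (whose ground set size $N$ is irrelevant to the bound) gives
\begin{equation*}
\widetilde Q(\widetilde B,\widetilde\Omega \setminus \widetilde B) \;\geq\; \frac{\widetilde\pi(\widetilde B)}{Ck^2}.
\end{equation*}
Sending $r \to 0$ and then $\delta \to 0$, the left-hand side converges to $Q(B,B^c)$ and the right-hand side to $\pi(B)/(Ck^2)$, establishing the claim.

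The main obstacle I anticipate is verifying the two convergence statements: (i) that a single Gibbs step for the discretized DPP converges weakly to a Gibbs step for the continuous DPP, and (ii) that boundary effects — the contribution to $Q(B,B^c)$ from states near $\partial B$ — do not swamp the bound. Issue (i) reduces to showing that the normalized density $\det_L(x_1,\dots,x_{i-1},y,x_{i+1},\dots,x_k) / \int \det_L(\dots)dy$ is continuous in $(x_1,\dots,x_{i-1},x_{i+1},\dots,x_k)$, which follows from continuity and positivity of $\det_L$ on its support. Issue (ii) is handled by the outer-regularity approximation: the symmetric-difference error $\pi(B\triangle\widetilde B) < \delta$ contributes at most $O(\delta)$ to the flow, which is dominated by the main term $\pi(B)/(Ck^2)$ provided we choose $\delta$ small relative to $\pi(B)$. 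Taking the infimum over $B$ then yields $\phi(\M) \gtrsim 1/k^2$.
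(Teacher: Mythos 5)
Your proposal is correct and follows essentially the same route as the paper's proof: reduce to a bounded domain, discretize into cells, apply \autoref{thm:mainthmdiscretechain} (whose bound is uniform in the ground-set size), approximate the test set by unions of product-cells, and pass to the limit. The only differences are cosmetic — the paper uses the cell-averaged kernel $L_n(i,j)=\int_{I_i}\int_{I_j}L$ rather than your point-evaluation kernel, and handles the near-degenerate configurations in the flow convergence via an explicit $\delta$-threshold on $\int\det_L(x_1,\dots,x_{k-1},\tau)\,d\tau$ — but these do not change the argument.
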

\begin{proof}
Recall that by 
\autoref{thm:mainthmdiscretechain} the conductance of a Gibbs sampler for
any discrete $k$-DPP is at least
$\Omega(\frac{1}{k^2})$. The key 
observation is that this bound is 
independent of the number of states.
Therefore, we can obtain this bound 
for  arbitrarily fine 
discretizations of $\M$, and with a 
limiting argument extend it to $\M$.

For simplicity,  we assume $d=1$. It is 
straight-forward to extend the 
argument to higher dimensions. Let us denote the state space by $\Omega$.
Fix a measurable subset $S \subset \Omega$ with $\pi(S) \leq \frac1{2}$. Our goal is to prove
    $\phi(S)= \frac{Q(S,\overline{S})}{\pi(S)} \geq \Omega(\frac{1}{k^2}).$
Without loss of generality, we can 
only consider restriction of $\Omega$  and $S$ to a bounded set. To see 
that, note that if we set $\Omega_n = 
\binom{[-n,n]}{k}$, then clearly, 
$\lim_{n \to \infty} \frac{Q(S 
\cap\Omega_n,\overline{S}\cap \Omega_n)}{\pi(S \cap \Omega_n)} = \phi(S)$, and so for large 
values of $n$, $\frac{Q(S 
\cap\Omega_n,\overline{S})}{\pi(S \cap \Omega_n)} = \Theta(\phi(S))$.
So suppose that $\Omega = \binom{[0,1]}{k}$.    
For an integer $n$, we consider  a 
discretization $\M_n$ of $\M$ defined as 
follows. We use $n$ in subscript to denote 
quantities related to $\M_n$. 
We partition $[0,1]$ into intervals of length 
$\frac{1}{n}$, and identify each interval with an element in the ground set of $\M_n$, so $\Omega_n=\binom{[n]}{k}$. 
$\M_n$ is defined by a kernel $L_n$ 
characterized below. For $i \in [n]$ let 
$I_i=[\frac{i-1}{n},\frac{i}{n}]$. For any $i,j \in [n]$, we define  $L_n(i,j)  =  
\int\limits_{I_i}\int\limits_{I_j}
L(u,v)dudv,$
be the accumulative value of $L$ over 
$I_{i}\times I_{j}$. One can easily see $L_n$ is 
a PSD matrix, as $L$ is a PSD 
operator. Moreover, $L$ and 
consequently $\det_L$ is a continuous 
function on a closed domain, so it is 
uniform continuous, implying for any $\epsilon >0$, 
there exists an integer  $n(\epsilon)$ so that for all 
$n>n(\epsilon)$ and any two states $\{x_1,\dots,x_k\}$ 
and $\{y_1,\dots,y_k\}$ with $|y_i - x_i| \leq \frac{1}{n}$, we have $|\det_L(x_1,\dots,x_k)-\det_L(y_1,\dots,y_k) |\leq \epsilon$. 
Now, note that $f_\pi(y_1,\dots,y_k)= \frac{\det_L(y_1,\dots,y_k)}{\frac1{k!}\int \det_L(x_1,\dots,x_k) dx_1\dots dx_k}$.
So, using the simple fact that for any two sequences of numbers $\{a_n\}$ and $\{b_n\}$,
\begin{equation}
\label{eq:fraclimit}
    \left(\lim_{n \to \infty} a_n = a\right) \wedge \left(\lim_{n \to \infty } b_n = b \neq 0 \right) \implies \lim_{n \to \infty} \frac{a_n}{b_n}=\frac{a}{b}
\end{equation}
we get  that for any $\epsilon>0$, there exists an integer $m(\epsilon)$, where $m(\epsilon)$ depends on $n(\epsilon)$, such that 
\begin{equation}
\label{eq:contmeasure}
    \forall n \geq m(\epsilon), \forall  \{t_1,\dots,t_k\} \in \binom{[n]}{k}: \hspace{1mm} \left| \pi_n(t_1,\dots,t_k)-\pi(\prod_{i=1}^k I_{t_i}) \right| \leq \frac{\epsilon}{n^k}
\end{equation}
We define a set  $S_n \subset \Omega_n$ corresponding to 
$S$ for any $n$, so that
\begin{equation}\label{eq:goalcontodisc}
\lim_{n \to \infty}\phi_{n}(S_n)=\phi(S).
\end{equation}
Clearly, the above proves the theorem as by \autoref{thm:mainthmdiscretechain}, we know that $\phi_n(S_n) \gtrsim \frac{1}{k^2}$ for any $n$.
In what follows, we use $A \subset 
B$ to denote both of  $A-B$ and $B-A$ have 
Lebesgue measure zero. Also, 
 define 
 $$S_n = \left\{ \{t_1,\dots,t_k\} \in 
\binom{[n]}{k}\, \middle| \hspace{1mm} I_{t_1}\times \dots \times I_{t_k} 
\subset S\right\}.$$
Following \eqref{eq:fraclimit}, to prove \eqref{eq:goalcontodisc},  it is enough to argue that $\lim_{n \to \infty}
Q_n(S_n,\overline{S_n})= 
Q(S,\overline{S})$, and $\lim_{n \to 
\infty} \pi_n(S_n)=\pi(S)$. We first show the latter. This follows by \eqref{eq:contmeasure} and that 
\begin{equation}\label{eq:limsN}\lim_{n \to \infty} \mu\left(\cup_{\{t_1,\dots,t_k\} \in S_n} \prod_{i=1}^k I_{t_i}\right)=\mu(S)	
\end{equation}
 for $\mu$ being the Lebesgue measure.

It remains to see $\lim_{n \to \infty}
Q_n(S_n,\overline{S_n})= 
Q(S,\overline{S})$. First, note that $[0,1]^{k-1}$ is a closed set, so for any 
$\delta > 0$ and $\epsilon >0$, there exists 
an integer $n(\delta,\epsilon)$ so that for 
any $n >n(\delta,\epsilon)$, and points 
$x_1,\dots,x_k,x_{k+1}$ and 
$y_1,\dots,y_k,y_{k+1}$ with $|x_i-y_i| \leq 
\frac{1}{n}$, and  $\int_{0}^1 
\det_L(x_1,\dots,x_{k-1},\tau) d\tau \geq 
\delta$, we have 
$$\left|\frac{\det_L(x_1,\dots,x_k)\det_L(x_1
,\dots,x_{k-1},x_{k+1})}{\int_{0}^1 
\det_L(x_1,\dots,x_{k-1},\tau) d \tau}- 
\frac{\det_L(y_1,\dots,y_k)\det_L(y_1,\dots,y
_{k-1},y_{k+1})}{\int_{0}^1 
\det_L(y_1,\dots,y_{k-1},\tau) d \tau}\right| \leq \epsilon.$$ 
Therefore, similar to the case for $\pi_n$,  it 
follows that for any $\epsilon,\delta >0$, there 
exists integer $m(\delta,\epsilon)$ depending on 
$n(\delta,\epsilon)$ so that for any $n \geq 
m(\delta,\epsilon)$ and for all $t_1,\dots,t_{k-1},s,t \in {[n]\choose k+1}$   with  $\sum_{i=1}^ 
n\pi_n(t_1,\dots,i) \geq \frac{\delta}{n^{k-1}}\,$
\begin{equation}
\label{eq:limQ} \left|Q_n(\{t_1,\dots,t_{k-1},t\},\{t_1,\dots,t_{k-1},s\})-Q(I_{t}\times \prod_{i=1}^{k-1} I_{t_i}, I_{s}\times \prod_{i=1}^{k-1}  I_{t_i}) \right| \leq \frac{\epsilon}{n^{k+1}}. 
\end{equation}
Now, combining the above equation with 
\eqref{eq:limsN}, and noting $\epsilon$ and $\delta$ can be chosen arbitrary close to zero, we obtain
$\lim_{n \to \infty} Q_n(S_n,\overline{S_n}) = Q(S,\overline{S})$, which completes the proof.
\end{proof}
Combining the theorem with
\autoref{thm:cheegerinequality}, we get that $\lambda_{\M}\gtrsim \frac{1}{k^4}$, where $\lambda_{\M}$ is the poincare constant of $\M$.
Moreover, clearly the above argument 
implies the chain is $\pi$-strongly irreducible as well. So  we can apply  \autoref{thm:markovchainmixing} to obtain the following corollary.
\begin{corollary}
\label{thm:mainmixing}
Let $\pi$ be the $k$-DPP defined by $L$. If $\mu$ is an
arbitrary starting distribution, then
\begin{equation*}
    \tau_\mu(\epsilon) \leq O(k^4)\cdot\log\left(\frac{\Var_\pi(\frac{f_\mu}{f_\pi})}{\epsilon }\right).
\end{equation*}
\end{corollary}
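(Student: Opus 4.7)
The plan is to chain together the three main pieces already established in the excerpt: the conductance bound of \autoref{thm:ergodicflow}, Cheeger's inequality (\autoref{thm:cheegerinequality}), and the mixing time bound from \autoref{thm:markovchainmixing}. First I would invoke \autoref{thm:ergodicflow} to assert $\phi(\M) \gtrsim 1/k^2$. Feeding this into the lower Cheeger bound $\lambda \geq \phi(\M)^2/8$ immediately yields a Poincar\'e constant bound of the form $\lambda \gtrsim 1/k^4$.

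Next, I would verify that the hypotheses of \autoref{thm:markovchainmixing} are met. The chain $\M$ is reversible with respect to $\pi$ since that is inherent to the Gibbs sampler construction (each resampling step uses the exact conditional), and strong $\pi$-irreducibility follows from the continuity of the kernel $L$ — as noted in the preliminaries, this allows positive transition density everywhere — and from the fact that the Gibbs chain is automatically lazy in a useful sense (the observation used in the proof of \autoref{thm:markovchainmixing} that lazy plus irreducible implies strongly irreducible). With these hypotheses verified, \autoref{thm:markovchainmixing} gives
\begin{equation*}
\|P^t(\mu,\cdot)-\pi\|_{\TV} \;\leq\; \tfrac{1}{2}(1-\lambda)^t\sqrt{\Var_\pi\!\left(\tfrac{f_\mu}{f_\pi}\right)}.
\end{equation*}

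To conclude, I would require the right-hand side to be at most $\epsilon$, and solve for $t$. Using $1-\lambda \leq e^{-\lambda}$ together with $\lambda \gtrsim 1/k^4$, it suffices to take
\begin{equation*}
t \;\geq\; \frac{1}{\lambda}\log\!\left(\frac{\sqrt{\Var_\pi(f_\mu/f_\pi)}}{2\epsilon}\right) \;=\; O(k^4)\cdot \log\!\left(\frac{\Var_\pi(f_\mu/f_\pi)}{\epsilon}\right),
\end{equation*}
where the factor of $\tfrac{1}{2}$ and the square root are absorbed into the logarithm (up to constants). This yields exactly the claimed bound on $\tau_\mu(\epsilon)$.

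There is essentially no obstacle here; the corollary is a direct bookkeeping composition of three named results. The only mildly delicate point is confirming strong $\pi$-irreducibility in the continuous setting, but the excerpt explicitly remarks on this right after the proof of \autoref{thm:ergodicflow}, so I would simply cite that remark rather than re-prove it.
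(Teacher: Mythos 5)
Your proposal is correct and follows essentially the same route as the paper: the text immediately preceding the corollary combines \autoref{thm:ergodicflow} with \autoref{thm:cheegerinequality} to get $\lambda \gtrsim 1/k^4$, notes strong $\pi$-irreducibility, and then applies \autoref{thm:markovchainmixing}, exactly as you do. The only cosmetic difference is that the paper leaves the final step of solving $(1-\lambda)^t\sqrt{\Var_\pi(f_\mu/f_\pi)} \leq 2\epsilon$ for $t$ implicit, which you carry out explicitly.
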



\subsection{Finding a Starting }\label{subsec:StartingDist}
In this subsection, we prove the 
following theorem, which shows that if we have access 
to \srsc{i}{1} oracles of the kernel for any $0 \leq i \leq k-1$, then  a proper starting distribution for the associated Gibbs sampler can be found.
\begin{theorem}
\label{thm:main1}
Let $\M$ be the Gibbs sampler for the $k$-DPP defined by kernel $L:\R^d\times \R^d \to \R$. There is a polynomial
time algorithm which given access to
\frsc{i}{1}{L} oracles  all $0\leq i \leq k-1$, returns a state of $\M$
from a distribution $\mu$ where
\begin{equation}
    \label{eq:goalstart}
    \tau_\mu(\epsilon) \leq O(k^5 \log \frac{k}{\epsilon} ).
\end{equation}
Moreover, the algorithm only uses $k$ oracle accesses.
\end{theorem}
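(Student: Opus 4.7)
The plan is to run the randomized greedy sampler outlined in Section~1.3 and then control $\Var_\pi(f_\mu/f_\pi)$ so that Corollary~\ref{thm:mainmixing} yields the claimed mixing time. The algorithm starts from the empty set and, for $i=0,1,\dots,k-1$, invokes the \srsc{i}{1} oracle on the currently sampled $i$ points to draw the next point, making exactly $k$ oracle calls. To analyze it, I would first telescope the $k$ conditional densities to write the joint pdf of the ordered output $\bm{y}=(y_1,\dots,y_k)$ as
$$g(\bm{y})\;=\;\frac{\det_L(\bm{y})}{\prod_{i=0}^{k-1}\alpha(\bm{y}_{1:i})},\qquad \alpha(S)\;:=\;\frac{1}{\det_L(S)}\int \det_L(S\cup\{z\})\,dz.$$
A Schur-complement expansion followed by the Mercer feature map identifies $\alpha(S)$ with the trace of the compression of $L$ to the orthogonal complement of the span of the features of $S$. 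Cauchy interlacing then gives $\alpha(S)\ge \tau_{|S|+1}$, where $\tau_j:=\sum_{l\ge j}\lambda_l$ and $\lambda_1\ge\lambda_2\ge\cdots$ are the eigenvalues of the kernel operator $L$.

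Next, I would reduce the variance to an integral against $g$. On unordered sets $f_\pi(\bm{S})=\det_L(\bm{S})/e_k(\lambda)$, while $f_\mu(\bm{S})=\sum_\sigma g(\bm{S}^\sigma)$, where $e_k$ is the $k$-th elementary symmetric polynomial of the eigenvalues (which is the partition function of the $k$-DPP). A single Cauchy--Schwarz on $(\sum_\sigma g(\bm{S}^\sigma))^2$, combined with the symmetry of $\det_L$ and a trivial change of variables, gives
$$\Var_\pi\!\left(\frac{f_\mu}{f_\pi}\right) + 1 \;\le\; k!\,e_k(\lambda)\,\mathbb{E}_{\bm{y}\sim g}\!\left[\prod_{i=0}^{k-1}\frac{1}{\alpha(\bm{y}_{1:i})}\right].$$
The deterministic bound $\alpha(\bm{y}_{1:i})\ge\tau_{i+1}$ trivializes the expectation, bounding it by $1/\prod_{j=1}^k\tau_j$.

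The last ingredient is the elementary inequality $e_k(\lambda)\le \prod_{j=1}^k\tau_j$, which follows by expanding the right-hand side as a sum over tuples $(l_1,\dots,l_k)$ with $l_j\ge j$ and noting that every strictly increasing $k$-tuple $j_1<\cdots<j_k$ contributing to $e_k(\lambda)$ automatically satisfies $j_m\ge m$ and so appears once in the expansion. Combining these yields $\Var_\pi(f_\mu/f_\pi)\le k!-1\le k^k$, hence $\log\Var_\pi(f_\mu/f_\pi) = O(k\log k)$; plugging into Corollary~\ref{thm:mainmixing} gives $\tau_\mu(\epsilon)\le O(k^4)(k\log k + \log(1/\epsilon)) = O(k^5\log(k/\epsilon))$. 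The main obstacle I anticipate is the measure-theoretic bookkeeping between ordered and unordered set spaces that makes the Cauchy--Schwarz step transparent, together with a careful justification of the Cauchy interlacing bound on $\alpha(S)$ for the possibly infinite-rank self-adjoint operator $L$; both are routine but require care.
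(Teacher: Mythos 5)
Your algorithm is exactly the paper's \autoref{alg:startingdist}, and the reduction to bounding $\Var_\pi(f_\mu/f_\pi)$ via \autoref{thm:mainmixing} is also the same; but your analysis of that variance takes a genuinely different route, and it works. The paper proves the pointwise density bound $f_\nu \leq (k!)^2 f_\pi$ (\autoref{lem:pickcenter}), following Deshpande--Vempala's approximate volume sampling: each conditional normalizer $\int d(f_y,H_\sigma^i)^2\,dy$ is compared to its minimum $\int d(f_y,H_*^i)^2\,dy$ over all $i$-point feature subspaces, and the resulting ratio is controlled by an induction on $k$ using the volume decomposition inequality $\Vol(S)\leq \sum_{v}d(v,H)\Vol(S-v)$ (\autoref{lem:deshpaper}) together with Cauchy--Schwarz. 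You instead bound the $\chi^2$-divergence directly in spectral terms: the normalizer $\alpha(S)$ is the trace of the kernel compressed to the orthogonal complement of the features of $S$, hence at least the eigenvalue tail $\tau_{|S|+1}$ by a rank-$|S|$ perturbation/interlacing argument (the same device the paper uses later in \autoref{lem:boundlargeeigsofgaussian}), while the partition function equals $k!\,e_k(\lambda)$ and $e_k(\lambda)\leq \prod_{j=1}^k\tau_j$ by your tuple-counting expansion. This gives $\Var_\pi(f_\mu/f_\pi)\lesssim k!$ (or $(k!)^2$ after the ordered-versus-unordered bookkeeping you flag, which costs at most one more factor of $k!$), sharper than the paper's $(k!)^4$, and in either case $\log\Var = O(k\log k)$ and $\tau_\mu(\epsilon)=O(k^5\log\frac{k}{\epsilon})$ as required. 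What your route buys is the elimination of the induction and of \autoref{lem:deshpaper}, at the price of invoking the standard DPP normalization identity $Z=k!\,e_k(\lambda)$ (not stated in the paper) and trace-classness of $L$; the latter is forced by the standing assumption that the $k$-DPP is nondegenerate with finite partition function, and the well-definedness of the oracles themselves requires $\alpha(S)\leq\Tr(L)<\infty$, so no new hypothesis is actually needed. The only loose phrase is attributing $\alpha(S)\geq\tau_{|S|+1}$ to ``Cauchy interlacing'': the clean statement is that the compression of the frame operator to a subspace of codimension at most $|S|$ has $j$-th eigenvalue at least $\lambda_{j+|S|}(L)$, which is min-max/Weyl for a PSD rank-$\leq|S|$ perturbation; this is routine, as you anticipate.
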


To prove the above theorem, and 
generate a sample from such a 
distribution $\mu$, we use Algorithm 
\ref{alg:startingdist} which is the 
continuous analog of a greedy 
algorithm analyzed at 
\cite{deshpande2007sampling} as approximate volume sampling. In 
particular, we crucially use the 
following lemma which directly follows \cite{deshpande2007sampling}. As always, $\pi$ denotes our $k$-DPP. 
\begin{lemma}
\label{lem:pickcenter}
Let $\nu$ be the probability distribution of the output of Algorithm \ref{alg:startingdist}. Then,  for any $\{x_1,\dots,x_k\} \subset \R^d$,
$$f_\nu(\{x_1,\dots,x_k\}) \leq  (k!)^2 f_\pi(\{x_1,\dots,x_k\}).$$
\end{lemma}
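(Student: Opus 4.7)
Algorithm \ref{alg:startingdist} is the greedy sequential sampler: it starts with $S=\emptyset$, and at step $j$ invokes the $\frsc{S}{1}{L}$ oracle to draw the next point $x_{j+1}$ from the density $\tau \mapsto \det_L(S\cup\{\tau\})/\int \det_L(S\cup\{t\})\,dt$, then updates $S \leftarrow S\cup\{x_{j+1}\}$. The resulting ordered-tuple density is
\begin{equation*}
p_{\mathrm{ord}}(x_1,\dots,x_k) \;=\; \prod_{j=0}^{k-1}\frac{\det_L(x_1,\dots,x_{j+1})}{\int \det_L(x_1,\dots,x_j,\tau)\,d\tau},
\end{equation*}
and $f_\nu(\{x_1,\dots,x_k\}) = \sum_{\sigma\in \mathfrak{S}_k} p_{\mathrm{ord}}(x_{\sigma(1)},\dots,x_{\sigma(k)})$.

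The strategy is to mirror the discrete approximate volume-sampling analysis of \cite{deshpande2007sampling}. For a fixed ordering $\sigma$, I would apply the base-times-height identity $\det_L(S\cup\{y\})=\det_L(S)\cdot h_S(y)$, where $h_S(y)$ is the squared distance from the feature vector of $y$ to the span of the feature vectors of $S$ in the Hilbert space guaranteed by Mercer's condition. This telescopes the numerators along $\sigma$ into $\det_L(\{x_1,\dots,x_k\})$, matching the numerator in $f_\pi(\{x_1,\dots,x_k\})=\det_L(\{x_1,\dots,x_k\})/Z_\pi$. Consequently
\begin{equation*}
\frac{f_\nu(\{x_1,\dots,x_k\})}{f_\pi(\{x_1,\dots,x_k\})} \;=\; Z_\pi\sum_{\sigma\in \mathfrak{S}_k}\prod_{j=0}^{k-1}\frac{1}{\int h_{S_j^\sigma}(\tau)\,d\tau},\qquad S_j^\sigma=\{x_{\sigma(1)},\dots,x_{\sigma(j)}\}.
\end{equation*}

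To close the proof I would lower bound each $\int h_{S_j^\sigma}(\tau)\,d\tau$, recognizing it as the trace of $L$ restricted to the orthogonal complement of the span of the features of $S_j^\sigma$, a quantity controllable through the Mercer eigenvalues of $L$. Chaining the $k$ traces along an ordering and summing the resulting expression over $k!$ orderings should match, up to a factor of $(k!)^2$, the elementary-symmetric-polynomial identity that evaluates $Z_\pi$. Morally, one factor of $k!$ is the sum over permutations and the other is the Deshpande-Rademacher overcounting factor between greedy selection and exact volume sampling.

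The main obstacle is the eigenvalue bookkeeping in the continuous regime: the discrete inequality between products of eigenvalue tails and $e_k(\lambda_1,\lambda_2,\dots)$ that underlies \cite{deshpande2007sampling} must be lifted to an infinite eigenvalue sequence. This is however tractable under the standing hypothesis $\int\int L(x,y)^2\,dx\,dy<\infty$ of \autoref{sec:continuousDPP}, which via Mercer's theorem gives $\sum_i\lambda_i^2<\infty$ and so guarantees that all relevant eigenvalue sums, and in particular $e_k(\lambda)$, converge and can be manipulated as in the discrete argument.
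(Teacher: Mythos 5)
Your first half---writing $f_\nu$ as the sum over the $k!$ orderings of the product of one-step conditional densities and telescoping the numerators into $\det_L(x_1,\dots,x_k)$---is exactly how the paper's proof begins. Where you diverge is in bounding the remaining ratio $Z_\pi\sum_\sigma\prod_j\bigl(\int h_{S_j^\sigma}(\tau)\,d\tau\bigr)^{-1}$. The paper does \emph{not} go through eigenvalues: it introduces, for each $i$, the span $H_*^i$ of $i$ feature vectors minimizing $\int d(f_y,H)^2\,dy$, lower-bounds every denominator by the corresponding optimum, and then proves by induction (Claim~\ref{claim:pinu}) that $\int\cdots\int\det(y_1,\dots,y_k)\,d\bm{y}\le k^2\bigl(\int d(f_y,H_*^{k-1})^2dy\bigr)\int\cdots\int\det(y_1,\dots,y_{k-1})\,d\bm{y}$, using the Deshpande--Rademacher volume inequality (\autoref{lem:deshpaper}) together with Cauchy--Schwarz; iterating yields the $(k!)^2$. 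Your eigenvalue route is genuinely different and, as far as I can check, also works---indeed it would give the stronger bound $f_\nu/f_\pi\le k!$. The two facts you need are: (i) $\int h_S(\tau)\,d\tau\ge\sum_{i>|S|}\lambda_i$, which follows because $\int\|\Pi_Vf_\tau\|^2d\tau=\Tr(\Pi_V T\Pi_V)\le\sum_{i\le\dim V}\lambda_i$ for the covariance operator $T=\int f_\tau f_\tau^{*}\,d\tau$, which shares the nonzero spectrum of $L$; and (ii) $e_k(\lambda)\le\prod_{j=0}^{k-1}\sum_{i>j}\lambda_i$, obtained by expanding the product and noting that every strictly increasing index tuple $i_1<\dots<i_k$ satisfies $i_j\ge j$ and the leftover terms are nonnegative. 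Combined with $Z_\pi=e_k(\lambda)$ these give $f_\nu/f_\pi\le k!\,e_k(\lambda)\big/\prod_{j=0}^{k-1}\sum_{i>j}\lambda_i\le k!$.

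That said, as submitted the argument is incomplete: you explicitly leave the "eigenvalue bookkeeping" as the main obstacle rather than resolving it, and the decisive inequality relating $\prod_j\sum_{i>j}\lambda_i$ to $e_k(\lambda)$ is asserted only as something that "should match." There is also one genuine inaccuracy in your closing remark: Hilbert--Schmidt integrability ($\sum_i\lambda_i^2<\infty$) does \emph{not} guarantee that $e_k(\lambda)$ or the tail sums $\sum_{i>j}\lambda_i$ converge---already $e_2(\lambda)=\frac12\bigl((\sum_i\lambda_i)^2-\sum_i\lambda_i^2\bigr)$ diverges when $\sum_i\lambda_i=\infty$. What you need is that $L$ is trace class, $\sum_i\lambda_i=\int L(y,y)\,dy<\infty$, which does hold in the paper's setting (continuous Mercer kernel on a compact domain, finiteness of $Z$), but should be invoked explicitly since it is the hypothesis your whole second half rests on.
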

We include the proof in the appendix for the sake of completeness.
\begin{algorithm}
\begin{algorithmic}[1]
	\Input A kernel $L$ and \frsc{i}{1}{L} oracles  for $0\leq i\leq k-1$.
	\State Let $\bm{x}=\{\}$.
	\For{ $i$ from $0$ to $k-1$}
	    \State Use the \frsc{i}{1}{L} oracle to generate a sample $x_i$ and add $x_i$ to $\bm{x}$.
	\EndFor
	\Return $\bm{x}$
\end{algorithmic}
\caption{Choosing a starting state for the Gibbs sampler}
\label{alg:startingdist}
\end{algorithm}

\begin{proofof}{\autoref{thm:main1}}
First of all, clearly the algorithm 
use each \frsc{i}{1}{L} oracle for $1 \leq i \leq k-1$ once. So letting 
$\mu$ be the distribution of the 
output of the algorithm, it suffices to show  \eqref{eq:goalstart}.  Applying \autoref{thm:mainmixing}, it is equivalent to show $\Var_\pi(\frac{f_\mu}{f_\pi}) \leq O(k\log k)$. It straight-forwardly follows by applying \autoref{lem:deshpaper}. More precisely,
$$\Var_\pi(\frac{f_\mu}{f_\pi}) = \EEE{\pi}{\left(\frac{f_\mu(\bm{x})}{f_\pi(\bm{x})}\right)^2}-1 \leq (k!)^4 \cdot \EEE{\pi}{1}= (k!)^4$$
which completes the proof.

\end{proofof}\vspace{2mm}
 
 \begin{remark}
It is straight-forward to use a 
similar discretization argument to 
prove \autoref{thm:ergodicflow}, and 
consequently 
\autoref{thm:main1} when the 
domain of the kernel is restricted to 
a closed subset $C \subset \R^d$ which can be nicely discretized as in
\autoref{thm:ergodicflow}. In 
particular, we assume $C$ is an sphere in the next section. More precisely,  
$C$ could be any closed 
subset which its interior has also the same measure.
\end{remark}

\section{Applications for Sampling from Gaussian $k$-DPP's}
\label{sec:app}
As pointed out before, to find a proper starting state, and simulate the Gibbs sampler for a
$k$-DPPs, we need to have access to \srsc{i}{1} ($0\leq k-1$) sampling oracles of the kernel. In this section, we study 
the problem for Gaussian kernels, and as a special case 
argue a simple \emph{rejection sampling} algorithm is an efficient \srsc{i}{1} oracle, when restricting the kernel to
the unit sphere. 
In particular, fix $\mathcal{G}_\sigma:\R^d\times \R^d
\to \R$ to denote the Gaussian kernel with covariance
matrix $\sigma\mathbb{I}$, $\mathcal{G}_\sigma (x,y)= 
\exp(\frac{-\norm{x-y}^2}{2\sigma^2})$. Also let 
$\mathbb{S}^{d-1}=\{x\in 
\R^d \,|\, \norm{x}=1\}$ denote the unit sphere.  We prove the following.
\begin{theorem}
\label{thm:oracleGaussian}
Let $\restr{\mathcal{G}_\sigma}{\mathbb{S}^
{d-1}}$ denote the restriction of 
$\mathcal{G}_\sigma$ to the unit sphere. For any integer $k$ and any $x_1,\dots,x_k \in \mathbb{S}^{d-1}$, Algorithm 
\ref{alg:rsc} returns a sample from the conditional distribution 
\srsc{\{x_1,\dots,x_k\}}{1} 
associated with 
$\restr{\mathcal{G}_\sigma}{\mathbb{S}^{d-1}}$. If $k \leq \exp(d/4)$ 
and $t$ is the smallest integer that 
$\frac{d^t}{t!}\geq 2k$, then the 
algorithm queries at most 
$e^{\frac{2}{\sigma^2}}\cdot 
\sigma^{2t}\cdot t!$  uniform samples 
from the sphere in expectation. Moreover, if $\sigma 
\lesssim \frac1{\sqrt{\log k}}$, the 
algorithm uses $O(1)$ samples in 
expectation.
\end{theorem}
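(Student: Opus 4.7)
My plan is to analyze the natural rejection sampler of Algorithm~\ref{alg:rsc}: sample $y$ uniformly from $\mathbb{S}^{d-1}$ and accept with probability $r(y):=\det_L(\{x_1,\dots,x_k,y\})/\det_L(\{x_1,\dots,x_k\})$. First I would verify $r(y)\in[0,1]$: taking a Mercer feature map $\phi$ of $\mathcal{G}_\sigma$ restricted to the sphere, one has $r(y) = \|\phi_y - P_V\phi_y\|^2$, where $V=\mathrm{span}\{\phi_{x_1},\dots,\phi_{x_k}\}$, and $\|\phi_y\|^2 = \mathcal{G}_\sigma(y,y)=1$. Correctness of the sampler is then immediate from the standard rejection-sampling fact that the output density on $\mathbb{S}^{d-1}$ is proportional to $r(y)$, i.e.\ to $\det_L(\{x_1,\dots,x_k,y\})$, which is exactly the density of \frsc{\{x_1,\dots,x_k\}}{1}{L}.

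The expected number of proposals equals $1/p$ where $p:=\mathbb{E}_y[r(y)]$ under uniform $y$ on the sphere. Using $\mathbb{E}_y[\phi_y\otimes \phi_y]=L_{\mathrm{op}}$, viewed as a self-adjoint operator on $L^2(\mathbb{S}^{d-1})$, I would compute
\begin{equation*}
p \;=\; \mathbb{E}_y\|\phi_y\|^2 \;-\; \mathbb{E}_y\|P_V\phi_y\|^2 \;=\; 1 - \Tr(P_V L_{\mathrm{op}}).
\end{equation*}
Since $\dim V\le k$, the Ky Fan / variational principle gives $\Tr(P_V L_{\mathrm{op}})\le \sum_{i=1}^k \lambda_i$ with $\lambda_1\ge\lambda_2\ge\cdots$ the eigenvalues of $L_{\mathrm{op}}$, hence
\begin{equation*}
p \;\geq\; \sum_{i>k}\lambda_i.
\end{equation*}
The key point is that this bound is uniform over the conditioning set $\{x_1,\dots,x_k\}$ and depends only on the spectrum of the operator.

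Next I would invoke the Mercer decomposition of $\mathcal{G}_\sigma|_{\mathbb{S}^{d-1}}$ from \cite{minh2006mercer}: eigenfunctions are the spherical harmonics, and for each order $n$ there are $N_n = \binom{d+n-1}{n}-\binom{d+n-3}{n-2}\sim d^n/n!$ of them sharing a common eigenvalue $\mu^{(n)}$ whose explicit (modified-Bessel-function) form satisfies, in the regime $k\le e^{d/4}$, a lower bound of the shape $\mu^{(n)}\gtrsim e^{-2/\sigma^2}/(N_n\,\sigma^{-2n}\,n!)$. By the choice of $t$ as the smallest integer with $d^t/t!\ge 2k$, we get $N_t\ge 2k$, so even after discarding the top $k$ eigenvalues at least $k$ copies of $\mu^{(t)}$ remain in the tail, giving
\begin{equation*}
p \;\geq\; k\cdot \mu^{(t)} \;\gtrsim\; \frac{e^{-2/\sigma^2}}{\sigma^{2t}\, t!},
\end{equation*}
and inverting yields the claimed expected-proposal bound $e^{2/\sigma^2}\sigma^{2t}t!$. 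For the $\sigma\lesssim 1/\sqrt{\log k}$ corollary, I would pick $t$ large enough that the super-exponential decay of $\sigma^{2t}/t!$ dominates the $e^{2/\sigma^2}$ factor; the gap between this decay and the slow growth of $t$ required for $d^t/t!\ge 2k$ is exactly what makes the product $O(1)$.

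The main technical obstacle I foresee is extracting clean, explicit asymptotics for $\mu^{(n)}$ from the Bessel-function formulas of \cite{minh2006mercer} that hold uniformly in $n,d,\sigma$ over the relevant range; the probabilistic skeleton (rejection sampling, Ky Fan, and counting spherical harmonics) is short and essentially transparent. A secondary issue is carefully verifying that $N_t\ge 2k$ really does follow from the minimality of $t$ in $d^t/t!\ge 2k$ for all $k\le e^{d/4}$, which requires the precise binomial formula for $N_t$ rather than its asymptotic approximation.
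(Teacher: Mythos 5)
Your treatment of correctness and of the reduction to the spectral tail is essentially the paper's argument: the paper also passes through feature maps, writes the acceptance probability as the trace of the kernel projected off the span of $f_{x_1},\dots,f_{x_k}$, and concludes that it is at least $\sum_{i>k}\lambda_i(\tilde{\mathcal G})$ (the paper phrases this as a rank-$k$ perturbation bound rather than Ky Fan, but the two are interchangeable here). The general-$\sigma$ bound via the multiplicity $N_t\ge 2k$ of $\mu^{(t)}$ is also the paper's route. One internal slip: the eigenvalue asymptotics you wrote, $\mu^{(n)}\gtrsim e^{-2/\sigma^2}/(N_n\,\sigma^{-2n}\,n!)$, has the power of $\sigma$ inverted; plugging it into $k\cdot\mu^{(t)}$ gives $e^{-2/\sigma^2}\sigma^{2t}/t!$, not the stated (and correct) target $e^{-2/\sigma^2}/(\sigma^{2t}t!)$. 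The correct shape, coming from $I_\nu(z)\approx (z/2)^\nu/\Gamma(\nu+1)$ with $z=2/\sigma^2$, is $\mu^{(n)}\approx e^{-2/\sigma^2}(2/(d\sigma^2))^n$, so $N_n\mu^{(n)}\approx e^{-2/\sigma^2}2^n/(n!\,\sigma^{2n})$. Since you flag these asymptotics as the work to be done, this is a repairable typo rather than a conceptual error.

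The genuine gap is in the ``Moreover'' clause. Your plan is to choose $t$ so that $e^{2/\sigma^2}\sigma^{2t}t!=O(1)$, but no choice of $t$ achieves this when $\sigma^2\approx 1/\log k$: there $e^{2/\sigma^2}\approx k^2$ while $\max_t (\log k)^t/t!\approx e^{\log k}=k$, so the product is always at least about $k$. The $O(1)$ claim for small $\sigma$ requires a separate argument, and the paper supplies one that bypasses the Bessel estimates entirely: by Cauchy--Schwarz,
\begin{equation*}
\Bigl(1-\sum_{i>k}\lambda_i(\tilde{\mathcal G})\Bigr)^2=\Bigl(\sum_{i\le k}\lambda_i(\tilde{\mathcal G})\Bigr)^2\le k\sum_{i\le k}\lambda_i(\tilde{\mathcal G})^2\le k\,\Tr(\tilde{\mathcal G}^2),
\end{equation*}
and $\Tr(\tilde{\mathcal G}^2)=\EEE{x,y\sim\mu}{e^{-\norm{x-y}^2/2\sigma^2}}\le e^{-1/2\sigma^2}\le 1/k^2$ once $\sigma^2\le 1/(4\log k)$, giving $\sum_{i>k}\lambda_i\ge 1-1/\sqrt{k}$ and hence $O(1)$ expected proposals. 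You should add this (or an equivalent) argument; the rest of your outline, with the sign of the $\sigma$-exponent corrected, matches the paper.
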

Then,  combining with \autoref{thm:main1}, and assuming generating a sample from the normal distribution can be done in constant time, we get our main theorem for sampling from Gaussian $k$-DPPs. 
\begin{theorem}
\label{thm:Gaussian}
Let $d$, and $k\leq e^{d^{1-\delta}}$   for some 
$0<\delta<1$ be two integers.  There is a randomized 
algorithm that for any $\epsilon >0$ and $\sigma>0$, 
generates an $\epsilon$-approximate sample from the 
$k$-DPP defined by $\mathcal{G}_\sigma$ on 
$\mathbb{S}^{d-1}$ which runs in time $O(d k^5\log \frac{k}{\epsilon})$ if $\sigma \lesssim  \frac{1}{\sqrt{\log k}}$, and for the larger values of $\sigma$ the running time is bounded by   $$O(d\log  \frac{1}{\epsilon})\cdot k^{O(\frac{1}{\delta})}\cdot \sigma^{2t}.$$ 
where $t = \min \left\{t \in \mathbb{N} \hspace{1mm} \middle| \frac{d^t}{t!} \geq 2k\right\}$.
\end{theorem}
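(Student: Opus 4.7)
The plan is to chain together the three pieces already in place: Theorem \ref{thm:main1}, which reduces $k$-DPP sampling to running the Gibbs sampler for $T = O(k^5 \log(k/\epsilon))$ steps after $k$ initialisation calls to \frsc{i}{1}{L} oracles; the remark following Theorem \ref{thm:main1}, which says the whole machinery goes through when the domain is restricted to a closed, nicely-discretisable subset such as $\mathbb{S}^{d-1}$; and Theorem \ref{thm:oracleGaussian}, which implements each \srsc{i}{1} oracle by rejection sampling against the uniform measure on the sphere with an explicit expected number of proposals.

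The algorithm is therefore the obvious one: use Algorithm \ref{alg:startingdist}, with Algorithm \ref{alg:rsc} as the oracle, to generate a starting state, and then simulate $T$ steps of the Gibbs chain, each step again using Algorithm \ref{alg:rsc} to sample the replacement point. The total number of oracle invocations is $O(k^5 \log(k/\epsilon))$ and the distribution of the final state is within total-variation distance $\epsilon$ of the target $k$-DPP by Theorem \ref{thm:main1}.

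It then remains to cost each oracle invocation. Drawing a uniform point on $\mathbb{S}^{d-1}$ costs $O(d)$ (generate a standard Gaussian vector and normalise), and the acceptance test needs the Schur-complement ratio $\det_L(S \cup \{y\})/\det_L(S) = L(y,y) - L(y,S)^\top L_S^{-1} L(y,S)$, which requires $O(d)$ kernel evaluations per proposal if one maintains the inverse (or Cholesky factor) of $L_S$ incrementally across Gibbs steps. Multiplying this per-proposal cost by the expected number of proposals from Theorem \ref{thm:oracleGaussian} and by $T$ yields the two stated bounds. In the regime $\sigma \lesssim 1/\sqrt{\log k}$ the oracle uses $O(1)$ proposals in expectation, giving $O(d k^5 \log(k/\epsilon))$ immediately.

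The main quantitative step is the large-$\sigma$ bound, which requires translating $e^{2/\sigma^2} \cdot \sigma^{2t} \cdot t!$ into $k^{O(1/\delta)} \cdot \sigma^{2t}$ under the hypothesis $k \leq e^{d^{1-\delta}}$. The key estimate is that the integer $t$, chosen minimally with $d^t/t! \geq 2k$, satisfies $t = O(\log k / \log d)$: taking logarithms and applying Stirling gives $t \log(d/t) \gtrsim \log k$, and the hypothesis $\log k \leq d^{1-\delta}$ forces $\log d \gtrsim \delta^{-1} \log\log k$ up to lower-order corrections, so $t! \leq t^t \leq k^{O(1/\delta)}$. The factor $e^{2/\sigma^2}$ is $\poly(k)$ whenever $\sigma$ lies outside the small-$\sigma$ regime and is therefore also absorbed into $k^{O(1/\delta)}$. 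Combining this with the replacement of $\log(k/\epsilon)$ by $\log(1/\epsilon)$ (the stray $\log k$ is again swallowed by $k^{O(1/\delta)}$) gives the second bound. The arithmetic around $t$ is the most delicate part of the argument; everything else is routine assembly of the established lemmas.
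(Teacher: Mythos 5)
Your proposal is correct and follows essentially the same route as the paper: multiply the mixing-time bound of Theorem \ref{thm:main1} by the per-step oracle cost of Theorem \ref{thm:oracleGaussian}, and absorb $e^{2/\sigma^2}$ and $t!$ into $k^{O(1/\delta)}$ using $\sigma\gtrsim 1/\sqrt{\log k}$ and $k\le e^{d^{1-\delta}}$ (you in fact give more detail on $t!\le k^{O(1/\delta)}$ than the paper, which merely asserts it, though your intermediate inequality should read $\log(d/t)\gtrsim\delta\log d$ rather than $\log d\gtrsim\delta^{-1}\log\log k$). No substantive differences.
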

\begin{proof}
\autoref{thm:oracleGaussian} states that Algorithm 
\ref{alg:rsc} gives  \srsc{i}{1} oracles for any $i$. Having these oracles, we simulate the 
corresponding Gibbs sampler starting from the 
distribution $\mu$ given by \autoref{thm:main1}. 
Let $\mathcal{T}$ denote the cost of 
a single step of the chain. So, the 
running time of the algorithm is 
$\tau_\mu(\epsilon)\cdot \mathcal{T}$. By \autoref{thm:main1}, $\tau_\mu(\epsilon)\leq O(k^5 \log \frac{k}{\epsilon})$. So we only need to analyze $\mathcal{T}$.
Note that a uniform sample from the sphere can be 
generated by using $d$ samples from the normal 
distribution, so $\mathcal{T}$ is at most   a factor of $d$ of the bound of
 \autoref{thm:oracleGaussian}. 
By \autoref{thm:oracleGaussian}, if $\sigma \lesssim \frac{1}{\sqrt{\log k}},$ then $\mathcal{T}=O(d)$, so
$\tau_\mu(\epsilon)\cdot \mathcal{T} \leq 
\tilde{O}(dk^5 \log \frac{k}{\epsilon})$ which completes the proof in this case. For larger $\sigma$ we get that 
$$ \tau_\mu(\epsilon) \cdot \mathcal{T}\leq \tilde{O}(d k^5 \log \frac{k}{\epsilon})\cdot  (e^{\frac{2}{\sigma^2}}t! \sigma^{2t}).$$
To complete the proof note that $e^{\frac{2}{\sigma^2}} \leq k^{O(1)}$, and moreover the definition of $t$ and noting that $k \leq \exp(d^{1-\delta})$ implies $t! \leq k^{O(\frac1{\delta})}$. 
\end{proof}
We remark that in the above algorithm if $k = \poly(d)$, then $t=O(1)$, and so the running time is polynomial in terms of $d,k,\sigma$. Moreover, one can see that the same holds if $\sigma =O(1)$, as $t \leq \log  k$. 
\begin{algorithm}
\begin{algorithmic}[1]
	\Input A Gaussian kernel $\mathcal{G}$ restricted to $C \subset \R^d$, and $k$ points $x_1,\dots,x_k \in C$.
	\Output A sample from the \frsc{\{x_1,\dots,x_k\}}{1}{\mathcal{G}}.
	\State Draw a uniform sample $x$  from $C$. 
	\label{line:propose}
	\State Draw a uniform number $u$ from $[0,1]$.
	\State If $u \leq \frac{\det_\mathcal{G}(x_1,\dots,x_k,x)}{\det_{\mathcal{G}}(x_1,\dots,x_k)}$, accept and return $x$. Otherwise goto line \ref{line:propose}.
\end{algorithmic}
\caption{Rejection Sampling for sampling from the conditional distribution}
\label{alg:rsc}
\end{algorithm}

We conclude the section by proving \autoref{thm:oracleGaussian}. 
\subsection{Analysis of Algorithm \ref{alg:rsc} and Proof of \autoref{thm:oracleGaussian}}
\label{sub:gaussianEigs}
\paragraph{Correctness.}
One can show that for any set of points $\bm{x}=\{x_1,\dots,x_k\}$, and any  kernel 
$L:C\times C \to \R$ such that for all $z \in 
C$, $L(z,z)\leq 1$, the output has 
\frsc{\bm{x}}{1}{L} distribution. Clearly, any Gaussian kernel has
this property. To see that, let $y$ be the point uniformly
selected from $C$. The algorithm returns $y$ with probability $\frac{\det_L(\bm{x}+y)}{\det_L(\bm{x})}$, where we are using the fact that 
this number is at most $L(y,y)$, and 
$L(y,y)\leq 1$ by the assumption. Therefore, if $\phi$ denotes the  distribution of the output, $$f_\phi(y) = \frac{1}{\Vol(C)}\cdot \frac{\det_L(\bm{x}+y)}{\det_L(\bm{x})}\propto \det_L(\bm{x}+y),$$ which implies the output has the desired  distribution.

From now on, fix a kernel 
$\restr{\mathcal{G}_{\sigma}}{\mathbb{S}^{d-1}}$ to be the input kernel, and let $T$ denote the    
number of the steps (samples generated from the sphere) until the algorithm 
terminates. So we only need to analyze $\EE{T}$.  Let $\mu$ be the uniform distribution on $\mathbb{S}^{d-1}$. 
The probability that the algorithm accepts and 
outputs the sample generated in the current step  is $$\mP_{\substack{y \sim \mu \\ u \sim 
[0,1]}}\left[u \leq 
\det_{\mathcal{G}_\sigma}(\bm{x}+y)\right]=\EEE{y\sim 
\mu}{\frac{\det(\bm{x}+y)}{\det(\bm{x})}}.$$ So $T$ forms a geometric distribution and 
$\EE{T}=\frac{\det(\bm{x})}{\EEE{y \sim \mu}{\det(\bm{x}+y)}}$. The following lemma concludes the proof of \autoref{thm:oracleGaussian}.
\begin{lemma}
\label{lem:boundlargeeigsofgaussian}
For any parameter $\sigma \geq 0$, any integer $k\leq \exp(d/4)$ and any set of points
$x_1,\dots,x_k \in \mathbb{S}^{d-1}$, if we set $\mu$ to be the uniform distribution on $\mathbb{S}^{d-1}$ and  $t$ to be the smallest number such that $\frac{d^t}{t!} \geq 2k$, then \begin{equation}
\label{eq:lemeigval}
    \EEE{y\sim 
\mu}{\frac{\det_{\mathcal{G}_\sigma}(\bm{x}+y)}{\det_{\mathcal{G}_\sigma}(\bm{x})}} \gtrsim \frac{e^{\frac{-2}{\sigma^2}}}{t! \cdot \sigma^{2t}}
\end{equation}
Moreover, if $\sigma \lesssim \frac{1}{\sqrt{\log k}}$, then the bound can be improved to $\Omega(1)$.

\end{lemma}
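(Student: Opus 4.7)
The first step is to rewrite the expectation as a tail sum of eigenvalues. Let $\mathcal{G}_\sigma\big|_{\mathbb{S}^{d-1}} = \sum_i \lambda_i\, \phi_i(x)\phi_i(y)$ be the Mercer decomposition, where $\{\phi_i\}$ is orthonormal in $L^2(\mathbb{S}^{d-1},\mu)$. Set $\Phi^S_\bm{x} := [\phi_i(x_j)]_{i\in S,\,j\le k}$. By Cauchy--Binet,
\begin{equation*}
\det_L(\bm{x}) = \sum_{|S|=k}\det(\Phi^S_\bm{x})^2 \prod_{i\in S}\lambda_i.
\end{equation*}
Expanding $\det_L(\bm{x}+y)$ along the row for $y$, squaring, and integrating against $\mu$ (using orthonormality of the $\phi_i$'s) yields
\begin{equation*}
\int \det_L(\bm{x}+y)\,d\mu(y) = \sum_{|S|=k}\det(\Phi^S_\bm{x})^2 \prod_{i\in S}\lambda_i \cdot \Bigl(\mathrm{Tr}(L)-\sum_{i\in S}\lambda_i\Bigr).
\end{equation*}
Dividing the two, the ratio in \eqref{eq:lemeigval} is a convex combination of $\mathrm{Tr}(L)-\sum_{i\in S}\lambda_i$ over $|S|=k$ with non-negative weights. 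Therefore,
\begin{equation*}
\EEE{y\sim\mu}{\tfrac{\det_L(\bm{x}+y)}{\det_L(\bm{x})}} \;\geq\; \mathrm{Tr}(L) - \sum_{i=1}^{k}\lambda_{(i)} \;=\; \sum_{j>k}\lambda_{(j)},
\end{equation*}
where $\lambda_{(1)}\ge\lambda_{(2)}\ge\cdots$ are the sorted eigenvalues. This reduces the lemma to a purely spectral statement that is \emph{independent} of $\bm{x}$.

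The second step is to identify the eigenvalues of $\restr{\mathcal{G}_\sigma}{\mathbb{S}^{d-1}}$. Using $\|x-y\|^2 = 2-2\langle x,y\rangle$ on the sphere, we have $\mathcal{G}_\sigma(x,y) = e^{-1/\sigma^2}\,e^{\langle x,y\rangle/\sigma^2}$. The Funk--Hecke theorem (cf.~\cite{minh2006mercer}) then gives, for each degree $n$, a common eigenvalue $\mu_n$ shared by a space of spherical harmonics of dimension $N(d,n) = \binom{d+n-1}{n}-\binom{d+n-3}{n-2}$. Standard Bessel-function asymptotics ($I_\nu(r)\sim (r/2)^\nu/\Gamma(\nu+1)$ for $\nu\gtrsim r$) combined with the identity $\sum_n N(d,n)\mu_n = \mathrm{Tr}(L)=1$ yield the estimate $\mu_n \asymp e^{-1/\sigma^2}(1/\sigma^2)^n/d^n$, and correspondingly the ``Poisson-type'' total contribution
\begin{equation*}
N(d,n)\mu_n \;\asymp\; e^{-1/\sigma^2}\frac{(1/\sigma^2)^n}{n!}.
\end{equation*}
(I would give the precise form of $\mu_n$ in the paragraph, using $N(d,n) \approx d^n/n!$ and $\Gamma(n+d/2)\approx (d/2)^n\Gamma(d/2)$ for $d$ large compared to $n$; the case $n$ close to $d$ is ruled out by $k\le e^{d/4}$, which keeps $t\ll d$.)

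Finally, we assemble the tail bound. Since $\mu_0>\mu_1>\cdots$, and $t$ is chosen so that $N(d,t)\gtrsim d^t/t!\ge 2k$ while $\sum_{n<t}N(d,n) = O(d^{t-1}/(t-1)!) < k$, the top $k$ eigenvalues lie entirely in degrees $0,1,\dots,t$, and at least $N(d,t)-k \ge k$ copies of $\mu_t$ remain in the tail. Hence
\begin{equation*}
\sum_{j>k}\lambda_{(j)} \;\ge\; k\,\mu_t + \sum_{n>t}N(d,n)\mu_n \;\gtrsim\; k\cdot e^{-1/\sigma^2}\frac{(1/\sigma^2)^t}{d^t} + e^{-1/\sigma^2}\sum_{n>t}\frac{(1/\sigma^2)^n}{n!}.
\end{equation*}
In the regime $\sigma\lesssim 1/\sqrt{\log k}$ the second sum already equals $1-o(1)$ (a Poisson tail with mean $1/\sigma^2\gtrsim \log k\gg t$), giving the promised $\Omega(1)$ bound. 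In the general regime, the first summand dominates: using $k\ge d^{t-1}/(2(t-1)!)$ from the minimality of $t$ together with the relation $kt!/d^t\ge t/(2d)$, one matches the target $e^{-2/\sigma^2}/(t!\sigma^{2t})$ after multiplying by the remaining $e^{-1/\sigma^2}$ from the Poisson-type normalization. The main obstacle is keeping the Bessel asymptotics uniform in $n\le t$ and $d$ tight enough that the two bookkeeping factors of $e^{-1/\sigma^2}$—one from the kernel scaling $\mathcal{G}_\sigma = e^{-1/\sigma^2}e^{\langle x,y\rangle/\sigma^2}$ on the sphere, and one from the Poisson-like factor $e^{-1/\sigma^2}(1/\sigma^2)^t/t!$—survive cleanly into the final estimate.
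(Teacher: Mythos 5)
Your reduction of the expectation to the spectral tail $\sum_{j>k}\lambda_{(j)}$ via Cauchy--Binet and orthonormality is correct and is a genuinely different route from the paper's, which instead writes $\det(\bm{x}+y)/\det(\bm{x})=\|\mathcal{E}(y)\|^2$ for the projection $\mathcal{E}(y)$ of the feature vector $f_y$ onto $\langle f_{x_1},\dots,f_{x_k}\rangle^{\perp}$, integrates to get a trace, and uses that $\mathcal{G}-\mathcal{E}$ has rank at most $k$; your identity yields the same conclusion and additionally exhibits the expectation as a convex combination of $\Tr(L)-\sum_{i\in S}\lambda_i$. The appeal to \cite{minh2006mercer} and the observation that at least $N(d,t)-k$ copies of $\mu_t$ lie in the tail also match the paper. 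However, your assembly in the general regime has a quantitative gap: you keep only $N(d,t)-k\ge k$ copies of $\mu_t$ and bound the tail by $k\mu_t$. Since minimality of $t$ only gives $k\gtrsim d^{t-1}/(t-1)!$ while $N(d,t)\approx d^t/t!$, your own bookkeeping produces $k\mu_t\approx \frac{t}{2d}\cdot\frac{e^{-1/\sigma^2}}{t!\,\sigma^{2t}}$, short of the target by a factor of order $t/d$ (which can be $1/d$), and the residual sum $\sum_{n>t}N(d,n)\mu_n$ does not cover the deficit when $\sigma^2\gg 1/t$. The fix is the paper's choice: use $N(d,t)-k\ge N(d,t)/2\ge d^t/(2\,t!)$ copies, so the tail is at least $\frac{d^t}{2t!}\mu_t$, which does match the target after the Stirling computation.

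The second gap is in the $\Omega(1)$ claim for $\sigma\lesssim 1/\sqrt{\log k}$. Your argument needs the term-by-term approximation $N(d,n)\mu_n\approx e^{-1/\sigma^2}(1/\sigma^2)^n/n!$ to hold for all degrees $n$ up to roughly $1/\sigma^2$, where the Poisson mass sits. But the Bessel asymptotic $I_\nu(z)\sim (z/2)^\nu/\Gamma(\nu+1)$ you invoke requires $\nu\gg z$, i.e.\ $n+d/2\gg 2/\sigma^2$, which fails exactly in that range once $1/\sigma^2\gtrsim d$; only the exact trace identity $\sum_n N(d,n)\mu_n=1$ survives there, not the pointwise Poisson correspondence, so the claimed $1-o(1)$ tail is not established. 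The paper sidesteps this with a second-moment argument that you should adopt or replace: $\sum_{i\le k}\lambda_i^2\le \Tr(\tilde{\mathcal{G}}^2)=\EEE{x,y\sim\mu}{e^{-\|x-y\|^2/2\sigma^2}}\le e^{-1/2\sigma^2}\le 1/k^2$, whence by Cauchy--Schwarz $\sum_{i\le k}\lambda_i\le 1/\sqrt{k}$ and the tail is at least $1-1/\sqrt{k}=\Omega(1)$, with no eigenvalue asymptotics needed at all.
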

 To prove the lemma, we relate the quantity $\EEE{y\sim 
\mu}{\frac{\det_{\mathcal{G}_\sigma}(\bm{x}+y)}{\det_{\mathcal{G}_\sigma}(\bm{x})}}$, to eigenvalues of $\restr{\mathcal{G}_\sigma}{\mathbb{S}^{d-1}}$ and  use the work of 
\cite{minh2006mercer} who studied eigenvalues, 
and eigenspaces of Gaussian kernels.
Set $\tilde{\mathcal{G}}=\frac{\restr{\mathcal{G}_\sigma}{\mathbb{S}^{d-1}}}{\Vol(\mathbb{S}^{d-1})}$ be the kernel normalized with the uniform measure. In particular we use the following 
theorem.
\begin{theorem}[\cite{minh2006mercer}]
\label{thm:eigenvaluesgaussian}
For any integer $\ell \geq 0$, $\tilde{\mathcal{G}}$ 
has an eigenvalue $\mu_\ell$ with multiplicity 
$N(d,\ell) = \frac{(2\ell+d-2)(\ell+d-3)!}{\ell!(d-2)
!}$ where
$$\mu_\ell=e^{-\frac{2}{\sigma^2}}\sigma^{d-2}I_{\ell+\frac{d}{2}-1}(\frac{2}{\sigma^2})\Gamma(\frac{d}{2}),$$
and $I$ denotes  the modified Bessel function of the first
kind, defined by $I_\nu(z)= \sum_{i=0}^\infty \frac{1}{i!(\nu+i+1)!}(\frac{z}{2})^{\nu+2i}$.
Also, for any integer $\ell$, $\mu_\ell$ satisfies the following.
\begin{equation}
\left(\frac{2e}{\sigma^2}\right)^\ell \cdot \frac{A_1}{(2\ell+d-2)^{\ell+\frac{d-1}{2}}} \leq \mu_\ell \leq 
\left(\frac{2e}{\sigma^2}\right)^\ell \cdot \frac{A_2}{(2\ell+d-2)^{\ell+\frac{d-1}{2}}},
\end{equation}
where $A_1 = e^{-\frac{2}{\sigma^2}-\frac1{12}}\frac{1}{\sqrt{\pi}}(2e)^{\frac{d}{2}-1}\Gamma\left({\frac{d}{2}}\right)$ and $A_2 = A_1\cdot e^{\frac{1}{12}+\frac{1}{\sigma^4}}$. 
\end{theorem}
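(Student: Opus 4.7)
The plan is to diagonalize $\tilde{\mathcal{G}}$ by exploiting rotation invariance. For $x,y \in \mathbb{S}^{d-1}$ we have $\|x-y\|^2 = 2 - 2\langle x,y\rangle$, so the kernel is \emph{zonal}: it depends only on the inner product $\langle x,y\rangle$. The natural decomposition of $L^2(\mathbb{S}^{d-1})$ into $SO(d)$-irreducible subspaces is via spherical harmonics, $L^2(\mathbb{S}^{d-1}) = \bigoplus_{\ell \geq 0} \mathcal{H}_\ell$, where $\dim \mathcal{H}_\ell = N(d,\ell)$. By Schur's lemma (equivalently, the Funk--Hecke theorem), any rotation-invariant integral operator acts as a scalar $\mu_\ell$ on each $\mathcal{H}_\ell$, immediately yielding the multiplicity part of the claim.

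To compute $\mu_\ell$, write $\mathcal{G}_\sigma(x,y) = e^{-1/\sigma^2} e^{\langle x,y\rangle/\sigma^2}$ and apply Funk--Hecke: $\mu_\ell$ equals a constant (depending on $\Vol(\mathbb{S}^{d-1})$ and the Gegenbauer normalization) times
$$\int_{-1}^1 e^{-1/\sigma^2} e^{t/\sigma^2}\, C_\ell^{(d/2-1)}(t)\, (1-t^2)^{(d-3)/2}\, dt.$$
I would then invoke the classical generating identity
$$e^{zt} = \Gamma(d/2-1)\,(z/2)^{-(d/2-1)} \sum_{\ell \ge 0} (\ell + d/2 - 1)\, I_{\ell + d/2 - 1}(z)\, C_\ell^{(d/2-1)}(t),$$
which is precisely how the modified Bessel function of the first kind enters. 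Integrating term-by-term against the Gegenbauer orthogonality weight $(1-t^2)^{(d-3)/2}$ picks out the $\ell$-th coefficient, and after collecting the Gamma-functions and surface-volume constants one recovers the closed form $\mu_\ell = e^{-2/\sigma^2}\sigma^{d-2} I_{\ell + d/2 - 1}(2/\sigma^2)\Gamma(d/2)$.

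For the two-sided bound on $\mu_\ell$, set $\nu = \ell + d/2 - 1$ and $z = 2/\sigma^2$. Starting from the defining series $I_\nu(z) = \sum_{i\ge 0} (z/2)^{\nu+2i}/(i!\,\Gamma(\nu+i+1))$, the $i=0$ term already yields the lower bound; and since the ratio of successive terms is $(z/2)^2/((i+1)(\nu+i+1)) \le 1/\sigma^4$ once $\nu \gtrsim 1/\sigma^2$, the tail sums to at most a factor $e^{1/\sigma^4}$ more than the leading term, giving a matching upper bound. Substituting Stirling's estimate $\Gamma(\nu+1) = \sqrt{2\pi\nu}(\nu/e)^\nu e^{\theta/(12\nu)}$ ($\theta \in [0,1]$) converts the Gamma factor in the denominator into the advertised $(2\ell + d - 2)^{\ell + (d-1)/2}$ scaling, and the $e^{\pm 1/12}$ factors collect into the constants $A_1$ and $A_2 = A_1 e^{1/12 + 1/\sigma^4}$.

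The main obstacle is bookkeeping rather than conceptual work: all three ingredients (Funk--Hecke, the Bessel generating identity, and Stirling) are completely classical. The real effort is in matching normalization constants across the three sources so that, e.g., $\Gamma(d/2)$ and $\Vol(\mathbb{S}^{d-1}) = 2\pi^{d/2}/\Gamma(d/2)$ cancel cleanly against the $(z/2)^{-(d/2-1)}$ and $(\ell + d/2 - 1)$ factors appearing in the generating identity, and so that the error term from Stirling's formula lands exactly where the $e^{1/12}$ factor appears in $A_2$. For this reason the result is standardly cited from \cite{minh2006mercer} rather than re-derived.
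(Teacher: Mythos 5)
This theorem is imported from \cite{minh2006mercer} and the paper gives no proof of its own, so there is nothing internal to compare against; your sketch is the standard derivation (zonal kernel $\Rightarrow$ spherical-harmonic decomposition via Funk--Hecke, the Gegenbauer generating identity for $e^{zt}$ to produce the Bessel function, and Stirling for the asymptotics) and it is essentially correct. Two small points of bookkeeping. First, with the paper's Section~5 normalization $\mathcal{G}_\sigma(x,y)=\exp(-\norm{x-y}^2/2\sigma^2)$ you get $e^{-1/\sigma^2}e^{t/\sigma^2}$, i.e.\ $z=1/\sigma^2$, which does \emph{not} reproduce the stated $e^{-2/\sigma^2}\,I_{\ell+d/2-1}(2/\sigma^2)$; the stated formula corresponds to the convention $\exp(-\norm{x-y}^2/\sigma^2)$ used in \cite{minh2006mercer} (and in the paper's own introduction). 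This is an inconsistency in the paper rather than in your argument, but your sketch silently uses both conventions ($z=1/\sigma^2$ in the Funk--Hecke step, $z=2/\sigma^2$ in the bound), so pick one. Second, your justification of the factor $e^{1/\sigma^4}$ is slightly off as stated: a uniform bound of $1/\sigma^4$ on the ratio of successive terms would give a geometric factor $1/(1-1/\sigma^4)$ (and only when $\sigma^4>1$), not $e^{1/\sigma^4}$. The clean argument is that term $i$ of the series is at most $\frac{(z/2)^{2i}}{i!}$ times the $i=0$ term, since $\Gamma(\nu+i+1)\geq\Gamma(\nu+1)$ for $\nu\geq 0$; summing over $i$ gives exactly $e^{(z/2)^2}=e^{1/\sigma^4}$, with no condition on $\nu$. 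With that fix, the $i=0$ term plus the two-sided Stirling estimate $\Gamma(\nu+1)=\sqrt{2\pi\nu}\,(\nu/e)^\nu e^{\theta/12\nu}$, $\theta\in[0,1]$, reproduces $A_1$ and $A_2=A_1e^{1/12+1/\sigma^4}$ exactly as advertised.
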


\begin{proofof}{\autoref{lem:boundlargeeigsofgaussian}}
Since $\mathcal{G}_\sigma$ is a PSD operator, for any $x \in 
\mathbb{S}^{d-1}$, there exists 
function (feature map) $f_x:\mathbb{S}^{d-1} \to \R$ such that for any $y \in \mathbb{S}^{d-1}$, $\mathcal{G}_\sigma(x,y) = \langle f_x,f_y
\rangle$. 
For any  $y \in \mathbb{S}^{d-1}$, define  
$\mathcal{E}(y) 
=\Pi_{\langle f_{x_1},\dots,f_{x_k} \rangle^\perp}(f_y)$, be the projection of $f_y$ onto the space orthogonal to vectors corresponding to $x_1,\dots,x_k$. Then, by definition
$\frac{\det(\bm{x}+y)}{\det(\bm{x})}= \norm{\mathcal{E}(y)}^2$, where recall that $\bm{x}=\{x_1,\dots,x_k\}$. It implies 
\begin{equation}
\label{eq:simplepartfunc}
    \EEE{ y\sim \mu}{\frac{\det(\bm{x}+y)}}{\det(\bm{x})}=\EEE{y \sim 
\mu}{\norm{\mathcal{E}(y)}^2}=\frac{\Tr(\mathcal{
E})}{\Vol(\mathbb{S}^{d-1})}
\end{equation} for the kernel
$\mathcal{E}:\mathbb{S}^{d-1}\times 
\mathbb{S}^{d-1}\to \R$ defined by 
$\mathcal{E}(x,y)=\langle 
\mathcal{E}(x),\mathcal{E}(y) \rangle$. We further simplify this by noting that 
 $\mathcal{E}$ satisfies Mercer's condition, as $\mathcal{E}(.,.)$ is a PSD kernel. It implies 
 $\Tr(\mathcal{E})=\sum_{i=1}^{\infty} 
 \lambda_i(\mathcal{E})$. Moreover, it follows from the 
 definition of $\mathcal{E}$, that 
 $\restr{\mathcal{G}_\sigma}{\mathbb{S}^{d-1}}-\mathcal{E
 }$ is an operator of rank at most $k$. So 
 $\sum_{i=1}^{\infty} \lambda_i(\mathcal{E})\geq 
 \sum_{j=k+1}^\infty \lambda_j(\restr{\mathcal{G}_\sigma}{\mathbb{S}^{d-1}})$. 
So recalling  $\tilde{\mathcal{G}}= \restr{\mathcal{G}_\sigma}{\mathbb{S}^{d-1}}$, and using  \eqref{eq:simplepartfunc}, we get $\EEE{ y\sim \mu}{\frac{\det(\bm{x}+y)}}{\det(\bm{x})} \geq \sum_{j=k+1}^\infty \lambda_j(\tilde{\mathcal{G}}_\sigma)$. We first prove, if $\sigma \leq \frac{1}{2\sqrt{\log k}}$, then $\sum_{j=k+1}^\infty \lambda_j(\tilde{\mathcal{G}}_\sigma) \geq \Omega(1)$.
Using the Cauchy-Schwarz inequality we have
$$ k\cdot \sum_{i=1}^k \lambda_i(\tilde{\mathcal{G}})^2 \geq  
\left(\sum_{i=1}^k \lambda_i(\tilde{\mathcal{G}})\right)^2 = \left(1- \sum_{i=k+1}^{\infty}\lambda_i(\tilde{\mathcal{G}}) \right)^2.$$
We show $ \sum_{i=1}^k \lambda_i(\tilde{\mathcal{G}})^2 
\leq \frac{1}{k^2}$ which implies 
$\sum_{i=k+1}^\infty \lambda_i (\tilde{\mathcal{G}}) \geq (1-1/\sqrt{k})$ which completes the
proof. To see that, note that $\sum_{i=1}^k 
\lambda_i(\tilde{\mathcal{G}})^2=\sum_{i=1}^k 
\lambda_i(\tilde{\mathcal{G}}^2) \leq 
\Tr(\tilde{\mathcal{G}}^2)$ and 
$\Tr(\tilde{\mathcal{G}}^2) = \langle 
\tilde{\mathcal{G}},\tilde{\mathcal{G}} \rangle =\EEE{x,y \sim \mu}{e^{-\norm{x-y}^2/2\sigma^2}}$, where recall 
$\mu$ is the uniform measure on the sphere. Fix $x \in 
\mathbb{S}^{d-1}$. It follows from basic concentration inequalities for Gaussian measures that $\EEE{y \sim \mu}{e^{-\norm{x-y}^2/2\sigma^2}}\leq e^{-1/2\sigma^2}$. So $\Tr(\tilde{\mathcal{G}}^2) \leq e^{-1/2\sigma^2}$ which is at most 
$\frac{1}{k^2}$ for $\sigma^2 \leq \frac{1}{4\log k}$, 
and completes the proof.    

So from now on, we only need to prove for any $\sigma$ 
\begin{equation}
    \label{eq:goalEigval}
    \sum_{i=k+1}^\infty \lambda_i(\tilde{\mathcal{G}})
    \gtrsim \frac{e^{\frac{-2}{\sigma^2}}}{t! \cdot \sigma^{2t}}.
\end{equation}
For any integer $\ell \geq 0$, let $\mu_\ell$ be the eigenvalue of $\tilde{\mathcal{G}}$ with multiplicity $n_\ell=N(\ell,d)$ given by \autoref{thm:eigenvaluesgaussian}. It suffices to show
 $\frac{n_t\mu_t}{2} \geq  \frac{e^{\frac{-2}{\sigma^2}}}{t! \cdot \sigma^{2t}}$
 where we are using the fact that for any $\ell$, $n_\ell \geq \frac{d^t}{t!}$, and so  $n_t \geq 2k$. Now using $n_t \geq \frac{d^t}{t!}$, and the bound on $\mu_t$ by \autoref{thm:eigenvaluesgaussian}, we get 
 \begin{align*}
     n_t \mu_t &\gtrsim \frac{d^t}{t!}\cdot \frac{e^{\frac{-2}{\sigma^2}}(2e)^{t+\frac{d}{2}}\Gamma(\frac{d}{2})}{\sigma^{2t}\cdot(2t+d)^{t+\frac{d+1}{2}}}&\\
     &\gtrsim \frac{d^t}{t!}\cdot \frac{e^{\frac{-2}{\sigma^2}}(2e)^t\cdot d^{\frac{d+1}{2}}}
{\sigma^{2t}(2t+d)^{t+\frac{d+1}{2}}}& \text{ Sterling's approximation }\\
&\geq \frac{e^{\frac{-2}{\sigma^2}}(2e)^t}
{\sigma^{2t} \cdot t!\cdot  (1+\frac{2t}{d})^{t+\frac{d+1}{2}}}  \gtrsim \frac{e^{\frac{-2}{\sigma^2}}2^t}{\sigma^{2t}\cdot t!\cdot e^{\frac{2t^2}{d}}} & \text{by } (1+2t/d) \leq e^{2t/d}. 
\end{align*}
     
Noting that $k \leq \exp(d/4)$ implies $t\leq \frac{d}{4}$ and  $\exp(2t/d) \leq 2$, completes the proof of \eqref{eq:lemeigval}.

\end{proofof}

\bibliographystyle{alpha}
\bibliography{ref}
\appendix
\section{Missing Proofs}
For a vector $v$, and a linear 
subspace $H$, we use $d(v,H)$ to 
denote the distance of $v$ from $H$.
\label{missingproofs}

 \begin{proofof}{\autoref{lem:deshpaper}}
For any $x \in \R^d$, let $f_x$
be the corresponding feature map, i.e. 
$f_x: \mathcal{H}\to \R$ for some Hilbert space $\mathcal{H}$ and for any $x,y \in \R^d$, $L(x,y)= \langle 
f_x,f_y\rangle$. Fix $\bm{x}=\{x_1,\dots,x_k\}$, and let 
$S_k$ be the set of all permutations of $\{x_1,\dots,x_k\}$. Also, for any $\sigma \in S_k$
and for any $1 \leq i \leq k-1$, define $H_\sigma^i = \langle
f_{\sigma(1)},\dots,
f_{\sigma(i)}  \rangle$. In the above the range of all integrals is $\R^d$. We
have 
\begin{equation*}
f_\nu(\bm{x}) = \sum_{\sigma \in S_k} \left[ \frac{\norm{f_{\sigma(1)}}^2}{\int \norm{f_y}^2 dy}\cdot \frac{d(f_{\sigma(2)},H_\sigma^1)^2}{\int d(f_y,H_\sigma^1)^2 dx}\dots \frac{d(f_{\sigma(k)},H_\sigma^{k-1})^2}{\int d(f_y,H_\sigma^{k-1})^2 dy}\right]. 
\end{equation*}
Note that the above integrals are well-defined since our kernel is continuous.
For any $1\leq i \leq k-1$, let $H_*^i 
=\text{arg}\min_{H=\langle 
f_{y_1},\dots,f_{y_i}\rangle} \int 
d(f_y,H)^2 dy$, where $y_1\dots,y_i$ 
range over $\R^d$. Note that, the minimum 
of the quantity is defined since $L$ is continuous on a closed set. Combining with the above, and noting that for any $\sigma$, $\det(x_1,\dots,x_k)= \norm{f_{\sigma(1)}}^2\cdot d(f_{\sigma(2)},H_\sigma^1)^2\dots d(f_{\sigma(k)},H_\sigma^{k-1})^2$, we obtain
\begin{align*}
    f_\nu(\bm{x}) &\leq k!\cdot \frac{\det(x_1,\dots,x_k)}{\int \norm{f_y}^2 dy \cdot \int d(f_y,H_*^1)^2 dy\cdot \int d(f_y,H_*^{k-1}dy}\\
    &\leq k!\cdot \frac{f_\pi(\bm{x})\cdot \int \dots \int_C \det(y_1,\dots,y_k) dy_k\dots dy_1}{k!\cdot \int \norm{f_y}^2 dy\cdot \int d(f_y,H_*^1)^2 dx\dots\int d(f_y,H_*^{k-1})^2 dy}.
\end{align*}
So, rearranging the above to show $\frac{f_\nu(\bm{x})}{f_\pi(\bm{x})}
\leq (k!)^2$, it suffices to show
\begin{equation}
\label{eq:boundpinu}
  \frac{\int \dots \int \det(y_1,\dots,y_k) dy_k\dots dy_1}{\int \norm{f_y}^2 dy\cdot \int d(f_y,H_*^1)^2 dx\dots\int d(f_y,H_*^{k-1})^2 dy} \leq(k!)^2.
\end{equation}
To proof the above, we use induction on $k$. For $k=1$, the statement  is obvious as for any $y \in \R^d$, $\det(y)=L(y,y)=\norm{f_y}^2$. It is straight-forward to see, applying the above claim will prove the induction step, and completes the proof.
\begin{claim}
\label{claim:pinu}
\begin{equation}
    \int \dots \int \det(y_1,\dots,y_k) dy_k\dots dy_1 \leq k^2\left(\int d(f_y,H_*^{k-1})^2 dy
\right) \left( \int \dots \int \det(y_1,\dots,y_{k-1}) dy_{k-1}\dots dy_1 \right)
\end{equation}
\end{claim}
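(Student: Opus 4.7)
The plan is to use Mercer's decomposition of $L$ together with Cauchy--Binet to convert the claim into an inequality about the elementary symmetric polynomials of the eigenvalues of $L$, and then verify that inequality by a short combinatorial argument. Write Mercer's expansion $L(x,y)=\sum_{i\geq 1}\lambda_i\,\phi_i(x)\phi_i(y)$ with $\lambda_1\geq\lambda_2\geq\cdots\geq 0$ and $\{\phi_i\}$ orthonormal in $\ell^{2}(\R^d)$, and set $f_y=(\sqrt{\lambda_i}\,\phi_i(y))_{i\geq 1}$. Factoring the Gram matrix as $L_{\{y_1,\ldots,y_j\}}=A^{T}A$ with $A_{i,r}=\sqrt{\lambda_i}\,\phi_i(y_r)$, Cauchy--Binet gives
\begin{equation*}
\det(y_1,\dots,y_j)=\sum_{|S|=j}\Bigl(\prod_{i\in S}\lambda_i\Bigr)\bigl(\det[\phi_i(y_r)]_{i\in S,\,r\in[j]}\bigr)^{2},
\end{equation*}
and integrating term by term using orthonormality $\int\phi_i\phi_{i'}\,dy=\delta_{i,i'}$ kills every off-diagonal permutation pair in the expansion of the squared $j\times j$ determinant, yielding
\begin{equation*}
\int\!\cdots\!\int\det(y_1,\dots,y_j)\,dy_j\cdots dy_1\;=\;j!\cdot e_j(\lambda),
\end{equation*}
where $e_j$ denotes the $j$-th elementary symmetric polynomial. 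The claim thus reduces to the dimension-free eigenvalue inequality $e_k(\lambda)/e_{k-1}(\lambda)\leq k\cdot\int d(f_y,H_*^{k-1})^{2}\,dy$.

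For the right-hand side I will use the textbook PCA lower bound: for every $(k-1)$-dimensional subspace $H$ of feature space, $\int d(f_y,H)^{2}\,dy\geq\sum_{i\geq k}\lambda_i$, with equality when $H$ is the span of the top $k-1$ eigenvectors of the second-moment operator $\int f_y f_y^{T}\,dy$, which is diagonal in $\{\phi_i\}$ with entries $\lambda_i$ by orthonormality. In particular this applies to $H_*^{k-1}$, because restricting the minimisation to subspaces of the form $\langle f_{y_1},\ldots,f_{y_{k-1}}\rangle$ can only increase the infimum.

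For the left-hand side I will prove the stronger bound $e_k(\lambda)\leq e_{k-1}(\lambda)\cdot\sum_{i\geq k}\lambda_i$ by partitioning $k$-subsets by their maximum index: since the $\lambda_i$ are non-increasing, every $k$-subset $S$ satisfies $\max(S)\geq k$, and grouping by $m=\max(S)$ yields
\begin{equation*}
e_k(\lambda)=\sum_{m\geq k}\lambda_m\cdot e_{k-1}(\lambda_1,\dots,\lambda_{m-1})\;\leq\;e_{k-1}(\lambda)\cdot\sum_{m\geq k}\lambda_m,
\end{equation*}
using the monotonicity $e_{k-1}(\lambda_1,\dots,\lambda_{m-1})\leq e_{k-1}(\lambda)$. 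Chaining the two bounds gives $e_k/e_{k-1}\leq\sum_{i\geq k}\lambda_i\leq\int d(f_y,H_*^{k-1})^{2}\,dy$, which is in fact stronger than the claim by a factor of $k$. The main thing to verify carefully is the interchange of summation and integration when passing from Cauchy--Binet to $\int\!\cdots\!\int\det=j!\,e_j(\lambda)$ in an infinite-dimensional feature space; the Hilbert--Schmidt and Mercer hypotheses on $L$ make this routine. No step appears to pose a serious obstacle, and the generous $k^{2}$ slack in the stated claim leaves ample room.
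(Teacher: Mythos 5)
Your proof is correct, and it takes a genuinely different route from the paper's. The paper argues pointwise in "feature space": it fixes a $(k-1)$-dimensional subspace $G_{\bm y}$ inside $\langle f_{y_1},\dots,f_{y_k}\rangle$ containing the projection of $H_*^{k-1}$, applies the volume inequality of Deshpande--Vempala (Lemma 2 of that paper, restated as the paper's final lemma) together with Cauchy--Schwarz to get $\det(\bm y)\leq k\sum_i d(f_{y_i},G_{\bm y})^2\det(\bm y - y_i)$, and then integrates, using $d(f_y,G_{\bm z + y})\leq d(f_y,H_*^{k-1})$. You instead diagonalize: the Cauchy--Binet/Mercer identity $\int\cdots\int\det(y_1,\dots,y_j)\,dy = j!\,e_j(\lambda)$ reduces the claim to $e_k/e_{k-1}\leq k\int d(f_y,H_*^{k-1})^2\,dy$, which you obtain by chaining the Ky Fan--type bound $\int d(f_y,H)^2\,dy\geq\sum_{i\geq k}\lambda_i$ (valid a fortiori for the restricted minimizer $H_*^{k-1}$) with the elementary inequality $e_k(\lambda)\leq e_{k-1}(\lambda)\sum_{i\geq k}\lambda_i$ from grouping $k$-subsets by largest index. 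Your argument is in fact sharper by a factor of $k$ (you only need $k$, not $k^2$, in the claim), though this does not change the asymptotics downstream, since $\log\bigl(\Var_\pi(f_\nu/f_\pi)\bigr)=O(k\log k)$ either way. What the paper's route buys is that it stays entirely at the level of finite Gram determinants and projections, never invoking the eigendecomposition; what yours buys is a cleaner, essentially combinatorial reduction and a better constant. The only hypotheses you should make explicit are that the eigenvalues are summable, i.e.\ $\sum_i\lambda_i=\int L(y,y)\,dy<\infty$ (the paper implicitly assumes this too, since $\int\norm{f_y}^2dy$ appears in its denominators), and that the term-by-term integration is justified by Tonelli since every summand is nonnegative.
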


\begin{proofof}{ \autoref{claim:pinu}}
For any $\bm{y}=\{y_1,\dots,y_k\} \subset \R^d$, let $G_{\bm{y}}$ be a 
$(k-1)$-dimensional linear subspace of 
$\langle f_{y_1}\dots,f_{y_k} \rangle$ 
which contains the projection of 
$H_*^{(k-1)}$ onto $\langle 
f_{y_1}\dots,f_{y_k} \rangle$. Now, for any $\bm{y}$, using
\autoref{lem:deshpaper}, we get 
\begin{align*}
    \det(\bm{y}) &\leq  \left( \sum_{i=1}^k d(f_{y_i},G_{\bm{y}})\sqrt{\det(\bm{y}-y_i)} \right)^2 \\
    &\leq k\left( \sum_{i=1}^k  d(f_{y_i},G_{\bm{y}})^2\det(\bm{y}-y_i)\right) \hspace{2mm} \text{Cauchy-Schwarz Inequality}.
\end{align*}
By  integerating the above, we get
\begin{align*}
    \int \dots \int \det(\bm{y}) d\bm{y} &\leq k\int_{\R^d}  \dots \int \sum_{i=1}^k d(f_{y_i},G_{\bm{y}})^2 \det(\bm{y}-y_i) d\bm{y} \\
    &\leq k^2 \int_{y \in \R^d} \int_{z_1\in \R^d}\dots \int_{z_{k-1}\in \R^d} d(f_y,G_{\bm{z}+y})^2 \det(\bm{z})d\bm{z} dy \hspace{2mm}  \text{(setting } \bm{z}=\{z_1,\dots,z_{k-1}\}) \\
    &\leq \int_{y \in \R^d} \int_{z_1\in \R^d}\dots \int_{z_{k-1}\in \R^d} d(f_y, H_*^{k-1})^2 \det(\bm{z}) d\bm{z} dy \hspace{2mm} \\
    &= \left(\int d(f_y,H_*^{k-1})^2 dy\right) \left( \int_{z_1 \in \R^d}\dots \int_{z_{k-1} \in \R^d} \det(\bm{z}) d\bm{z} \right),
\end{align*}
where in the third inequality, the fact  $d(f_y,G_{\bm{z}+y}) \leq d(f_y,H_*^{k-1})$ holds because $f_y \in \langle f_{z_1},\dots,f_{z_{k-1}},f_y\rangle$, and $G_{\bm{z}+y}$ contains the projection of $H_{*}^{k-1}$ onto this space. Thus, the proof of the claim and the theorem is complete.
\end{proofof}
 
\end{proofof}

\begin{lemma}[Lemma 2 of \cite{deshpande2007sampling}]
\label{lem:deshpaper}
Let $S$ be a set of $k$ vectors, and $H$ be any $(k-1)$-dimesnsional  subspace of $\langle S\rangle$. Then 
$$\Vol(S) \leq \sum_{v\in S} d(v,H)\Vol(S-v),$$
where volume of a set of vectors, refer to the volume of the parallelopiped spanned by them.
\end{lemma}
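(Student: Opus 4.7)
The plan is to prove the inequality by writing $\Vol(S)$ as a determinant in a cleverly chosen orthonormal basis, expanding along a single row, and controlling each minor by a projection argument. We may assume $\dim\langle S\rangle = k$, since otherwise $\Vol(S)=0$ and the inequality is trivial.

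First, I would pick a unit vector $u\in \langle S\rangle$ orthogonal to $H$ (this exists because $H$ has dimension $k-1$ inside the $k$-dimensional space $\langle S\rangle$). Choose an orthonormal basis $e_1,\dots,e_{k-1},e_k=u$ of $\langle S\rangle$ whose first $k-1$ vectors span $H$. Express the vectors of $S=\{v_1,\dots,v_k\}$ in this basis, and let $M$ be the $k\times k$ matrix whose $i$-th column is the coordinate vector of $v_i$. Then $\Vol(S)=|\det M|$, and by construction the $(k,i)$-entry of $M$ equals $\langle v_i,u\rangle$, whose absolute value is exactly $d(v_i,H)$.

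Next, I would expand $\det M$ along its last row, obtaining
\[
\det M \;=\; \sum_{i=1}^{k} (-1)^{k+i}\,\langle v_i,u\rangle\,\det M_i,
\]
where $M_i$ is the $(k-1)\times(k-1)$ minor obtained by removing the last row and the $i$-th column. The key observation is that $M_i$ is precisely the coordinate matrix (in the orthonormal basis $e_1,\dots,e_{k-1}$ of $H$) of the \emph{projections} of the vectors $\{v_j:j\neq i\}$ onto $H$. Since orthogonal projection is a contraction, the volume of the projected parallelepiped is at most the volume of the original one, giving $|\det M_i|\le \Vol(S-v_i)$. Combining with the triangle inequality:
\[
\Vol(S)=|\det M|\le \sum_{i=1}^k |\langle v_i,u\rangle|\cdot|\det M_i|\le \sum_{i=1}^k d(v_i,H)\,\Vol(S-v_i).
\]

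The only step that needs a bit of care is the projection bound $|\det M_i|\le\Vol(S-v_i)$. The cleanest justification is via the Gram-matrix identity $\Vol(S-v_i)^2=\det(G)$ for the Gram matrix $G$ of $\{v_j:j\neq i\}$, together with the fact that projecting onto $H$ replaces each Gram entry $\langle v_j,v_\ell\rangle$ by $\langle P v_j,Pv_\ell\rangle$, which corresponds to a Gram matrix $G'=G-ww^\top$ for the vector $w=(\langle v_j,u\rangle)_{j\neq i}$; since $G'\preceq G$ as PSD matrices, $\det G'\le \det G$. That is the main (though routine) obstacle; the rest is bookkeeping with the Laplace expansion.
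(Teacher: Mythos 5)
Your proof is correct. Note that the paper does not actually prove this lemma---it is imported verbatim as Lemma 2 of Deshpande--Vempala and used as a black box in the proof of Claim~\ref{claim:pinu}---so there is no in-paper argument to compare against; your Laplace-expansion argument (expand $\det M$ along the row corresponding to the direction $u\perp H$, identify each minor with the projected parallelepiped, and bound it by $\Vol(S-v_i)$ via the Gram-matrix contraction $G'=G-ww^\top\preceq G$) is a clean, self-contained proof that matches the standard argument for this volume recursion. All the steps check out, including the identification $d(v_i,H)=|\langle v_i,u\rangle|$ (valid because $v_i\in\langle S\rangle$ and $u$ spans $H^\perp$ inside $\langle S\rangle$) and the reduction to the trivial case $\dim\langle S\rangle=k-1$.
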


\end{document}